\title{ EFFECTIVE LEARNING OF PROBABILISTIC MODELS FOR CLINICAL PREDICTIONS FROM LONGITUDINAL DATA }
  \author{Shuo Yang}
  \author{Shuo Yang}
\newtheorem{theorem}{Theorem}[section]
\newenvironment{definition}[1][Definition]{\begin{trivlist}
		\item[\hskip \labelsep {\bfseries #1}]}{\end{trivlist}}
\algnewcommand\algorithmicswitch{\textbf{switch}}
\algnewcommand\algorithmiccase{\textbf{case}}
\algnewcommand\algorithmicassert{\texttt{assert}}
\titleformat{\chapter}[display]
{\normalfont\bfseries\itshape}
{\chaptertitlename\ \thechapter}{12pt}{}
\begin{document}

\maketitle
\doublespacing

\setcounter{secnumdepth}{3}
\setcounter{tocdepth}{3}

\frontmatter 
\pagenumbering{roman}
\setcounter{page}{2} 
\begin{AcceptancePage}
	{Doctoral Committee}\\
	
	\hfill\begin{tabular}{@{}p{.6\linewidth}@{}}		
		\line(1,0){280}
		\begin{flushright}
		\vspace{-3mm}			
		{Chairperson: Sriraam Natarajan, PhD }\\
		\vspace{10mm}
		\line(1,0){280}
		\vspace{-3mm}
		{Committee Member: David Leake, PhD}\\
		\vspace{10mm}
		\line(1,0){280}
		\vspace{-3mm}
		{Committee Member: Kay Connelly, PhD}\\
		\vspace{10mm}
		\line(1,0){280}
		\vspace{-3mm}
		{Committee Member: Kristian Kersting, PhD}\\
		\vspace{10mm}
		\line(1,0){280}
		\vspace{-3mm}
		{Committee Member: Predrag Radivojac, PhD}
		\end{flushright}
	\end{tabular}\\

{Date of Defense: 7/5/2017}	
\end{AcceptancePage}

\begin{dedication} 

{I would like to dedicate this thesis to my loving parents, my husband and my beloved dog Kraken for their unconditional love.}

\end{dedication}




\begin{acknowledgements}      

Firstly, I would like to express my sincere gratitude to my advisor Prof. Sriraam Natarajan who has been a tremendous mentor for me. I would like to thank him for recognizing my potential and enlightening me on machine learning, which opened a whole new world to me. His guidance and continuous support helped me through out my Ph.D study. I could not have imagined having any of my current accomplishments without his patience, motivation, and immense knowledge. His advice on both research as well as on my career have been priceless. He is the most supportive advisor and one of the smartest people I know. I hope that I could be as enthusiastic, and energetic as him.\\

I would also like to thank my committee members, Prof. David Leake, Prof. Kay Connelly, Prof. Kristian Kersting and Prof. Predrag Radivojac for serving as my committee members and their inputs on my work. Prof. Kersting, as a co-author of all my Ph.D. publications, has provided me innumerous valuable suggestions and insightful guidance.  I have learned a lot about other machine learning techniques and research directions through our discussions on my defense. I am thankful for their brilliant comments and suggestions. \\

I thank all my collaborators, Dr. Tushar Khot, Dr. Mohammed Korayem, Dr. Khalifeh AlJadda, Dr. Gautam Kunapuli, Dr. Shaun Grannis and Dr. Jefferey Carr for their constructive inputs. I would not have presented this thesis without their contributions. I also thank Phillip Odom, Mayukh Das and all my lab-mates for the stimulating discussions through out my Ph.D career. \\

A special thanks to my family. Words cannot express how grateful I am to my parents and my husband for their constant support and unconditional love. I would also like to thank all of my friends who supported me in my research as well as in my life.

\end{acknowledgements}




\begin{abstracts}        

With the expeditious advancement of information technologies, health-related data presented unprecedented potentials for medical and health discoveries but at the same time significant challenges for machine learning techniques both in terms of size and complexity.  
Those challenges include: the structured data with various storage formats and value types caused by heterogeneous data sources; the uncertainty widely existing in every aspect of medical diagnosis and treatments; the high dimensionality of the feature space; the longitudinal medical records data with irregular intervals between adjacent observations; the richness of relations existing among objects with similar genetic factors, location or socio-demographic background. This thesis aims to develop advanced Statistical Relational Learning approaches in order to effectively exploit such health-related data and facilitate the discoveries in medical research. It presents the work on cost-sensitive statistical relational learning for mining structured imbalanced data, the first continuous-time probabilistic logic model for predicting sequential events from longitudinal structured data as well as hybrid probabilistic relational models for learning from heterogeneous structured data. It also demonstrates the outstanding performance of these proposed models as well as other state of the art machine learning models when applied to medical research problems and other real-world large-scale systems, reveals the great potential of statistical relational learning for exploring the structured health-related data to facilitate medical research. 

\vspace{10mm}

\hfill\begin{tabular}{@{}p{.6\linewidth}@{}}		
	\line(1,0){280}
	\begin{flushright}
		\vspace{-3mm}			
		{Chairperson: Sriraam Natarajan, PhD }\\
		\vspace{10mm}
		\line(1,0){280}
		\vspace{-3mm}
		{Committee Member: David Leake, PhD}\\
	\end{flushright}
\end{tabular}

\hfill\begin{tabular}{@{}p{.6\linewidth}@{}}
	\vspace{10mm}
	  \line(1,0){280}
	   \begin{flushright}
		\vspace{-3mm}
		{Committee Member: Kay Connelly, PhD}\\
		\vspace{10mm}
		\line(1,0){280}
		\vspace{-3mm}
		{Committee Member: Kristian Kersting, PhD}\\
		\vspace{10mm}
		\line(1,0){280}
		\vspace{-3mm}
		{Committee Member: Predrag Radivojac, PhD}
	\end{flushright}
\end{tabular}\\

\end{abstracts}



\tableofcontents

\mainmatter 
\chapter{Introduction}
\ifpdf
    \graphicspath{{Introduction/IntroductionFigs/PNG/}{Introduction/IntroductionFigs/PDF/}{Introduction/IntroductionFigs/}}
\else
    \graphicspath{{Introduction/IntroductionFigs/EPS/}{Introduction/IntroductionFigs/}}
\fi

With its expeditious advancement, information technology provides a revolutionary way to collect, exchange and store enormous amount of health-related information. Such information includes: the database in modern hospital systems, usually known as Electronic Health Records (EHR), which store the patients' diagnosis, medication, laboratory test results, medical image data, etc.; information on various health behaviors tracked and stored by wearable devices, ubiquitous sensors and mobile applications, such as the smoking status, alcoholism history, exercise level, sleeping conditions, etc.; information collected by census or various surveys regarding sociodemographic factors of the target cohort; and information on people's mental health inferred from their social media activities or social networks such as Twitter, Facebook, etc. These health-related data come from heterogeneous sources, describe assorted aspects of the individual's health conditions.
Such data is rich in structure and information which has great research potentials for revealing unknown medical knowledge about genomic epidemiology, disease developments and correlations, drug discoveries, medical diagnosis, mental illness prevention, health behavior adaption, etc. 

In real-world problems, the number of features relating to a certain health condition could grow exponentially with the development of new information techniques for collecting and measuring data. To reveal the causal influence between various factors and a certain disease or to discover the correlations among diseases from data at such a tremendous scale, requires the assistance of advanced information technology such as data mining, machine learning, text mining, etc. Machine learning technology not only provides a way for learning qualitative relationships among features and patients, but also the quantitative parameters regarding the strength of such correlations. 
Such data-driven discoveries can provide better profiling on the target population with potential risks of certain diseases and hence facilitate the process of recruiting patients for case-control clinical studies or drug trials. 

The genomic research is one of those areas that first started adapting machine learning methods to facilitate domain discoveries such as exploring the genotype-phenotype relationships~\cite{Szymczak09,Menden12}. 
EHR data mining is another area where the machine learning technique has been successfully assisting the health discoveries. Oztekin et al.~\citeyear{OztekinDK09} applied machine learning models to predict the graft survival for heart-lung transplantation patients and got 86$\%$ accuracy rate with the assistance of various feature selection approaches. 
Besides the research on applying machine learning techniques to EHR data alone, there are also some research on integrating the EHR data with the genetic data in the area of pharmacotherapy, see for instance, \cite{Wilke11}.

Despite the fact that health related big data provides promising opportunities for the potential medical and health discoveries, the richness in structure and uncertainty also brings enormous challenges for the machine learning techniques to make full use of such health related data. There are still many open problems for machine learning technology to tackle when applied to the real-world problems.  

\section{Challenges in Mining Health Data }

The first challenge is that each data source that describes a specific aspect of health conditions is collected and stored based on its characteristics, hence has its own standards or protocols. Even within the EHR data, there are multiple controlled vocabularies guiding the processes of capturing and representing the data from different facets of care.  For example, as introduced in ~\cite{Jensen12}, imaging files follow Digital Imaging and Communication in Medicine (DICOM); laboratory data follows Logical Observation Identifiers Names and Codes (LOINC); prescription data follows RxNorm111; Clinical narratives follow International Classification of Disease-9 (ICD‑9) or ICD‑10 or Systematized Nomenclature of Medicine--Clinical Terms (SNOMED CT).
The key challenge is standardizing such multi-format data into an integrated and consistent form that can be exploited using machine learning models. Moreover, in order to better modeling certain health attributes, assumptions made on them by machine learning models, for example distribution assumptions, should be as close to the original data as possible, e.g. continuous variables as Gaussian distributions, categorical variables as multinomial distributions, count variables as Poisson distributions, etc. There is a necessity to develop an integrated machine learning model which can perform learning and inference in such hybrid domains.      

The second challenge is the noise and uncertainty which is a prominent phenomenon in medical domains, such as uncertainty in diagnosis (e.g. patients with the same symptoms could have different pathological causes), uncertainty in heredity (e.g. not all the individuals who have diabetic parent(s) have diabetes), uncertainty in treatment (e.g. patients with similar physiological conditions may have different responses to certain drugs), etc. A popular method of modeling uncertainty is using probabilistic models. Most machine learning methods are either probabilistic models by definition, such as naive Bayes, Bayesian networks, logistic regression and artificial neural networks, or can be extended into probabilistic ones, such as decision trees and support vector machines. By employing the probability theory, machine learning models can capture either the conditional probability of the target variable given the correlated factors (discriminative models) or the probability distribution in the joint space of target and related variables (generative models).

The third challenge is the high dimensionality of the feature space. There are more and more data being collected and believed to be closely related or potentially related to certain health conditions. The dimension of the feature space grows at the exponential rate. In order to train an accurate model (especially for a probabilistic model), machine learning algorithms need a significant amount of observed instances. However, two most common problems with medical research are sparseness (not all the possible configurations of the features have instances in the target cohort or the records accessible by the system) and class imbalance (the number of diagnosed positives is usually much smaller than that of the negatives). Some probabilistic graphical models such as Bayesian networks, Markov models or dependency networks, allow the domain knowledge to be easily incorporated into the model through certain forms of constraints on the structure or parameter space. In probabilistic models, such as Bayesian Networks, domain knowledge that was used to reduce the dimension is the Independence of Casual Influence (ICI) first defined in \cite{heckerman94}, which further reduces the dimension from exponential in sizes of parent sets to linear with the number of parent sets and hence helps with model learning. Another example of applicable domain knowledge is qualitative constraints. For example, the increase of cholesterol levels has a positive effect on the increase of cardiovascular disease risks (monotonic influence). 
Such qualitative constraints on the causal influences can also facilitate more accurate predictions for learning in  sparse data sets as discovered in \cite{Altendorf12}. Yang et. al~\citeyear{YangN13} proposed a method to combine those two types of domain knowledge and proved the improved performance of BN learning for high-dimensional sparse data. There are also numerous research on exploiting domain knowledge in other machine learning models~\cite{Towell94, Fung02, PazzaniK92}.

The forth challenge is the time-line analysis. As mentioned earlier, the features related to an individual's physical and mental status usually change over time, which means that instead of a single value, each variable actually has a sequence of values. These trajectory data generated by such dynamic processes require models which can represent the transition distributions over time. However, the occurrence frequencies of variables are different from each other. For example, the dietary, the exercise level and certain treatments could change every day while the Body Mass Index, blood pressure or cholesterol level would take months or even years to show any significant differences. However, most of the dynamic probabilistic models such as hidden Markov models or dynamic Bayesian networks, model such dynamic processes using a constant interval between adjacent time slices. So, in order to prevent missing any event along the trajectories, they need to model the system at the shortest possible interval. This leads to the increased computational cost when performing inference over time. The even bigger challenge is that many of the features do not have any natural transition rate. This makes it hard to choose the sampling rate in advance. Hence, continuous time probabilistic models have been developed~\cite{ElHayFKK06, NodelmanUAI02, Gunawardana11}, which model the time directly as a continuous variable instead of dissect the trajectory with a certain granularity, hence they are able to capture any time point whenever an interested event happens.

The fifth challenge is the richness of relations. Integrated health-related data of the whole population composes a vast collection of longitudinal multiple granularity profiles. Such data is rich in relations. From the perspective of an individual, the features regarding to one aspect could influence those of other aspects, e.g. the effect of hypertension on the risk of heart attack. From the dynamic system's point of view, the features from previous time points could affect some features in the following time points, e.g. the effect of current treatments on the disease future developments. From the cohort-wide standpoint, some features of one individual could change the features of other people who share similar sociodemographic or genetic factors with him/her, e.g. the heredity of diabetes in some families, or the impact on individual's mental health from other people who are connected with him/her through certain social media groups, in which case the i.i.d. assumption made in traditional machine learning algorithms is violated. In order to learn from the structured data, standard machine learning approaches need to integrate data into a flat table with fixed number of features for each example. But, this requires the employment of an extensive feature engineering process, if not an impossible one. 
For example, consider one's smoking habit which is often affected by his friends' smoking habits. Since the number of friends one can have varies from individual to individual, traditional machine learning methods need to either aggregate the smoking status of all his/her friends into one feature or learn numerous different models separately by training subsets of people who have the same amount of friends. It would be hard for the traditional models to learn more general and meaningful parameters since the data is more sparsely represented in the subsets of the training examples. This is just a simple example to illustrate the case. One may argue that it seems not very difficult for the feature engineering techniques to tackle. Then, consider the case where not only the smoking status needs to be examined but also the attributes of their sociodemographic background or other health behaviors. The difficulty faced by the traditional feature engineering approaches increases as the number of correlated attributes from related objects increases.    
Statistical Relational Learning~\cite{srlbook} extends the propositional models to first-order logic domain. It learns the general rules using inductive logic programming and performs inference either in the grounded graphical model or at the lifted level.

Besides those above, privacy issues also bring challenges. Because of the privacy legislation, many countries require the informed consent for accessing the patients' health records for research. This increases the time and cost to access the data and may also bring in biased data resources which are not representative enough for the larger population. Some technical solutions such as De-identification and Re-identification could circumvent the consent regimes. But, it may lead to the impossibility of certain population wide research~\cite{Jensen12}. 

\section{Thesis Statement}

My thesis investigates the claim:

\begin{itemize}
\item[] \textit{Developing advanced statistical relational learning approaches to allow the human input based learning, temporal modeling and hybrid model development can provide a more effective way to exploit the health-related data, hence facilitate discoveries in medical research.}
\end{itemize}

For class-imbalanced data, an efficient way for approaching the cost-sensitive learning with probabilistic logic models was proposed. Its prediction performance was evaluated in various standard relational domains as well as in a real medical study and a big-scale hybrid recommendation system. For sequential events prediction, the first relational continuous-time probabilistic model which combines first-order logic with standard continuous-time Bayesian networks was presented and an efficient relational learning approach was proposed for it. The performance of standard dynamic probabilistic models was investigated when applied to a real-world cardiovascular disease study. For learning from heterogeneous data, the theoretical functions for a hybrid statistical relational model was derived to handle variables with mixed value types in clinical data which includes continuous, multinomial and count variables. The performance of machine learning models with different probability distribution assumptions was investigated when applied to real electronic health records data for clinical surgery predictions.   

\section{Thesis Outline}

The rest of this thesis is organized as follows: 

Chapter~\ref{chap1} provides the necessary background for the proposed models described in this thesis. It includes the introduction on the theories of Bayesian networks, which gives the necessary knowledge for elucidating dynamic Bayesian networks and continuous-time Bayesian networks following it. Since all the proposed models were developed in the context of statistical relational learning, this section also sets up the basics of statistical relational learning and a state-of-the-art learning approach -- Relational Functional Gradient Boosting (RFGB).  

Chapter~\ref{chap2} presents the theoretical derivation and the algorithm of a cost-sensitive statistical relational learning approach which was developed on top of the RFGB framework. Then, the evaluation metrics which were designed specifically for class-imbalanced data sets are explained. Following it, the performance of the proposed approach is evaluated with standard structured data sets under these evaluation metrics. 

Chapter~\ref{chap3} presents the first relational continuous-time probabilistic model for modeling structured sequence data with irregular time intervals, which is called Relational Continuous-Time Bayesian Networks (RCTBNs). First, the Syntax and Semantics of RCTBNs are presented and its homogeneousness is proved. Then, an efficient approach based on RFGB for learning  RCTBNs is proposed and its convergence property is proved. Before concluding, the experimental results in standard CTBN data as well as relational sequence data are presented.

Chapter~\ref{chap4} presents the theoretical foundation of a proposed statistical relational learning framework for hybrid domains. The theoretical formulations as well as learning procedures of the proposed model under six different conditions specified by the value types of dependent variables and predictor variables are presented.  
  
Chapter~\ref{chap5} investigates the performances of various machine learning models when applied to real-world problems. They include applications of the cost-sensitive statistical relation learning approach presented in Chapter~\ref{chap2} to a large-scale job recommendation system and a rare disease use case; application of dynamic Bayesian networks to a cardiovascular disease study; and prediction of cardiovascular events using various machine learning models with different probability distribution assumptions. 
 
Chapter~\ref{chap6} proposed several possible ways to extend the presented models and explore their potentials on mining health-related data to facilitate an effective clinical decision support system. 



\chapter{Background}
\label{chap1}
\ifpdf
    \graphicspath{{Chapter1/Chapter1Figs/PNG/}{Chapter1/Chapter1Figs/PDF/}{Chapter1/Chapter1Figs/}}
\else
    \graphicspath{{Chapter1/Chapter1Figs/EPS/}{Chapter1/Chapter1Figs/}}
\fi

In this chapter, I will introduce the basic knowledge on dynamic probabilistic models which will be extended to first-order logic forms  and investigated for their application values to real medical studies and EHR data in Chapter~\ref{chap2} and Chapter~\ref{chap5}. I will also present the background on Statistical Relational Learning (SRL) and an efficient relational learning approach--Relational Functional Gradient Boosting (RFGB) which will be adapted to learn the proposed relational models for various practical purposes in this thesis.

\section{Bayesian Networks}
First, I will illustrate Bayesian networks with a simple medical diagnosis case which will be used as a running example in this thesis and introduce the basic notations that will be used through out this thesis.
 
A Bayesian network~\cite{pearlbook} could be represented by the following three indispensable components: i). Each node corresponds to a random variable, which may be discrete or continuous; ii). A set of directed links or arrows connect pairs of nodes, which indicate the direct dependences between the linked variables, the directions of the arcs signify the influence directions; iii). Each node $X_i$ has a conditional probability distribution $P(X_i|Parents(X_i))$ that quantifies the strengths of these influences.

Consider a simple medical diagnosis case where the target variable is the risk of having cardiovascular disease. The influential features include standard medical measurements such as blood pressure, cholesterol level, body mess index and blood glucose, which can be influenced by sociodemographic factors such as age and gender, health behaviors such as dietary and exercise level as well as the family medical history. This example can be represented by a Bayesian network as shown in Figure~\ref{BN}. 
\begin{figure}[htbp]
	\centering
	\includegraphics[scale=0.35]{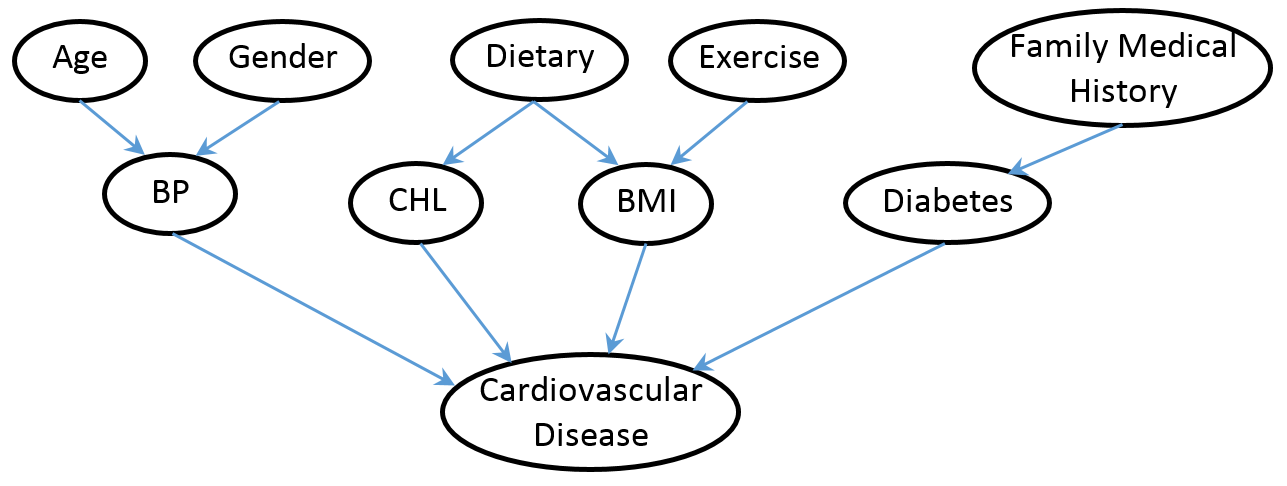}
	\caption{Example BN of Medical Diagnosis Domain}
	\label{BN}
\end{figure}

Now formally introduce some basic notations that are commonly used in BNs. In a Bayesian network with n discrete valued nodes $\textbf{X}=\{X_1,X_2,...,X_n\}$, the structure is denoted as $ \mathcal G$, the parameters by $\theta_{ijk}$ ($i\in\{1,2,...,n\},  j\in \{1,2,...,v_i\}, k\in \{1,2,...,r_i\}$) which means the conditional probability of $X_i$ to take its $k^{th}$ value given the $j^{th}$ configuration of its parents (i.e. $P(X_i^k|pa_i^j)$). $r_i$ denotes the number of states of the discrete variable $X_i$; $\textbf{Pa}(X_i)$ represents the parent set of node $X_i$; the number of configurations of $\textbf{Pa}(X_i)$ is $v_i=\prod_{X_t\in \textbf{Pa}(X_i)}r_t$. 

There is an essential conditional independence implied by Bayesian networks which is the foundation of almost all the mathematical properties, learning and inference methods for Bayesian networks.

\begin{theorem}
	\label{cibn}
	Each node is conditionally independent of its non-descendants given its parents.
\end{theorem}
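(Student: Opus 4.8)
\emph{Proof proposal.} The plan is to deduce the claim from the product decomposition of the joint distribution that the three components of the network define, namely
$P(\textbf{X}) = \prod_{i=1}^{n} P\big(X_i \mid \textbf{Pa}(X_i)\big)$.
If one prefers not to take this decomposition as primitive, it follows by applying the chain rule along any ordering consistent with $\mathcal{G}$ and then, in each conditional, dropping the variables that are not parents of the variable in question. Fix a node $X_i$, let $\mathcal{D}_i$ be its set of proper descendants in $\mathcal{G}$ and $\mathcal{N}_i = \textbf{X} \setminus (\{X_i\} \cup \mathcal{D}_i)$ the non-descendants other than $X_i$. Since $\textbf{Pa}(X_i) \subseteq \mathcal{N}_i$, the statement ``$X_i$ is conditionally independent of $\mathcal{N}_i$ given $\textbf{Pa}(X_i)$'' is equivalent to the single identity $P\big(X_i \mid \mathcal{N}_i\big) = P\big(X_i \mid \textbf{Pa}(X_i)\big)$, and that is what I would establish.

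First I would choose a topological ordering of $\mathcal{G}$ in which every variable of $\mathcal{N}_i$ precedes $X_i$ and every variable of $\mathcal{D}_i$ follows it. Such an ordering exists: $\mathcal{G}$ has no edge leaving $\mathcal{D}_i$ and entering $\{X_i\} \cup \mathcal{N}_i$ (a child of a descendant of $X_i$ is again a descendant of $X_i$, and an edge from $\mathcal{D}_i$ back into $X_i$ would close a directed cycle), and no edge from $X_i$ into $\mathcal{N}_i$ (a child of $X_i$ is a descendant); hence the three blocks $\mathcal{N}_i$, $\{X_i\}$, $\mathcal{D}_i$ can be concatenated in that order, each block internally sorted by acyclicity. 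After relabelling, assume the order is $X_1, \dots, X_n$ with $X_i = X_m$, $\mathcal{N}_i = \{X_1, \dots, X_{m-1}\}$, and $\mathcal{D}_i = \{X_{m+1}, \dots, X_n\}$, so that $\textbf{Pa}(X_j) \subseteq \{X_1, \dots, X_{j-1}\}$ for every $j$.

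Then I would compute $P(X_m \mid X_1, \dots, X_{m-1}) = P(X_1, \dots, X_m) / P(X_1, \dots, X_{m-1})$ by substituting the product decomposition into the two marginals. In each, the factors indexed by $j \le m-1$ (and, in the numerator, also $j = m$) do not involve the descendant variables $x_{m+1}, \dots, x_n$ being summed out, so they pull outside the sum; what remains inside, $\sum_{x_{m+1}, \dots, x_n} \prod_{j \ge m+1} P\big(X_j \mid \textbf{Pa}(X_j)\big)$ in the numerator and the analogous expression starting from $j = m$ in the denominator, collapses to $1$ by eliminating the innermost variable repeatedly and using $\sum_{x_j} P\big(X_j \mid \textbf{Pa}(X_j)\big) = 1$ — valid precisely because each $\textbf{Pa}(X_j)$ appears earlier in the order and so is not itself a summation index at the step where $x_j$ is removed. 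Cancelling the common factor $\prod_{j \le m-1} P\big(X_j \mid \textbf{Pa}(X_j)\big)$ leaves $P(X_m \mid X_1, \dots, X_{m-1}) = P\big(X_m \mid \textbf{Pa}(X_m)\big)$, i.e.\ $P\big(X_i \mid \mathcal{N}_i\big) = P\big(X_i \mid \textbf{Pa}(X_i)\big)$, as required.

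The two marginalizations are routine bookkeeping; the one place that needs care is the structural step — confirming that the descendants of $X_i$ can always be pushed to the tail of a topological order — since the entire cancellation depends on the conditioning set $\mathcal{N}_i$ containing no descendant of $X_i$. For continuous-valued nodes the argument is unchanged, with the sums replaced by integrals against the conditional densities. I would also note that this yields only the local (pairwise/parental) Markov property asserted here; the stronger d-separation statement, should it be needed later, would require a separate argument.
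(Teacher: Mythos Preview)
The paper does not actually prove this theorem; it is stated without proof as a standard property of Bayesian networks and then invoked to justify the factorization in Equation~\eqref{chain}. So there is no paper-side argument to compare against.

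Your argument is the standard and correct derivation of the local Markov property from the product form of the joint. The topological-ordering step is handled carefully, and the marginalization/cancellation is right. One point worth flagging, given the paper's presentation: the paper runs the implication in the \emph{opposite} direction---it states the conditional-independence theorem first and then uses it to obtain the factorization $P(\textbf{X}) = \prod_i P(X_i \mid \textbf{Pa}(X_i))$. Your proof takes that factorization as the starting point. This is fine if, as is common, one \emph{defines} the BN joint as the product of the specified CPDs; but your parenthetical remark that the factorization ``follows by applying the chain rule \dots\ and then, in each conditional, dropping the variables that are not parents'' is effectively assuming the very independence you are proving. If you want to avoid any appearance of circularity relative to the paper's order of exposition, simply state explicitly that you are taking the product of the CPDs as the definition of the joint (the paper's component~(iii) supplies exactly those CPDs), and drop the parenthetical alternative.
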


Because of \textbf{Theorem}~\ref{cibn}, the general chain rule can be further simplified in BNs as below:
\begin{equation}\label{chain}
P(x_1, x_2, ..., x_n)=\prod_{i=1}^n P(x_i | Pa(x_i))
\end{equation}

That is why BNs can concisely represent essentially any full joint probability distribution based on the combination of its structure (topology) and parameters (conditional distributions). And such factored representation can be further simplified by employing the \textit{independence of causal influence} (ICI), which further decreases the dimension of the feature space and hence loosens the requirement on the number of training samples and improves the prediction accuracy of the learned model~\cite{heckerman94}.

Bayesian networks have a fundamental limitation: \textit{the graph cannot have directed cycles}. 
Intuitively, if variables $x_1$ and $x_2$ compose a cycle, any slight evidence that supports $x_1$ would be amplified through $x_2$ and then propagated back to $x_1$, which forges a stronger confirmation possibly without apparent factual justification~\cite{pearlbook}. From the perspective of probability theory, 
If the graph has directed cycles, the joint probability properties would be violated. The proof is straightforward. If assume there is a cycle in the graph where $A \rightarrow B, B \rightarrow A$, then according to the chain rule, $P(a,b)= P(b|a)P(a|b)$. According to Bayesian theory, $P(a,b)= P(b|a)P(a)$, so we have $P(a)=P(a|b)$, which can be satisfied only when the marginal probability $P(a)$ equals to the conditional probability $P(a|b)$, i.e. $a$ is independent with $b$, which conflicts with the original assumed graph where $A$ and $B$ are linked by arcs. This acyclic constraint significantly increases the computational complexity of the structure learning in BNs. 


Given the structure of a Bayesian network, the locally optimized conditional probability distributions that best fit the training data are learned by maximizing the likelihood function of the training data given the proposed model, which equals to solve the following optimization problem:

\begin{equation}\label{bnparal}
\hat{ \theta }= \textit{argmax}_{\theta} \prod_{i=1}^n \prod_{j=1} ^{v_i} \prod_{k=1}^{r_i} \theta_{ijk}^{N_{ijk}}
\end{equation}

Where $N_{ijk}$ represents the number of samples that satisfy the corresponding configuration of ($x_i^k$, $pa_i^j$).

The structure learning of BNs is much more complicated. Theoretically, there could be multiple Bayesian networks that represent the identical full joint probability distributions. Since we do not know the topology ordering of the nodes, the greedy hill-climbing search algorithms could return a BN that fits the training data as good as the optimal one but much more complicated in structure~\cite{pearlbook}. A term for penalizing the model complexity is often included in the score function which is used for evaluating different candidate structures. There are various score functions, such as Bayesian Dirichlet (BDe) scores, Bayesian Information Criterion (BIC) and Mutual Information Test (MIT) which will be discussed in more details in section~\ref{DBNmain}.




The structure learning employs certain search algorithms (usually greedy search) to find the structure with the highest score. It may appear that the calculation of the score function is just a counting problem which simply multiplies the appropriate entries in the conditional probability tables with the corresponding frequency counts derived from the database. However, Chickering et al.~\cite{Chickering94} showed that even assume all the variables can only have no more than 2 parents, the problem of learning the optimal Bayesian network structure is still NP-hard. The reason is that the optimal parent set for each node cannot be determined individually, the choice of parent set for one node affects the choices for others due to the necessity of avoiding creating cycles. Due to this reason, there are many researches focusing on simplifying the structure learning of BNs either by applying constraints from domain knowledge~\cite{deCampos09,CamposJ11} or modifying the basic semantics to allow cycles~\cite{HCMRK2000,Hulten03,TulupyevN05}.

To perform inference over certain queries (denoted as $\textbf{X}_q$) in BNs given some evidence (denoted as $\textbf{X}_e$), the most intuitive algorithm is to use Equation~\ref{chain} to calculate the joint probability with $\textbf{X}_e$ instantiated by the evidence values and then sum over all possible values of the non-evidence, non-query variables. It is called exact inference. \textit{Variable elimination} algorithm can reduce the computation burden to some extent by saving the intermediate results in the factor matrices which avoid re-computation. However, to enumerate all possible configurations of the non-evidence variables and perform the sum-product calculations over them is still unfavorable in large BNs. 

Hence, approximate inference algorithms are preferred when inference in large networks, and among them \textit{Sampling} and \textit{Belief Propagation} are two families of the most popular algorithms. The key idea of \textit{rejection sampling} is to generate samples based on the BN model (both the structure and conditional probability distributions) and reject the ones conflicting with the evidence, then calculate the proportion of the ones consistent with the query in the total samples. This procedure is further improved by \textit{Gibbs Sampling}. Instead of generating each sample by probabilistically assigning values to all non-evidence variables, it generates each sample by making a random change (on one non-evidence variable at a time) to the previously generated sample and keeping the values of other variables unchanged~\cite{Russell2003}. Belief propagation is a message passing algorithm~\cite{pearlbook}, which calculates the belief of the query variables by iteratively computing the beliefs of the variables based on the evidence they accessed through their parents and children nodes and sending messages to parents and children based on the updated beliefs. 

\section{Dynamic Bayesian Networks}

If we consider the medical diagnosis example with an additional dimension -- \textit{time}, we would find that almost all the features related to a person change over time, such as blood pressure, cholesterol level, body mass index, dietary, and etc. A model that can represent the states distributions over time will make best use of the data generated by such dynamic processes. One successful example would be the Dynamic Bayesian network (DBN)~\cite{Dean89}. DBNs model the dynamic process by dividing time into different time slots at a constant rate, and representing the probabilistic transitions of the states between different time slots with a Bayesian network fragment which allows both intra-time-slice arcs (which indicate the conditional dependences among the values of variables at the same instant) and inter-time-slice arcs (which can jump through one or more time slices indicating the conditional dependences among the values of variables over time), as Figure~\ref{DBN} shows.
\begin{figure}[t]
	\centering
	\includegraphics[scale=0.4]{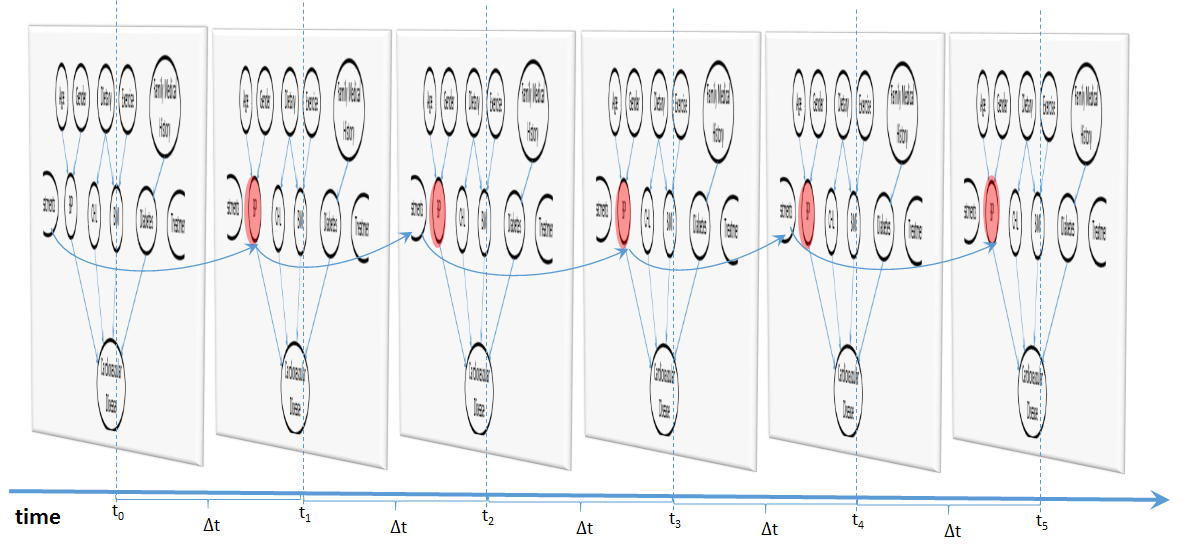}
	\caption{Sample DBNs}
	\label{DBN}
\end{figure}
DBNs can bypass the NP-hard structure learning problem by only allowing for edges between time slices. But the significance of DBNs lies more in its ability to deal with temporal processes.

\section{Continuous Time Bayesian Networks}

As mentioned earlier, DBNs can represent the state at different time points by discretizing time into time slots at specific time points and hence cannot answer queries outside these discretizations. In the medical diagnosis example, however, most of the variables do not have any natural identical changing rate. For example, dietary habits, weight, or smoking habits could stay the same for days, months, years or even decades. But, the effect of certain treatments could arise in minutes, hours, days or even months. They also vary from patient to patient. All events occur at their distinct paces as Figure~\ref{ctbn_samp} shows, where the red node indicates the variable which changes its state at the corresponding time point, for example, $BMI$ changes its state at time $t_1$, $Diabetes$ status transits to another state at time $t_2$, $Cardiovascular Disease$ changes its value at time $t_3$ and $t_1-t_0 \neq t_2-t_1 \neq t_3-t_2$.
\begin{figure}[t]
	\centering
	\includegraphics[scale=0.4]{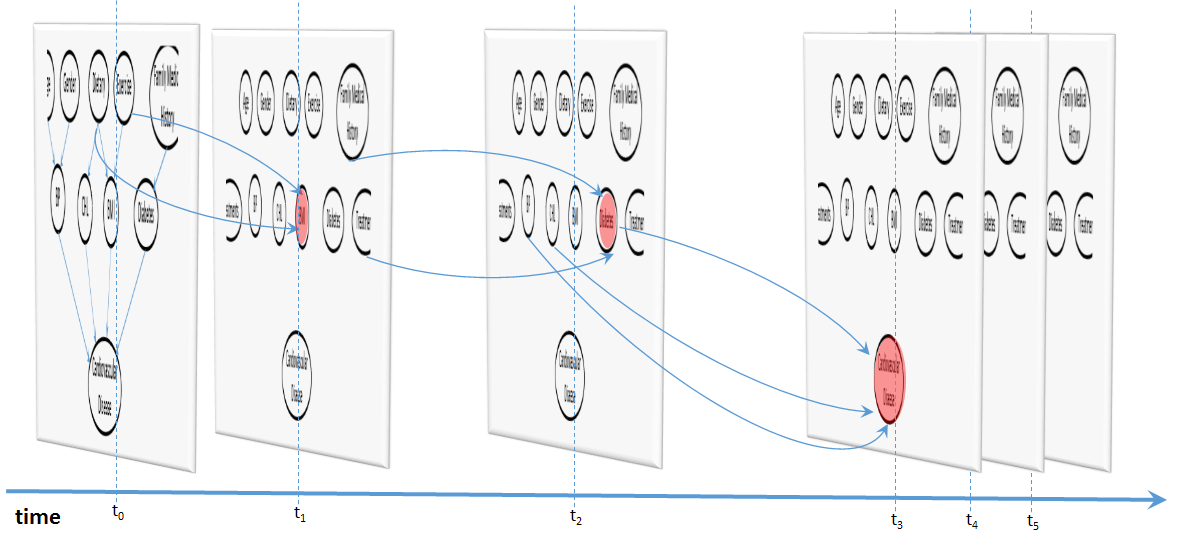}
	\caption{Sample CTBNs}
	\label{ctbn_samp}
\end{figure}

A straightforward way for DBNs to capture the transition probabilities over all the variables is to model the system at the shortest possible period (i.e. highest changing rate). However, this could lead to increased computational cost when performing inference over time especially when there is no evidence observed. Continuous Time Markov Process (CTMP)~\cite{ElHayFKK06} avoids the requirement for choosing the sampling rate by modeling the time directly. However, since it models the transition distributions over the joint states of all variables, the size of the CTMP intensity matrix increases exponentially in the number of variables. With the forte of compact representation carried over from BNs, Continuous Time Bayesian Networks (CTBNs)~\cite{NodelmanUAI02} model the continuous time process as the CTMP family does, but with factored representation which is a more efficient representation for joint probability distributions.

We still use $\textbf{X}=\{X_1,X_2,...,X_n\}$ to denote the collection of random variables, but now each $X_i$ is not a discrete-valued variable but a process variable indexed by time $t\in [0,\infty)$. So, if we want to query about the value of $X_i$ at time t, we use $x_i(t)$ to denote the value. Accordingly, the instantiation to parents of $X_i$ at time t is represented as $Pa_{X_i}(t)$. The instantiation of a particular set of values for $\textbf{X}(t)$ over all t is called a \textit{trajectory}. The key difference to BNs is the use of \textit{conditional intensity matrix} (CIM) instead of the \textit{conditional probability distribution}. Recall that BNs use parameters $P(X_i|pa_i^j)$ which indicates the conditional  distributions of $X_i$ given its parents taking the $j^{th}$ configuration. Since CTBNs model the distribution over the variable's trajectories, a CIM induces a distribution over the dynamics of $X_i(t)$ given the values of $Pa_{X_i}(t)$ (denoted by $pa_i^j$ for simplicity sake), which is defined as:

$Q_{X_i|pa_i^j}=
\begin{bmatrix}
-q_{x_i^1|pa_i^j} & q_{x_i^1x_i^2|pa_i^j} & \cdots & q_{x_i^1x_i^{r_i}|pa_i^j} \\
q_{x_i^2x_i^1|pa_i^j} & -q_{x_i^2|pa_i^j} & \cdots & q_{x_i^2x_i^{r_i}|pa_i^j} \\
\vdots & \vdots & \ddots & \vdots\\
q_{x_i^{r_i}x_i^1|pa_i^j}& q_{x_i^{r_i}x_i^2|pa_i^j} & \cdots & -q_{x_i^{r_i}|pa_i^j} 
\end{bmatrix}$\\

Where $q_{x_i^k|pa_i^j}=\sum_{k'\neq k}q_{x_i^kx_i^{k'}|pa_i^j}$ and $q_{x_i^k|pa_i^j} \geqslant 0, \quad q_{x_i^kx_i^{k'}|pa_i^j} \geqslant 0$.
It could be factored into two components: $\textbf{q}_{X_i} \textendash$ an exponential distribution over when the next transition will occur and $\boldsymbol{\theta}_{X_i}= \frac{\textbf{q}_{X_i^kX_i^{k'}}(k'\neq k)}{\textbf{q}_{X_i^k} } \textendash$ a multinomial distribution over where the state transits, i.e. its next state. For example, $q_{x_i^1|pa_i^j}$ defines that if $X_i$ is at its state 1, it would stay in this state for an amount of time exponentially distributed with parameter $q_{x_i^1|pa_i^j}$, given its parents value being $pa_i^j$, and upon transiting, the probability of $X_i$ transiting from its state 1 to state 2 is $\theta_{x_i^1x_i^2|pa_i^j}$. According to the properties of exponential distributions, the probability density function $f$ and the corresponding cumulative distribution function $F$ for $X_i$ transiting out of $x_i^k$ are given by
\begin{equation}
f(t)= q_{x_i^k|pa_i^j}e^{-q_{x_i^k|pa_i^j}\cdot t}, \quad F(t)= 1-e^{-q_{x_i^k|pa_i^j}\cdot t}, \quad \quad t \geqslant 0 
\end{equation} 

The expected time of transition is $\mathbb{E}_t = \frac{1}{q_{x_i^k|pa_i^j}}$. Upon transitioning, $X_i$ shifts from its $k^{th}$ state to ${k'}^{th}$ state with probability $\frac{q_{x_i^kx_i^{k'}|pa_i^j}}{q_{x_i^k|pa_i^j}}$.

Instead of a single probability distribution for $X_i$  given a specific instantiation of its parents as in BNs, CTBNs have one CIM for every configuration of its parent set. Since the number of configurations of $\textbf{Pa}(X_i)$ equals to $v_i=\prod_{X_t\in \textbf{Pa}(X_i)}r_t$,  there are $v_i$ conditional intensity matrices for $X_i$ in a CTBN model, and each CIM has $r_i\times r_i$ parameters. Because of the constraint $q_{x_i^k|pa_i^j}=\sum_{k'\neq k}q_{x_i^kx_i^{k'}|pa_i^j}$, the total number of independent parameters for a CTBN model over set $\textbf{X}$ is $\prod_{X_i\in \textbf{X}}v_i(r_i\times r_i-r_i)$. For an individual variable $X_i$, its trajectory can be viewed as a conditional Markov process, which is inhomogeneous, because the intensities vary with time. It is worth noting that the CIM is not a direct function of time but rather a function of the current values of its parent variables, which also evolve with time. However, the global process of the variable set $\textbf{X}$ is a homogeneous Markov process. In fact, any CTBN can be converted to a homogeneous Markov process by \textit{amalgamations}~\cite{Nodelmanthesis} over all its CIMs into a joint intensity matrix. Simply put, \textit{amalgamations}  are the summation over the expansions of all the CIMs over the entire variable set. Within the joint intensity matrix, all intensities corresponding to two simultaneous changes are zero because both variables cannot transit at the same instant. 

It is sufficient to construct a CTBN with two components: the initial distribution (denoted as $P_{\textbf{X}}^0$) which is specified with a Bayesian network $\mathcal B$ over \textbf{X} at the start time point and a continuous transition model, specified by: 
\begin{list}{\leftmargin=-0.5em \itemindent=0em}
\item{i.} A directed graph  $\mathcal G$ similar as a BN frame but possibly with cycles.
\item{ii.} A set of CIMs $\textbf{Q}_{\textbf{X}|\textbf{Pa}(\textbf{X})}$, for each variable $X_i\in \textbf{X}$.
\end{list}

\section{Statistical Relational Learning}

The traditional machine learning algorithms have a fundamental assumption about the data they model: the training samples are independent identically distributed (i.i.d.), which is not always the case in the application domains. Actually, in the real world, there are significant amount of data which are not independent among the samples. Still take the medical diagnosis case as an example, if the feature \textit{family medical history} is not given but need to be learned from family members' data  which are also represented as samples in the same form as the patient's itself as Figure~\ref{srl_samp} shows. It is clear that the training samples from the same family are related with each other, which violates the i.i.d. assumption. 
\begin{figure}[t]
	\centering
	\includegraphics[scale=0.4]{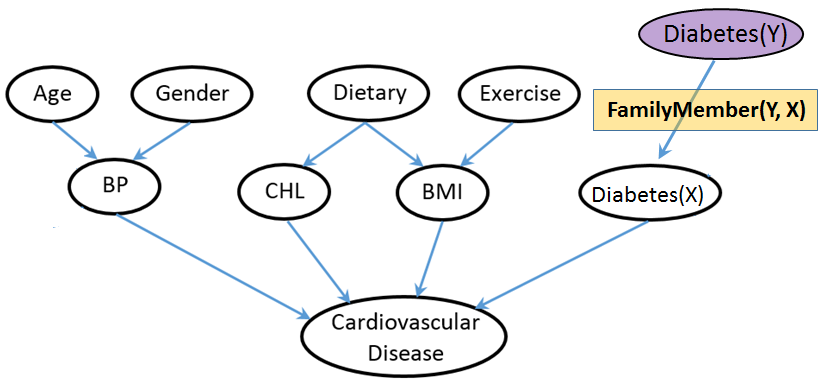}
	\caption{Sample Relational Domain}
	\label{srl_samp}
\end{figure}

Taking the related objects into consideration can potentially improve the prediction performance and reveal new insights. But, how to deal with such structured data? A straightforward strategy would be to introduce the relevant features of the related objects into the feature space of the target object. If there are $n$ attributes with heredity nature and $m$ family members, then there would be $m\times n$ more features added to the feature space of the target sample. In that case, not only the computation complexity but also the representation of the data would be problematic. This is because the propositional algorithms require a regular flat table to represent the data. However, the number of members in one's family varies from individual to individual. For example, one patient could have three ascendants who have diabetes and the other only has one, the number of features after combined with relatives' features would be different between them. There are aggregation-based strategies~\cite{Perlich03} to make the feature number identical to all the samples when combining the feature spaces. 
However, it would inevitably lose useful information or add noise. 

Statistical relational learning~\cite{srlbook} extends propositional models to the first-order logic domain. It learns the general rules using inductive logic programming and performs inference either in the grounded (base-level) graphical model or at the lifted level. SRL addresses the challenge of applying statistical learning and inference approaches to problems which involve rich collections of objects linked together in a complex, stochastic and relational world. 
Statistical relational models include directed models such as Bayesian Logic Programs (BLPs)~\cite{blp}, Relational Bayesian Networks (RBNs)~\cite{rbn} and Logical Bayesian Networks (LBNs)~\cite{lbn}, undirected models such as Markov Logic Networks (MLNs)~\cite{Richardson06}, bi-directed models like Relational Dependency Networks (RDNs)~\cite{Neville07}, and tree-based models such as logical decision trees~\cite{BlockeelR98}, etc. Almost all kinds of graphic models have their corresponding extensions in the relation domain. 
The advantage of SRL models is that they can succinctly represent probabilistic dependencies among the attributes of different related objects, leading to a compact representation of learned models.
Similar to standard graphical models, most of the SRL models have two components: structures which are represented as a set of first order logic rules to capture the dependences among the logic atoms and parameters which are weights or probability distributions to calculate the probability of a world given the satisfiability of the formulae. 

While these models are highly attractive due to their compactness and comprehensibility, the problem of learning them is computationally intensive.
Consequently, structure learning has received increased attention lately, particularly in the case of one type of SRL models called Markov Logic Networks~\cite{ecai08,hypergraph09,structuralMotifs10,bottomupmln07}. These approaches provide solutions that are theoretically interesting. However, applicability to real, large applications is nominal due to restricting assumptions in the models and complex search procedures inside the methods. A more recent algorithm called Relational Functional Gradient Boosting (RFGB)~\cite{Natarajan2012,icdm11}, based on Friedman's functional gradient boosting~\cite{friedman01}, addressed this problem by learning structures and parameters simultaneously. This was achieved by learning a set of relational trees for modeling the distribution of each (first-order) predicate given all the other predicates. The key insight is to view the problem of learning relational probabilistic functions as a sequence of relational regression problems, an approach that has been proved successful in the standard propositional data case. 
In turn, it is natural to expect that problems and solutions well known for propositional classification/regression problems can be studied for and transferred to SRL settings. Interestingly, this has not been fully considered so far. In this thesis, we focus on extending those propositional models/learning approaches to relational settings. Specifically, we consider the cost-sensitive learning approach for mining class-imbalanced data, the continuous-time model for learning from longitudinal data with irregular time intervals and the hybrid model for modeling heterogeneous data. 

\section{Relational Functional Gradient Boosting}

Standard gradient ascent approach starts with initial parameters $\theta_0$ and iteratively adds the gradient ($\Delta_i$) of an objective function w.r.t. $\theta_i$.  
Friedman \cite{friedman01} proposed an alternate approach where the objective function is represented using a regression function $\psi$ over the examples $\mathbf{x}$ and the gradients are performed with respect to $\psi(x)$. Similar to parametric gradient descent, after $n$ iterations of functional gradient-descent, $\psi_n(x)$ $=$ $\psi_0(x) + \Delta_1(x)$ $+$ $\cdots$ $+$ $\Delta_n(x)$.
 
Each gradient term ($\Delta_m$) is a set of  training examples and regression values given by the gradient w.r.t $\psi_m(x_i)$, i.e., $<x_i , \Delta_m(x_i) = \frac{\partial LL(\mathbf{x})}{\partial \psi_m(x_i)}>$. To generalize from these regression examples, a regression function $\hat{\psi}_m$ (generally regression tree)  is learned to fit to the gradients. The final model $\psi_m = \psi_0 + \hat{\psi}_1 + \cdots + \hat{\psi}_m$ is a sum over these regression trees. Functional-gradient ascent is also known as functional-gradient boosting (FGB) due to this sequential nature of learning models.

FGB has been applied to relational models~\cite{Natarajan2012,karwath08,PolicyGradient08,ijcaiImitation} due to its ability to learn the structure and parameters of these models simultaneously. Gradients are computed for relational examples which are groundings/instantiations (e.g., \textit{Diabetes(Jack)}) of target predicates/relations (e.g. \textit{Diabetes(X)}). Relational regression trees[~\cite{BlockeelR98}] are learned to fit the $\psi$ function over the relational regression examples. Since the regression function $\psi: X \rightarrow (-\infty, \infty)$ is unbounded, sigmoid function over  $\psi$  is commonly used to represent conditional distributions. Thus the RFGB log-likelihood function is:
\begin{equation}
\label{ll_rfgb}
\log J = \sum_i [ \psi(y_i=\hat{y}_i;\textbf{X}_i)-\log\sum_{y^\prime_i} e^{\psi(y^\prime_i;\textbf{X}_i)}]
\end{equation}
where $y_i$ corresponds to a target grounding (a ground instance of the target predicate) of example $i$  with parents set $\mathbf{X}_i$ which could be groundings of any first order variable that are stochastically correlated with the target grounding. $\hat{y}_i$ is the true label for the $i^{th}$ example which is $1$ for positive examples and $0$ for negative examples. $y^\prime_i$ iterates over all possible values of the target grounding $y_i$ which is $\{0, 1\}$ for binary valued predicates. 

The gradients with respect to the potentials (known as {\it functional gradients}) for each example can be shown to be:
\begin{equation} \label {hardG}
\dfrac{\partial \log J}{\partial \psi(y_i=1;\textbf{X}_i) } 
 = I(\hat{y}_i=1)- P(y_i=1;\textbf{X}_i)
\end{equation}
which is the difference between the true distribution ($I$ is the indicator function) and the current predicted distribution. Note the indicator function, $I$  returns $1$ for positive examples and $0$ for negative examples. Hence the positive gradient term for positive examples pushes the regression values closer to $\infty$ and thereby probability closer to 1, whereas for negative examples, the regression values are pushed closer to $-\infty$ and probabilities closer to $0$.




\chapter{Statistical Relational Learning for Imbalanced Data}
\label{chap2}
\ifpdf
\graphicspath{{Chapter2/Chapter2Figs/PNG/}{Chapter2/Chapter2Figs/PDF/}{Chapter2/Chapter2Figs/}}
\else
\graphicspath{{Chapter2/Chapter2Figs/EPS/}{Chapter2/Chapter2Figs/}}
\fi
 
Class-imbalance has remained a phenomenal and prevalent problem in statistical relational learning. This is due to the fact that in relational settings, only a limited number of relations actually exist among the objects, i.e. the number of ground facts is fairly a small portion of all the ground predicates. With regard to the training instances, negative examples far outnumber positive examples. For instance, consider the relation {\tt FamilyMember(Y, X)} in Figure~\ref{srl_samp}. If there are 1000 objects in the domain, which means {\tt X} and {\tt Y} both have 1000 different instances, the {\tt FamilyMember(Y, X)}  relation predicate can have 1M different ground instances with different configurations of X and Y. However, in real world scenario, most of {\tt FamilyMember} relationships are not true for those grounded instantiations. Such imbalance between positive and negative examples could even worsen as the number of instances for the first order logic atom increases. This problem is a critical one in all SRL domains involving relations between people, such as {\tt co-worker}, {\tt advised-by}, {\tt co-authors}, etc. The issue we address is directly faced by many SRL models.
Probabilistic Relational Models ~\cite{prm} bypasses this problem by creating a binary existence variable for every possible relation, but this introduces a similar class imbalance problem while learning the existence variables because this binary relation will be false for all but a very few number of grounded relations. 
 
The approach employed by more successful SRL algorithms such as RFGB is to sub-sample a set of negatives, and use this subset for training. While effective, this method can lead to {\it a large variance} in the resulting probabilistic model. An alternative method typically is to oversample the minority class but as observed by Chawla~\cite{chawla10}, this can lead to overfitting on the minority class. This is particularly true for incremental model-building algorithms such as RFGB, which iteratively attempt to fix the mistakes made in earlier learning iterations. In the propositional domains, this problem is typically addressed by operating and sampling in the feature space~\cite{chawla10}. However, in relational domains, the feature vector is not of a fixed length and the feature space can possibly be infinite~\cite{jensen02}. Previous work has shown that random sampling of features will not suffice in relational domains~\cite{khot14}.
 
One common solution in the propositional world to address the issue of class imbalance leading to overfitting is to perform some form of {\it margin maximization}. A common approach to margin maximization is via regularization, typically achieved via a regularization function \cite{saha13,hastie,schapirebook}. In propositional and relational functional-gradient boosting methods, common regularization approaches include restricting number of trees learned (number of iterations) or tree size. While reasonably successful, algorithm behavior is  sensitive to parameter selection, and requires additional steps such as cross validation to achieve optimal model tuning. Instead, we explore the use of {\em cost-based soft-margin maximization}, where a carefully designed cost function relaxes the hard margin imposed by maximum log-likelihood by penalizing certain misclassified examples differently.
 
Based on the observation that the current RFGB method simply optimizes the log-likelihood by considering the underlying model to be a (structured) log-linear model, we introduce a soft-margin objective function that is inspired by earlier research in log-linear models~\cite{Gimpel2010,sha+saul06}. The objective is very simple: high-cost examples should be penalized differently from low-cost examples. In relational domains, miss-classified negatives constitute low-cost examples, since the remarkably larger quantity of negative examples would easily dominate the boundary of the classifier after a few iterations. 
Our cost function addresses this class imbalance, essentially by placing different weights on the false negative and false positive cases. Thus, we propose a soft-margin function, or more specifically, a cost-augmented scoring function that treats positive and negative examples differently.
 
Our proposed work makes several key contributions: it addresses the {\it class imbalance problem} in relational data by introducing a soft-margin-based objective function. Then, it presents the derivation of gradients for this soft margin objective function, from which {\it it learns conditional distributions}. It also shows the relationship between the soft margin and the current RFGB algorithm by mapping one to another. It should be mentioned that since the original RFGB algorithm has been extended to learn several types of directed and undirected models~\cite{Natarajan2012,icdm11,ijcaiImitation}, our algorithm is broadly applicable and not restricted to a particular class of SRL model. Finally, we evaluate the algorithm on several standard data sets and demonstrate empirically that the proposed framework outperforms standard hard-margin RFGB algorithm in addressing the problem of class imbalance.
 
The rest of this chapter is organized as follows: First, the soft margin framework is presented and its functional and mathematical properties were analyzed, then the experimental setup is outlined for 5 standard relational data sets and the results are presented. Finally, the chapter is concluded with lists of some interesting avenues for future research. 
\section{Soft-Margin RFGB}
\label{softboost}
In order to approaching the cost-sensitive learning for probabilistic logic models, a cost function (soft-margin) term has been introduced into the objective function for RFGB. Standard RFGB approaches maximize the pseudo-loglikelihood given by Equation~\ref{ll_rfgb}.
The key assumption is that the conditional probability of a target grounding $y_i$, given all the other predicates is modeled as a sigmoid function: 
\begin{equation}
\label{sigmoid}
P(y_i;\textbf{X}_i)= \frac{1}{1 + \exp(-y_i\cdot\psi(y_i;\textbf{X}_i))}.
\end{equation}
The point-wise gradient with respect to the functional parameters can be derived as: 
$\Delta(y_i) \, = \, \textit{I}(\hat{y}_i=1) - \textit{P}(y_i=1;\textbf{x}_i)$.
When a positive (or negative) example is predicted to be true with probability $0$ (or $1$), the gradient will be $1$ (or $-1$). 
This gradient ensures that the probability (via the likelihood) of all positive examples is pushed towards $1$, and all the negative examples towards $0$.

While this appears to be reasonable, a closer inspection illustrates two key problems with RFGB.  First, as more trees being learned, weights are concentrated on outliers, which leads to over-fitting. 
In each iteration, outliers are misclassified and will consequently have larger gradients; this leads to the algorithm focusing on them in the subsequent iteration and resulting in a more complicated model than might be necessary. To avoid this, regularization strategies such as limiting the number of trees or tree size are usually implemented. While this is usually effective, this requires careful tuning of parameters from a large pool of potential candidates in order to learn the optimal classifier.

Second, RFGB treats both positive and negative examples equally. This is evident from the fact that the magnitude (absolute value) of the point-wise gradient does not depend on the true label of the example. For skewed data sets, such as ones common in relational domains, we may want to assign higher weights to examples of the minority class. For instance, as the number of positive examples is very small, we might wish to assign higher weights to positive examples in order to reduce false negatives. To alleviate this issue, RFGB currently sub-samples a smaller subset of negative examples while training. Again, while this is reasonable, the resulting model usually has high variance due to these varying sets of negative examples.



On the other hand, a cost-sensitive approach allows to address these issues and model the target task more faithfully. This is achieved by adding a cost function to the objective function that penalizes positive and negative examples differently. Thus, instead of just optimizing the log-likelihood of examples, the algorithm also takes into account these additional costs in order to account for the class imbalance.

Following the work of Gimpel and Smith~\citeyear{Gimpel2010}, a cost function is introduced into the objective function, which is defined as:
\begin{equation*}
c(\hat{y}, y)= \alpha \, I(\hat{y}=1\wedge y=0) \, + \, \beta I(\hat{y}=0 \wedge y=1),
\end{equation*}
where $\hat{y}_i$ is the true label of the $i^{th}$ instance and $y_i$ is the predicted label. $I(\hat{y}=1 \wedge y=0)$ is $1$ for false negative examples and $I(\hat{y}=0 \wedge y=1)$ is $1$ for false  positive examples. Intuitively, $c(\hat{y}, y) $ equals to $\alpha$ when a positive example is misclassified, while $c(\hat{y}, y) $ equals to $\beta$ when a negative example is misclassified, and $0$ for cases of $I(\hat{y}=0 \wedge y=0)$ and $I(\hat{y}=1 \wedge y=1)$ .

This cost function was hence being introduced into the normalization term of the objective function as:
\begin{equation*}
\log J=  \sum_i  [\psi(y_i;\textbf{X}_i)-\log\sum_{y^\prime_i} e^
{ \psi(y^\prime_i;\textbf{X}_i) + c(\hat{y}_i,y^\prime_i) }],
\end{equation*}
where $\hat{y}_i$ is the true label for the $i^{th}$ example and $y^\prime_i$ iterates over all possible values of the target grounding $y_i$. 
Thus, in addition to a simple log-likelihood of examples, the algorithm also takes into account these additional costs.

The gradient of the objective function w.r.t $\psi(y_i=1;\textbf{X}_i)$ can be derived as:
\begin{align*}
& \dfrac{\partial \log J}{\partial \psi(y_i=1;\textbf{X}_i) } \nonumber\\
& = I(\hat{y}_i=1)- \dfrac{\exp \left\{ \psi(y_i=1; \textbf{X}_i) +c(\hat{y}_i,y_i=1)\right\}}{\sum_{y^\prime_i}\exp\left\{\psi(y^\prime_i; \textbf{X}_i)+c(\hat{y}_i,y^\prime_i)\right\}}.
\end{align*}

Dividing the numerator and denominator by the normalization term $Z_i=\sum_{y^\prime_i}e^{\psi(y^\prime_i;\textbf{X}_i)}$:
\begin{align*}
& \dfrac{\partial \log J}{\partial \psi(y_i=1;\textbf{X}_i) } \nonumber\\
&  = I(\hat{y}_i=1;\textbf{X}_i)- \dfrac{e^ {\psi(y_i=1;\textbf{X}_i)+c(\hat{y}_i,y_i=1)}/Z_i}{\sum_{y^\prime_i}e^{\psi(y^\prime_i;\textbf{X}_i)+c(\hat{y}_i,y^\prime_i)}/Z_i} \nonumber\\
& = I(\hat{y}_i=1;\textbf{X}_i)- \dfrac{P(y_i=1;\textbf{X}_i)e^ {c(\hat{y}_i,y_i=1)}}{\sum_{y^\prime_i}[P(y^\prime_i;\textbf{X}_i)e^{c(\hat{y}_i,y^\prime_i)}]}
\end{align*}

To compactly present the gradients of the objective function, define the cost parameter as:
\begin{equation*}
\lambda \, = \, \dfrac{e^ {c(\hat{y}_i, y_i=1)}}{\sum_{y_i^\prime}[P(y_i^\prime; \textbf{X}_i) \,\, e^{c(\hat{y}_i,y_i^\prime)}]},
\end{equation*}

which equals to:
\begin{equation}
\label{softrfgb_lambda}
\begin{array}{l}
\lambda(y_i)\, = 
\left\{ \begin{array}{ll} \displaystyle{\frac{1}{P(y_i=1;\textbf{X}_i)+P(y_i=0;\textbf{X}_i)\cdot e^{\alpha}}},  & \textrm{if}\,\, \hat{y}_i=1 \\ [10pt]
\displaystyle{\frac{ e^{\beta}}{P(y_i=1;\textbf{X}_i)\cdot e^{\beta} +P(y_i=0;\textbf{X}_i)}},   & \textrm{if} \,\, \hat{y}_i=0 
\end{array} \right.
\end{array}
\end{equation}
Hence, the gradients of the objective function can be rewritten compactly as
\begin{equation}
\Delta(y_i) = I(\hat{y}_i=1) \, - \, \lambda P(y_i=1;\textbf{X}_i).
\label{simp_grad}
\end{equation}

To explicitly present the gradient functions, its different formulae for positive and negative examples are shown  separately as below:
\begin{equation*}
\begin{array}{l}
\Delta(y_i)\, = 
\left\{ \begin{array}{ll} 1 - \displaystyle{\frac{P(y_i=1; \textbf{x}_i)}{P(y_i=1;\textbf{X}_i)+P(y_i=0;\textbf{X}_i)\cdot e^{\alpha}}},  & \textrm{if}\,\, \hat{y}_i=1 \\ [10pt]
0 - \displaystyle{\frac{\textit{P}(y_i=1;\textbf{x}_i)\cdot e^{\beta}}{P(y_i=1;\textbf{X}_i)\cdot e^{\beta} +P(y_i=0;\textbf{X}_i)}},   & \textrm{if} \,\, \hat{y}_i=0 
\end{array} \right.
\end{array}
\end{equation*}

Algorithm \ref{softrfgb} shows the proposed approach. It iterates through M steps. In each iteration, it generates examples based on the soft-margin gradients; then uses  \textsc{FitRelRegressionTree} to learn a relational regression tree to fit the examples  which is added to the current model. The maximum leaves a tree can have is limited to $L$  and  the best node is greedily picked to expand the tree. 

Function \textsc{GenSoftMEgs} iterates through all the examples (N in the algorithm) to generate the regression examples. For each example, it calculates the probability of the example being true given the current model ($P(y_i=1|\mathbf{x}_i)$). It then calculates the parameter $\lambda$ for positive and negative examples respectively~\ref{softrfgb_lambda}, then the gradient based on the simplified formula~\ref{simp_grad}. The example and its gradient are added to the set of regression examples, $S$.

\begin{figure}
	\renewcommand\figurename{Algorithm}
	\caption{Soft-RFGB: RFGB with Soft-Margin}
	\label{softrfgb}
	\begin{algorithmic}[1]
		\Function{SoftRFGB}{$Data$}
		\For {$1 \leq m \leq M$} \Comment{Iterate through M gradient steps}
		\State $S := $\textsc{GenSoftMEgs}$(Data; F_{m-1})$ 
		\Comment{Generate examples}
		\State $\Delta_{m}:=$\textsc{FitRelRegressTree}$(S)$ 
		\Comment{Relational Tree learner}
		\State $F_m := F_{m-1} + \Delta_{m}$ 
		\Comment{Update model}
		\EndFor
		\EndFunction\\
		
		\Function{GenSoftMEgs}{$Data,F$}
		\Comment{ Example generator}
		\State $S := \emptyset$
		\For { $1 \leq i \leq N$} 
		\Comment{Iterate over all examples}
		\State $p_i = P(y_i=1|\mathbf{x}_i) $ 
		\Comment{Probability of the example being true}
		\If {$\hat{y}_i=1$}
		\State $\lambda = \frac{1}{p_i + (1 - p_i)\cdot e^{\alpha}}$
		\Comment{Compute parameter $\lambda$  for positive examples}
		\Else
		\State $\lambda = \frac{1}{p_i + (1 - p_i)\cdot e^{-\beta}}$
		\Comment{Compute parameter $\lambda$  for negative examples}
		\EndIf
		\State $\Delta(y_i;\mathbf{x}_i) := I(\hat{y}_i=1) - \lambda P(y_i = 1|\mathbf{x}_i)$ 
		\Comment{Gradient of current example }
		\State $ S := S \cup {[y_i, \Delta(y_i;;\mathbf{x}_i)]}$ 
		\Comment{Update relational regression examples}
		\EndFor
		\\
		\Return $S$ \Comment{Return regression examples}
		\EndFunction\\
		
		\Function{FitRelRegressionTree}{$S$}
		\Comment{ Relational tree Learner}
		\State Tree := createTree($P(X)$)
		\State Beam := \{root(Tree)\}
		\While {$numLeaves(Tree) \le L$}
		\State Node := popBack(Beam)\Comment{Node w/ worst score}
		\State C := createChildren(Node) \Comment{Create children}
		\State BN := popFront(Sort(C, S)) \Comment{Node w/ best score}
		\State addNode(Tree, Node, BN)
		\Comment{Replace Node with BN}
		\State insert(Beam, BN.left, BN.left.score) \Comment{Insert branch}
		\State insert(Beam, BN.right, BN.right.score)
		\EndWhile \\
		\Return Tree
		\EndFunction
		
	\end{algorithmic}

\end{figure}

Now, let us have a closer look at the functional properties of the proposed soft-margin RFGB. In the cost-sensitive gradients derived above, the cost parameter $\lambda$ depends on the parameters $\alpha$ and $\beta$.  
When $\alpha\, = \,\beta \, = \,0$, $\lambda=1 \, / \, \sum_{y^\prime} \, [P(y^\prime; \textbf{X}_i)] = 1$, and the original RFGB gradients are recovered; this setting corresponds to ignoring the cost-sensitive term.

For positive examples, we have
\begin{equation}
\lambda= \frac{1}{P(y_i=1;\textbf{X}_i)+P(y_i=0;\textbf{X}_i)\cdot e^{\alpha}}. \nonumber
\end{equation}
As  $\alpha \rightarrow \infty \Rightarrow e^{\alpha} \rightarrow \infty \Rightarrow \lambda \rightarrow 0$, 
hence the gradient (which equals to $\Delta(y_i) = 1 \, - \, \lambda P(y_i=1;\textbf{X}_i)$) ignores the predicted probability (as long as it is not consistent with the true label), and is being pushed closer to its maximum value $1$ (i.e. $\Delta \rightarrow 1$). This amounts to putting a large positive cost on the false negatives since the gradients have been pushed to its maximal possible value. 

On the other hand, as $\alpha \rightarrow -\infty \Rightarrow  e^{\alpha} \rightarrow 0 \Rightarrow \lambda \rightarrow \frac{1}{P(y_i=1;\textbf{X}_i)}$, $\Delta \rightarrow 1- 1 = 0$, 
which functions as putting no cost on the false negatives (no matter how far the predicted probability is from the true label) since the gradients are pushed closer to their minimum value of 0. 

Similarly, for negative examples, it can be shown that when $\beta \rightarrow \infty$, the gradient $\Delta \rightarrow -1$ and the boosting algorithm puts a large negative weight on the false positives (the absolute value of the gradient controls the moving distance and the sign of the gradient controls the moving direction), whereas as $\beta \rightarrow -\infty$, the gradient $\Delta \rightarrow 0$ and it puts no cost on the false positives. 

Generally, if $\alpha <0$ ($\beta < 0$), the algorithm is more tolerant of misclassified positive (negative) examples. Alternately, if $\alpha > 0$ ($\beta > 0$), the algorithm penalizes misclassified positive (negative) examples even more than standard RFGB. Thus, the influence of positive and negative examples on the final learned model boundary can be directly controlled by tuning the parameters $\alpha$ and $\beta$.

Simply put, if the correct classification of positive examples is very important in a certain domain, one can emphasize such importance by assigning positive values to $\alpha$. If there are many outliers with positive labels which could lead to over-fitting, one can soften the margin by setting $\alpha$ to negative, making the algorithm more tolerant. The choice of $\beta$ has a similar effect on the negative examples. Soft-RFGB {\it allows flexible adjustments to classification boundaries} in various domains by defining the cost function with two parameters ($\alpha$ and $\beta$) which control the gradients of misclassified positive and negative examples respectively. Returning to our recurring example of a class-imbalanced relational domain, we wish to correctly classify as many positive examples as possible, while at the same time, avoid over-fitting on the negative examples. In such cases, we set $\alpha > 0$ and $\beta < 0$ whose effectiveness have been explored in experiments with 5 standard statistical relational learning data sets.

Now, let us investigate the mathematical properties of the proposed soft-margin RFGB.
\begin{theorem}
	There exist global maxima for the loglikelihood function in soft-margin RFGB and the algorithm converges at the true labels of the examples.
\end{theorem}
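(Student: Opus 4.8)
The plan is to lean on two facts: $\log J$ is concave in the functional parameters, and the point-wise gradient $\Delta(y_i)=I(\hat y_i=1)-\lambda\,P(y_i=1;\textbf{X}_i)$ derived above vanishes only when every example is predicted at its true label. First I would rewrite each summand of $\log J$ in terms of the single identifiable quantity $\psi_i := \psi(y_i=1;\textbf{X}_i)-\psi(y_i=0;\textbf{X}_i)$ (the pair $(\psi(y_i=1),\psi(y_i=0))$ carries a harmless additive gauge freedom): up to that gauge, a positive example contributes $-\log(1+e^{\alpha-\psi_i})$ and a negative example contributes $-\log(1+e^{\beta+\psi_i})$. Since $t\mapsto-\log(1+e^{t})$ is concave and $\psi_i$ is linear in $\psi$ (hence in the leaf values of the relational regression trees), $\log J$ is a sum of concave functions of linear images and is therefore concave; consequently every stationary point is a global maximum and the maximizer set is convex.

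Next, for existence of a global maximum I would observe that $1+e^{\alpha-\psi_i}\ge 1$ and $1+e^{\beta+\psi_i}\ge 1$, so each summand is $\le 0$ regardless of the signs of $\alpha$ and $\beta$, giving $\log J\le 0$; moreover the $i$-th summand equals $0$ exactly in the limit $\psi_i\to+\infty$ (i.e. $p_i:=P(y_i=1;\textbf{X}_i)\to 1$) for a positive example and $\psi_i\to-\infty$ (i.e. $p_i\to 0$) for a negative example. Re-parametrizing by $\mathbf p=(p_1,\dots,p_N)\in[0,1]^N$ makes the domain compact and $\log J$ a continuous function on it, so its supremum $0$ is attained --- precisely at the corner $\mathbf p=\hat{\mathbf y}$, i.e. at the model that reproduces the true labels. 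This is the sense in which the loglikelihood has a global maximum, and the maximum coincides with the true labels.

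For convergence of Soft-RFGB I would combine the gradient formula (\ref{simp_grad}) with the explicit $\lambda$ in (\ref{softrfgb_lambda}): setting $\Delta(y_i)=0$ and simplifying forces $P(y_i=0;\textbf{X}_i)\,e^{\alpha}=0$ on a positive example and $P(y_i=1;\textbf{X}_i)=0$ on a negative example, and since $e^{\alpha}>0$ (and the normalizers are strictly positive) this holds only when $p_i=\hat y_i$. Hence the functional gradient is nonzero at every model that misclassifies some example, so --- as long as the fitted regression tree $\hat\psi_m$ stays positively aligned with the gradient and the step is suitably small --- each boosting iteration strictly increases the bounded-above, concave objective. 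The standard monotone-convergence argument for gradient ascent on a concave function bounded above then gives $\log J(\psi_m)\uparrow 0$, and therefore $p_i\to\hat y_i$ for every $i$: the algorithm converges to the true labels.

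I expect the delicate step to be the existence-of-maxima claim: the supremum is reached only as the functional values diverge, so ``global maxima exist'' must be read via the compactification $p_i\in[0,1]$ rather than as attainment at a finite parameter vector, and the clean characterization $\mathbf p=\hat{\mathbf y}$ tacitly needs the relational feature representation to separate oppositely-labeled groundings (two groundings with identical relevant features but opposite labels cannot both be driven to their true label, in which case the maximizer sits strictly inside $[0,1]^N$). I would either state this separability assumption explicitly or phrase the result for the idealized target function $\psi$. A minor technical nuisance is carrying the additive gauge of $(\psi(y=1),\psi(y=0))$ through the argument so that concavity and the description of maximizers are all stated in the identifiable variable $\psi_i$ (equivalently $p_i$).
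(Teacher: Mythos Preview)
Your argument is correct and reaches the same two conclusions as the paper, but the routes differ in instructive ways.

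For concavity, the paper computes the second derivative of each example's contribution with respect to $\psi(y_i=1;\textbf{X}_i)$ via the chain rule through $p^+$, obtaining $H^+=\dfrac{-e^{\alpha}}{[p^++(1-p^+)e^{\alpha}]^2}\,p^+(1-p^+)$ and $H^-=\dfrac{-e^{-\beta}}{[p^++(1-p^+)e^{-\beta}]^2}\,p^+(1-p^+)$, both nonpositive. You instead rewrite the summands in closed form as $-\log(1+e^{\alpha-\psi_i})$ and $-\log(1+e^{\beta+\psi_i})$ and invoke concavity of $t\mapsto -\log(1+e^t)$ composed with an affine map. Your route is cleaner and makes the gauge freedom in $(\psi(y=1),\psi(y=0))$ explicit rather than implicit; the paper silently differentiates only in the $\psi(y_i=1)$ direction.

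For the ``global maximum exists'' claim, the paper simply infers existence from concavity, which is loose: a concave function need not attain its supremum. Your compactification to $\mathbf p\in[0,1]^N$ and identification of the supremum $0$ at $\mathbf p=\hat{\mathbf y}$ is the honest version of that statement, and your remark that attainment at finite $\psi$ requires separability of oppositely-labeled groundings is a genuine caveat the paper does not mention.

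For convergence, both proofs are identical in substance: set $\Delta(y_i)=0$ and read off $(1-p^+)e^{\alpha}=0$ for positives and $p^+=0$ for negatives. Your added paragraph on monotone ascent (positive alignment of the fitted tree with the gradient, bounded concave objective) supplies the mechanism the paper leaves implicit when it says ``the algorithm converges at the true labels.''
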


\begin{proof}
   Let $Z_i$ denote the normalization term $Z_i=\sum_{y^\prime} e^{\psi(y_i = y^\prime; \textbf{X}_i)}$ and $p^+$ denote the predicted probability for positive examples, i.e., $p^+ = \dfrac{e^{ \psi(y_i=1; \textbf{X}_i)}}{Z_i}$.
   \begin{align*}
   \frac{\partial \ \ p^+}{\partial \ \ \psi(y_i=1; \textbf{X}_i) } & = \frac{e^{ \psi(y_i=1; \textbf{X}_i)} * Z_i - e^{ \psi(y_i=1; \textbf{X}_i)} * e^{ \psi(y_i=1; \textbf{X}_i)}}{Z_i^2} \\
   & = p^+ (1- p^+)
   \end{align*}
   For positive examples, the Hessian function w.r.t. $\psi(y_i=1; \textbf{X}_i)$ equals to:
   \begin{align*}
   H^+ & = \frac{\partial \ \ \Delta^+ }{\partial \ \ \psi(y_i=1; \textbf{X}_i)} = \frac{\partial \ \ \Delta^+ }{\partial \ \ p^+} * \frac{\partial \ \ p^+}{\partial \ \ \psi(y_i=1; \textbf{X}_i)} \\
   & = \frac{-e^{\alpha}}{[p^+ + (1-p^+)e^{\alpha}]^2} *  p^+ (1- p^+)
   \end{align*}
   	As $p^+ \in [0,1]$, $(1-p^+) \in [0,1]$, and that we also have $-e^{\alpha} < 0$ and $[p^+ + (1-p^+)e^{\alpha}]^2 \geq 0 $, $H^+$ is non-positive for the interval of $p^+$ as [0, 1]. 
   	
   For negative examples, the Hessian function w.r.t.  $\psi(y_i=1; \textbf{X}_i)$ equals to:
   \begin{align*}
   H^- & = \frac{\partial \ \ \Delta^- }{\partial \ \ \psi(y_i=1; \textbf{X}_i)} = \frac{\partial \ \ \Delta^- }{\partial \ \ p^+} * \frac{\partial \ \ p^+}{\partial \ \ \psi(y_i=1; \textbf{X}_i)} \\
   & = \frac{-e^{-\beta}}{[p^+ + (1-p^+)e^{-\beta}]^2} *  p^+ (1- p^+)
   \end{align*}
   Similarly, we also get $H^-$ is non-positive everywhere for the interval $p^+ \in [0,1]$. Hence, the log-likelihood function of $\psi(y_i=1; \textbf{X}_i)$ is a concave function and there exists the global optimum. \\ 
   
   When the algorithm converges, $\Delta = 0$. 
   For positive examples, $1-\frac{p^+}{p^+ +(1-p^+) e^{\alpha}}= 0  \ \ \Rightarrow$
   $ (1-p^+) e^{\alpha} = 0$. Since $e^{\alpha} > 0$, we have $1-p^+ = 0$, i.e., the predicted probability for positive examples being true equals to 1 when the algorithm converges. 
   Similarly, For negative examples, $-\frac{p^+}{p^+ +(1-p^+) e^{-\beta}}= 0  \ \ \Rightarrow \ \ p^+ =0$, i.e., the predicted probability for negative examples being true equals to 0 when the algorithm converges.
         
\end{proof}
	
\section{Experiments of Soft-Margin RFGB on Simulated Relational Data}
\label{experiments}
This chapter presents the evaluation of the proposed Soft-RFGB approach, including detailed evaluation criteria, experimental setup and discussion of the experimental results on five standard SRL data sets. 

\subsection{Domains}
Four standard relational learning domains (Cora, IMDB, UW, WebKB) and a propositional data set (Heart Disease) are employed  for empirical evaluation. Table~\ref{domains_softrfgb} shows important details of the data sets used. The target predicate for each domain is provided in the second column. 
The third column is the size of the data set which equals to the number of facts based on which the target predicate is predicted. The last column denotes the ratio of negative to positive examples, i.e. $\#neg:\#pos$. 

Details on the contents of these datasets are as follows:
\begin{itemize}
	\item \textit{Cora} is a citation matching data set where the goal is to identify a group of citations that refer to the same paper. This prediction is based on the title of the paper, the authors, the venue of the publication etc. 
	\item In \textit{IMDB}, the goal is to predict the gender of a person involved in the movie from information about the movies that the person has worked on, such as their directors, genre, etc.
	\item \textit{Heart} contains heart disease data from the UCI repository, and aims to predict whether a person is susceptible to heart attack given information such as their gender, age, cholesterol levels, the ECG information, etc.
	\item The goal in \textit{UW} is to predict who the advisor of a student is based on information such as their common publications, the courses they have taught and TAed, their levels in the department and program, etc. 
	\item \textit{WebKB} task aims to predict the TA of a course based on information extracted from the faculty, course and project web pages.
\end{itemize}
Note that the datasets are sorted in decreasing order of balance between positive and negative examples.

\begin{table}[h]	
	\caption{Details of the experimental domains.}
	\begin{center}
		\begin{tabular}{|c|c|c|c|c|c|}
			\hline
			\multirow{2}{*}{Domain} & Target  & Num of & Num of & Num of  & Imb. \\
			~ & Predicate & facts & pos & neg  &  ratio \\
			\hline 
			Cora & $samebib$ & 6731 & 30971 & 21952 & 0.71:1 \\
			IMDB & $female\_gender $ & 959 & 95 & 173 & 1.8:1\\
			Heart & $num$ & 7453 & 265 & 655 & 2.5:1\\
			UW & $advisedBy$  & 5039 & 113 & 54729 &  484:1  \\
			WebKB & $courseTA$ & 1912  & 121 & 71095 & 588:1 \\
			\hline
		\end{tabular}
	\end{center}	
	\label{domains_softrfgb}
\end{table}

\subsection{Evaluation Metrics}
Standard evaluation metrics on relational models include the use of Area Under ROC or PR curves (AUC-ROC or AUC-PR), $F_1$ score, etc., which measure accuracy with balanced weight between positive and negative examples.  Instead, we address domains where misclassification of positive instances (false negatives) costs significantly more than a false alarm (false positives), e.g., in medical diagnosis, security detection, etc.  In such domains, the model should identify as many positive cases as possible as long as the precision stays within a reasonable range.  

To better serve such a goal, an evaluation metric which assigns {\it higher weights to high recall regions, i.e., the top region under an ROC curve} is employed.
The weighted AUC-ROC measure shifts weight from the bottom regions to the top regions under the ROC curve, which was proposed by Weng et al.~\cite{Weng08}.  The vertical dimension of the ROC plot is divided into $N+1$ horizontal strips, and in the $x$'th section we assign the weight:
\begin{equation}
\label{weight}
W(x)\, = \,\left\{ \begin{array}{ll} 
1-\gamma, & x=0, \\ [6pt]
W(x-1)\times \gamma +(1-\gamma), & 0 < x < N, \\ [6pt]
\frac{W(x-1)\times \gamma +(1-\gamma)}{1- \gamma}, & x=N.
\end{array} \right.
\end{equation}
where $\gamma \in [0, 1]$ controls the amount of weight skewing. Here, $x=0$ corresponds to the bottom-most area of the ROC curve (see Figure~\ref{wauc}), and is assigned weight $1-\gamma$.  Then, weights are transferred recursively from each of the $N$ regions to the one above it, and $\gamma$ controls the amount of weight transferred. For our experiments, we use the weighted AUC-ROC metric with settings of N=4 and $\gamma=0.8$. The corresponding weights of the 5 regions under the ROC curve are calculated based on Equation~\ref{weight} and shown in Figure~\ref{wauc}.  
\begin{figure}[htbp!]
	\centering
	\includegraphics[scale=0.5]{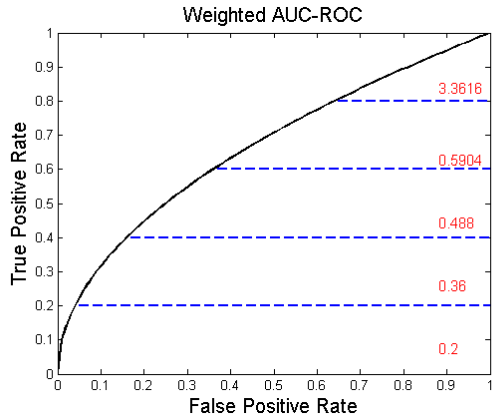}
	\caption{Weighted AUC-ROC}
	\label{wauc}
\end{figure}

As an aside, our proposed $W(x)$ contains a correction to the weighting proposed in Weng et al. In order for the parameter $\gamma$ to have the mathematical properties presented in their work~\cite{Weng08}, the bottom region must have a weight of $1 - \gamma$ instead of $\gamma$ as claimed by~\cite{Weng08}. We prove the correctness of this modification in the Appendix A.

Our second metric is the F-measure:
\begin{equation}
F_{\delta}=(1+ \delta^2)\frac{Precision \cdot Recall}{\delta^2 \cdot Precision+Recall},
\end{equation}
where $\delta$ controls the importance of Precision and Recall~\cite{Rijsbergen1979}. Note that $F_1$ (i.e., $F_\delta$ with $\delta = 1$) is the harmonic mean of the precision and recall. As $\delta \rightarrow \infty$,  $F_{\delta} \rightarrow Recall$, and $F_{\delta}$ is Recall dominated, while as $\delta \rightarrow 0$, $F_{\delta} \rightarrow Precision$, and $F_{\delta}$ is Precision dominated.  (Although F-measures typically are subscripted with the symbol $\beta$, $\delta$ is used here to avoid clashing with the parameters used in the cost function of soft-RFGB.) We use $\delta=5$ in our evaluation to increase the importance of recall over precision, and denote the  measure as $F_5$.

In summary, our experiments use three metrics: 1) false negative rate, 2) $F_5$ measure, and 3) weighted AUC-ROC.  By comparing the metrics, it becomes possible to better understand an algorithm's performance on different parts of the precision-recall curve. For instance, if Algorithm A has lower false negative rate and  higher $F_5$ measure, but similar weighted AUC-ROC to that of Algorithm B, then this suggests that Algorithm A improves on the false negative rate {\it without sacrificing overall predictive performance}.  (Note that false negative rate and F-measure require a classification threshold, this choice will be discussed below.)  

It is important to note that the evaluation metric parameters (i.e. $\gamma$, $\delta$) were \textbf{not} being changed for different data sets, but rather chosen arbitrarily so the metrics focus better on false negatives. Hence, the learning algorithms were not particularly tuned to the evaluation measures.

\subsection{Results}

\begin{figure*}[htbp]
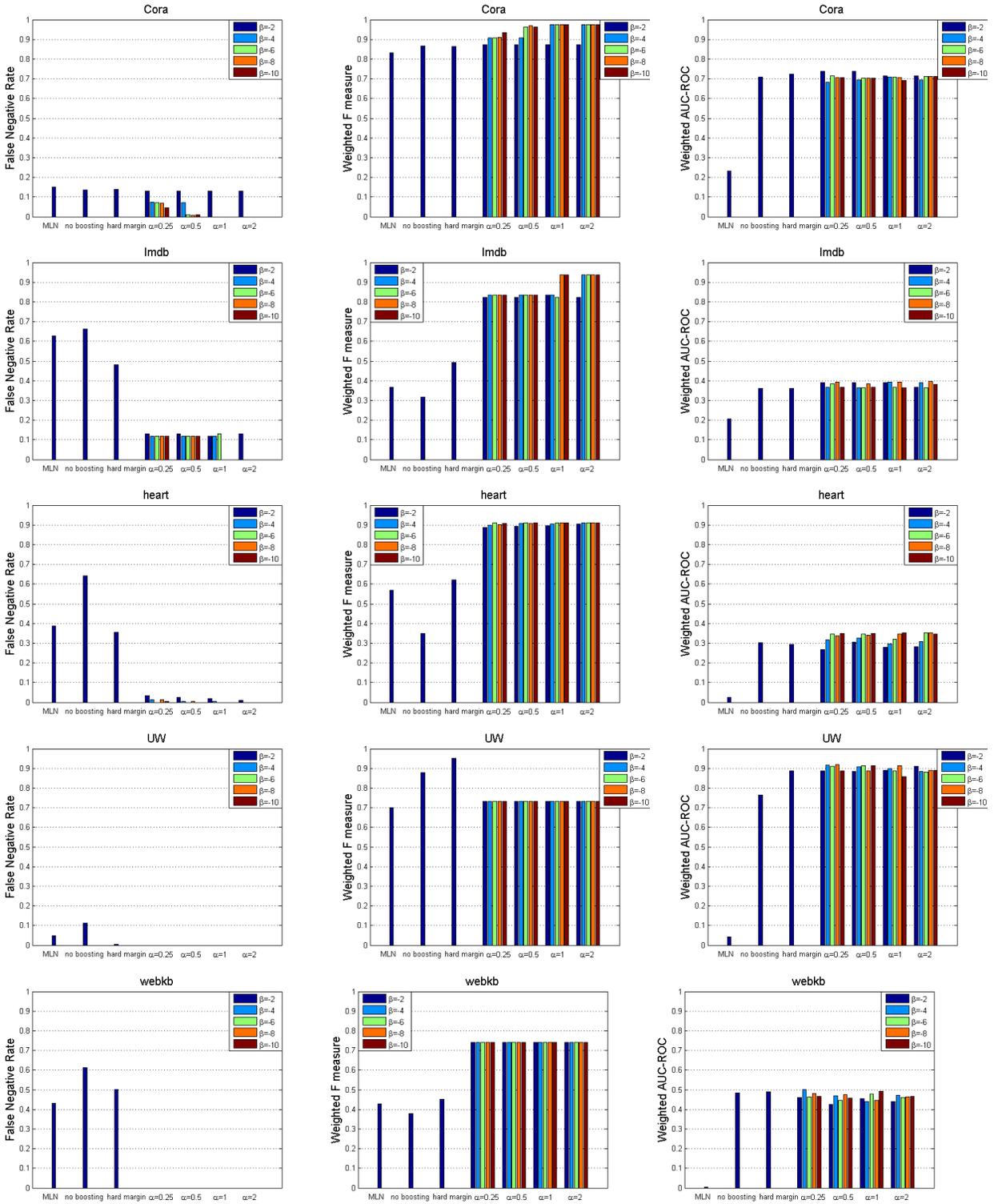

	\begin{minipage}[b]{0.32\textwidth}
		\centering
		\includegraphics[scale=0.36]{cora_fnr.png}
	\end{minipage}
	\begin{minipage}[b]{0.32\textwidth}
		\centering
		\includegraphics[scale=0.36]{cora_wf.png}
	\end{minipage}
	\begin{minipage}[b]{0.32\textwidth}
		\centering
		\includegraphics[scale=0.36]{cora_wauc.png}
	\end{minipage}
	\begin{minipage}[b]{0.32\textwidth}
		\centering
		\includegraphics[scale=0.36]{imdb_fnr.png}
	\end{minipage}
	\begin{minipage}[b]{0.32\textwidth}
		\centering
		\includegraphics[scale=0.36]{imdb_wf.png}
	\end{minipage}
	\begin{minipage}[b]{0.32\textwidth}
		\centering
		\includegraphics[scale=0.36]{imdb_wauc.png}
	\end{minipage}
	\hspace{-15mm}
	\begin{minipage}[b]{0.32\textwidth}
		\centering
		\includegraphics[scale=0.36]{heart_fnr.png}
	\end{minipage}
	\begin{minipage}[b]{0.32\textwidth}
		\centering
		\includegraphics[scale=0.36]{heart_wf.png}
	\end{minipage}
	\begin{minipage}[b]{0.32\textwidth}
		\centering
		\includegraphics[scale=0.36]{heart_wauc.png}
	\end{minipage}
	\hspace{-15mm}
	\begin{minipage}[b]{0.32\textwidth}
		\centering
		\includegraphics[scale=0.36]{uw_fnr.png}
	\end{minipage}
	\begin{minipage}[b]{0.32\textwidth}
		\centering
		\includegraphics[scale=0.36]{uw_wf.png}
	\end{minipage}
	\begin{minipage}[b]{0.32\textwidth}
		\centering
		\includegraphics[scale=0.36]{uw_wauc.png}
	\end{minipage}
	\hspace{-15mm}
	\begin{minipage}[b]{0.32\textwidth}
		\centering
		\includegraphics[scale=0.36]{webkb_fnr.png}
	\end{minipage}
	\begin{minipage}[b]{0.32\textwidth}
		\centering
		\includegraphics[scale=0.36]{webkb_wf.png}
	\end{minipage}
	\begin{minipage}[b]{0.32\textwidth}
		\centering
		\includegraphics[scale=0.36]{webkb_wauc.png}
	\end{minipage}
	\caption{Experimental results. Each row indicates one of five benchmark datasets.  Each column indicates false negative rate, $F_5$ measure, and weighted AUC-ROC, respectively.}
	\label{srfgbresults}
\end{figure*}

Four popular relational algorithms are considered: (1) The state-of-the-art MLN learning algorithm (denoted as MLN)~\cite{icdm11};  (2) A single relational probability tree (denoted as no boosting)~\cite{rpt}; (3) RFGB as presented in earlier research (denoted as  hard margin)~\cite{Natarajan2012}; and (4) our proposed approach soft-RFGB, with various parameter settings for $\alpha$ and $\beta$.  Figure~\ref{srfgbresults} presents the results of these four algorithms.

Experiments were performed with $\alpha = 0.25, 0.5, 1, 2$ and $\beta = -2, -4, -6, -8,$  $-10$.  The choice of $\alpha > 0$ reflects a harsher penalty for misclassified positive examples, while $\beta < 0$ reflects higher tolerance to misclassifying outlier negative examples as positive.   The last four clusters of each graph indicate soft-RFGB with different $\alpha$ settings, and each bar in each cluster indicates a different $\beta$ settings. 

To assign a classification threshold for the false negative rate and $F_5$ measure, the fraction $\#  positive/ \# (positive+negative)$ is used for the Cora, Heart, and IMDB datasets. However, for the WebKB and UW data sets, the data is so skewed that this fraction is extremely small which results in classifying every example as positive. To alleviate this, the negative examples are randomly subsampled during testing so that the pos/neg ratio is $1:10$ and the fraction $1/11$ is used to calculate the false negative rate and $F_5$ measure. 


Four- or five-fold cross validation were used on each dataset and averaged for the three evaluation measures discussed above. These results were used to investigate the following key questions:\\
\textbf{Q1:} How does soft-RFGB perform compared to boosting MLN?\\
\textbf{Q2:} How does soft-RFGB perform compared to a single relational tree without boosting?\\
\textbf{Q3:} How does soft-RFGB perform compared to hard-margin RFGB ?\\
\textbf{Q4:} How sensitive is soft-RFGB to the choice of parameter values?\\

First let us turn our attention to {\bf Q1} and compare soft-RFGB to boosting MLN (column 1 of each graph).  For all domains, there exists a parameter choice of $\alpha$ and $\beta$ that significantly decreases the false negative rate; this is especially so for WebKB and UW data sets, where the false negative rate of soft-RFGB reaches zero. Soft-RFGB significantly improves the false negative rate for nearly all values of $\beta$.  For all domains, soft-RFGB improves on both the weighted AUC-ROC and the $F_5$ measure for all experimented values of $\alpha$ and $\beta$. Thus, {\bf Q1} can be answered affirmatively.

To study {\bf Q2} and {\bf Q3}, consider the single relational tree without boosting (column 2 in each graph) and the standard RFGB with hard margin (column 3).  In each domain, soft-RFGB significantly decreases the false negative rate.  In all the domains except UW, soft-RFGB improves the $F_5$ measure. This is because our parameter settings for soft-RFGB sacrifice precision in order to achieve higher recall; for UW, standard-RFGB {\it already achieves the highest possible recall} (i.e. =1) and soft-RFGB does not have any room to improve recall, though it still sacrifices precision, causing $F_5$ to decrease.  Weighted AUC-ROC from soft-RFGB is similar or slightly better than that from hard-margin RFGB in all the domains. Since our goal is to decrease the false negative rate without hurting the overall performance of the algorithm, with the results on weighted AUC-ROC, these experiments strongly suggest that soft-RFGB can achieve better performance than a single relational tree or standard RFGB in high recall regions. 

Finally, {\bf Q4} is also answered affirmatively: {\it within a reasonably large value range of $\alpha$ and $\beta$, the algorithm is not sensitive to their settings in most domains}. In our tests, $\alpha > 1$ and $\beta < -2$ produces consistently good performance.  This is a major advantage of our cost-sensitive approach over commonly-used tree-regularization approaches, which tend to be highly sensitive to the choice of tree size, or the number of trees. In addition, our parameters $\alpha$ and $\beta$ have a nice intuitive interpretation: they reflect and incorporate the high costs of misclassifying certain types of examples, which is an important practical consideration for real-world domains.

It is worth noting that although Figure~\ref{srfgbresults} seems to suggest that the soft-margin performs uniformly better with larger $\alpha$ and $\beta$ values, this is not always the case.  Performance begins to degrade at some point; for example, with $\alpha=100$, $\beta=-100$, weighted-AUC is only 0.4187 on the WebKB benchmark, compared to 0.4892 for standard RFGB, and 0.5011 for soft-RFGB with the optimal settings of $\alpha$ and $\beta$. Similar results are observed in all the domains.  The point at which performance begins to decline appears to be problem-dependent, and is worthy of future study.
%
%

Figure~\ref{lc} plots the learning curves and the standard deviation of soft-RFGB, standard RFGB, and a single relational tree without boosting, in two sample domains: Heart Disease and WebKB.  We varied the number of training examples and averaged the results over four different runs. 
Again, soft-RFGB significantly outperforms the other two algorithms for all the conditions, especially when the number of training examples is small, which signifies that soft-RFGB can efficiently decrease the false negative rate simply via appropriate parameter settings.  This figure also shows that the standard RFGB and no boosting methods suffer from higher variance as compared to soft-RFGB.

\begin{figure*}[htbp]
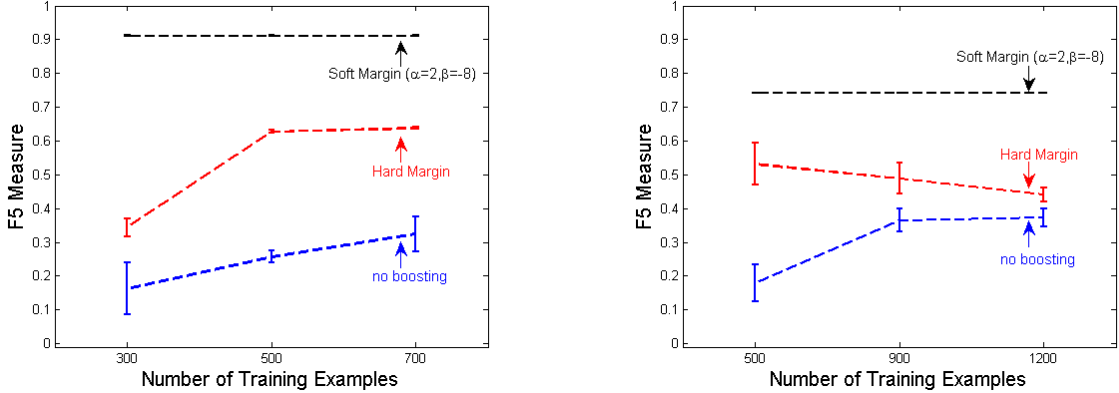

	\begin{minipage}[b]{0.5\textwidth}
		\centering
		\includegraphics[scale=0.5]{LC_heart.png}
	\end{minipage}
	\begin{minipage}[b]{0.5\textwidth}
		\centering
		\includegraphics[scale=0.5]{LC_webkb.png}
	\end{minipage}
	\caption{Sample learning curves, Heart (left) and WebKB (right) data sets.  Error bars indicate standard errors.}
	\label{lc}
\end{figure*}

We also performed experiments to evaluate our initial claim that sub-sampling negative examples to achieve better recall on class-imbalanced datasets runs the risk of increasing variance of the estimator. Here, a re-sampling strategy is used to  sub-sample the negative examples to match the positive examples (1:1 ratio) during learning. For example, on WebKB data,  RFGB with re-sampling achieved weighted-AUC of $0.41$ compared to $0.48$ for RFGB without re-sampling and $0.50$ for soft RFGB; re-sampling $0.43$, RFGB  without re-sampling $0.45$ and our method $0.74$ for $F_5$ measure. Similar results are observed in the other domains.  The key issue with re-sampling is that sub-sampling negatives may lose important examples, which leads to large variance particularly on highly skewed datasets.
To round off the analysis, we also performed over-sampling of positive examples on the heart dataset so that the number of positive examples equals to that of negatives. The weighted AUC was 
0.27 and $F_5$ was 0.82 when using RFGB with the over-sampled positive examples. To put this in perspective, our algorithm achieved 0.35 AUC and 0.91 $F_5$ scores. 

In summary, it can be concluded that our proposed approach faithfully addresses the class imbalance problem in relational data when compared to any under- or over-sampling strategies. 

\section{Discussion and Future Work}

We considered the problem of class-imbalance when learning relational models and adapted the recently successfully relational functional-gradient boosting algorithm for handling this problem. We introduced a soft margin approach that allows for the false positives and false negatives to be considered differently. We then derived the gradients (residues) for each example using this newly defined optimization function and adapted the original algorithm for learning using these residues. We provided the theoretical proof of the convexity of this new optimization function in the functional parameter space.  Finally, we showed empirically that soft-RFGB minimizes the false negative rate without sacrificing overall effectiveness.

This work can be extended in a few interesting directions: 
First, applying this algorithm to learning more complex models (such as RDNs and MLNs) can lead to interesting insights on resulting networks. Second, scaling this algorithm to real large-scale relational data such as EHRs where RFGB has been applied earlier is essential. Finally, applying and learning with this approach to label examples in active learning or allow for some unlabeled examples as in semi-supervised learning are interesting and high-impact future research directions.



\chapter{Statistical Relational Learning for Temporal Data}
\label{chap3}
\ifpdf
\graphicspath{{Chapter3/Chapter3Figs/PNG/}{Chapter3/Chapter3Figs/PDF/}{Chapter3/Chapter3Figs/}}
\else
\graphicspath{{Chapter3/Chapter3Figs/EPS/}{Chapter3/Chapter3Figs/}}
\fi

Modeling structured stochastic processes that evolve over time is an important and challenging task with applications in many fields ranging from surveillance to activity recognition to reliability monitoring of communication networks to treatment planning in biomedicine. Classical AI solutions such as Dynamic Bayesian networks~\cite{murphythesis} discretize the time into fixed intervals and perform the modeling on these intervals. While discretization is often indeed reasonable, there are domains such as medicine in which no natural discretization is available; if we discretize time too finely, the learning problem can quickly become intractable, but if we discretize time too coarsely, we lose information. 
For example, when predicting the occurrence of a disease such as heart attack or stroke, it may not suffice to predict at the level of week or month. However, if we try to predict the occurrence for every second, the modeling problem can quickly become intractable.

Therefore it is not surprising that models over finite spaces but across continuous time have been proposed. The most popular ones fall under the category of Continuous-Time Markov Processes. They are described by an initial distribution over the states of the model and a matrix that specifies the rate of transition between states. Its successor, Continuous-Time Bayesian Networks (CTBNs) make the problem more tractable by factorizing the rate matrix so that conditional independencies can be exploited ~\cite{NodelmanUAI02}. 
Moreover, CTBNs can be induced efficiently from data with an efficient learning method based on random forests which they called mfCTBNs~\cite{WeissNP12}.  The mfCTBN exploits the multiplicative assumption to combine \textit{Conditional Intensity Trees}(CITs) which induced partitions over the joint state space. Each path from the root to the leaf induces a condition that satisfies the splitting criteria specified by the nodes, and the corresponding conditional intensity is defined by the parameter at the leaf of this path. 
Without extensive feature engineering, however, it is difficult---if not impossible---to apply CTBNs to relational domains, in which e.g. there is a varying number of heterogeneous objects and relations among them.
Many of today's datasets, however, are inherently relational and have no natural time slices.
Consider an Electronic Health Record (EHR). It typically contains demographic information, prescription history, lab tests, diagnoses, along with imaging data and possibly in the near future, genetic (SNPs) data as well. Another example is the human-to-X communication network where users typically call many different people, use a multitude of apps, take pictures, listen to music, and so on.
Statistical Relational Learning (SRL) has been proven successful in such domains by combining the power of first-order logic and probability~\cite{srlbook}. 
As far as we are aware, there are no continuous-time SRL approaches. It is possible to model relational CTBNs in 
CTPPL~\cite{pfeffer09}, a general purpose probabilistic programming language for processes over continuous time. However, the use of a relational language allows us to develop a more powerful structure learning approach.

Consequently, we develop {\it Relational Continuous-Time Bayesian Networks} (RCTBNs). They extend SRL towards modeling in continuous time by ``lifting" CTBNs to relational data. The syntax and semantics are based on Bayesian Logic Programs (BLP)~\cite{blp}, and the use of the logical notation allows for an expressive representation that does not fix the number of features in advance yet results in a homogeneous process.
Although already interesting on its own, we go one step further.  
Based on Friedman's functional-gradient boosting~\cite{friedman01}, we develop a non-parametric learning approach for RCTBNs  that simultaneously learns the dependencies between trajectories in the data and  parameters that quantify these dependencies. 
Our extensive experimental evaluation demonstrates that RCTBNs are comparable to state-of-the-art methods on propositional data but can handle relations faithfully, where propositional methods either fail or need to be engineered specifically for each task.

To summarize, we make the following key contributions: 
(1) We present the first continuous-time relational model. (2) We develop a non-parametric learning algorithm for learning these models. (3) We prove the convergence properties of our algorithm. (4) Finally, we show that our algorithm can even model propositional data created by other methods effectively while faithfully modeling relational data. 

We proceed as follows. We first present how to lift CTBNs to the relational case and prove that the resulting RCTBNs are homogeneous. We then show how to learn RCTBNs from data and prove convergence.
Before concluding, we present our experimental results.

\section{Relational Continuous-Time Bayesian Networks}
\label{RCTBN}

In order to better explain the idea of extending the CTBN model to relational domain, the basic principles of BLP will be explained here. Simply put, BLP employs first-order logic rules (called Bayesian logic clauses) to represent the causal influence between variables with the child predicate as clause head and the conjunction of parent predicates as clause body. Each Bayesian logic clause has a conditional probability table as in BN. It adapts the ICI assumption and uses various combining rules (such as Noisy-Or, Noisy-And, Noisy-Max and etc.) to combine multiple Bayesian logic clauses with the same head. 
Besides the combining rules, it also allows aggregation operations when combining the multiple groundings for one Bayesian logic rule. 

Now, consider the following motivating example. It is well known that the probability of developing type 2 diabetes (a hereditary disease) increases with age. Its trajectory can be modeled by CTBNs based on the expected transition time learned from data. 
Due to the genetic disposition, the transition probability of a person's diabetes status depends not only on his/her behavior but also on trajectories of his/her family members' diabetes status. These are in turn dynamic processes. To model relations over time faithfully, we adapt first-order logic syntax. For example, we model the family dependency using two (temporal) predicates:
\begin{align*}
& domain (Diabetes/2, discrete, [true, false]),\\ \nonumber
& domain (FamilyMember/2, discrete, [true, false]). \nonumber
\end{align*}
The domain representation of $Diabetes/2$ can be interpreted as a predicate that has two arguments -- the first argument running over persons and the second argument denoting the continuous time. As a binary predicate, $Diabetes$ takes $discrete$ values of $true/false$. $FamilyMember$ is a binary relation between two persons and is not temporal. Hence, it does not have time as a parameter.

To state that Mary does not have diabetes initially, we use $ Diabetes(mary, t_0). \ominus $ \footnote{ $\ominus$ and $\oplus$ are values that we denote explicitly as against negatives in FOL. This is merely for notational simplicity.}. To denote that she gets diabetes at time $t_i$, we use $Diabetes(mary, t_i). \oplus$.  $FamilyMember(x, y)$ denotes that $x$ is a family member of $y$. A sample data set could just be:\\

\begin{tabular}{|l|l|}
	\hline
	Training examples & Background Knowledge\\ \hline
	$Diabetes(mary, t_0). \ominus $  & $FamilyMember(ann, mary).$  \\ 
	$Diabetes(ann, t_2). \oplus $  & $FamilyMember(eve, mary). $ \\ 
	$Diabetes(tom, t_3). \oplus $  & $FamilyMember(ian, tom). $  \\ 
	$Diabetes(john, t_4). \oplus $ & $FamilyMember(jack, bob). $ \\
	~ & $ FamilyMember(bob, mary).$ \\
	~ & $FamilyMember(tom, mary).$  \\
	\hline	
\end{tabular}\\

It is easy to observe that due to the efficiency of CTBN representations, only the facts where a certain event happens need to be represented explicitly. 

Now, consider the following (probabilistic) rule:
\begin{align*}
& \forall x,t_1, \exists y, t,  \quad Diabetes(x, t_1)\ominus \wedge \\
& FamilyMember(y, x) \wedge Diabetes(y, t)  \oplus \wedge t_1< t <t_2 \\
&  \Rightarrow \quad P(Diabetes(x, t_2) \oplus) = 1- e^{-q_1(t_2-t)} 
\end{align*}
It states that for all the persons ($x$) who do not have diabetes at time $t_1$, if one of his/her family members ($y$) develops diabetes at $t>t_1$, then the probability that $x$ will have diabetes at time $t_2$ follows the exponential distribution specified at the end of the rule (with parameter $q_1$). Another rule is: 
\begin{align*}
& \forall x,t_1, \nexists y, t,  \quad Diabetes(x, t_1)\ominus \wedge \\
& FamilyMember(y, x) \wedge Diabetes(y, t) \oplus \wedge t_1< t <t_2\\
& \Rightarrow \quad P(Diabetes(x, t_2) \oplus) = 1- e^{-q_2(t_2-t_1)}
\end{align*}

This rule contains different conditions from the first rule and says that if no family members develop diabetes in the time interval $t_1$ to $t_2$, the probability that the person will have diabetes is an exponential distribution with parameter $q_2$.

Note that the use of logical notations allows us to compactly define the parameters of the target predicate $Diabetes$ using only $q_1$ and $q_2$. They indicate the conditional intensities of a person getting diabetes when he/she has a family member who has diabetes and correspondingly, when they do not. 
It is clear that, if we had used a propositional CTBN, the dimension of this feature space (the number of CIMs) would be exponential in the number of family members the individual could have. In turn, when the data set is rich in $FamilyMember$ relationships, this can greatly deteriorate the learning efficiency as observed by Weiss et.al (\citeyear{WeissNP12}). Hence, following the standard observations inside SRL~\cite{srlbook}, we exploit the ability to tie parameters using logic. More formally: 
\begin{definition}
	A {\em RCTBN clause} consists of two components: a first-order horn clause that defines the effect of a set of influencing predicates on a target predicate and a conditional intensity matrix (CIM) that models the probability of transition of the target given the influences. 
\end{definition}

\begin{definition}
	A {\em RCTBN program}\footnote{This is akin to Bayesian Logic Programs~\cite{blp}. Whereas, in the case of BLPs, a probabilistic clause consists of a first-order logic clause and a probability distribution associated with it, RCTBNs have a conditional intensity matrix associated with every clause, and the predicates are indexed by time.}
	is a set of RCTBN clauses. 
\end{definition}

Note that there could be multiple clauses with the same head, indicating the target trajectory depends on multiple parent trajectories. Second, since we are in the relational setting, there can be many instances that satisfy a single rule for a single target. For instance, in the above rule, there can be multiple family members (i.e., several instances of $y$) for the current person $x$ and there can be multiple rules for predicting onset of diabetes. 
One is tempted to use combining rules, see e.g.~\cite{crjournal}, for combining the different distributions. However, for RCTBNs, the default amalgamation operation is sufficient to guarantee a homogeneous Markov process. 
We can mathematically prove the following:
\begin{theorem}
	\label{amalg_noisyor}
	Given a set of CIMs associated with multiple RCTBN clauses, the amalgamation operation (addition) on the CIMs will result in a consistent homogeneous Markov process. The amalgamation operation is equivalent to applying the Noisy-Or combining rule to these RCTBN clauses.   
\end{theorem}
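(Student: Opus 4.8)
The plan is to split the statement into two independent parts: (i) the amalgamation (entrywise sum) of the clause CIMs is again a well-formed CIM, so it induces a consistent homogeneous Markov process; and (ii) this amalgamated CIM is exactly the one obtained by applying Noisy-Or to the clauses.

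For (i) I would first recall the defining axioms of a CIM: off-diagonal entries $q_{x_i^k x_i^{k'}|pa_i^j}\geq 0$ and each row sums to zero, i.e. $q_{x_i^k|pa_i^j}=\sum_{k'\neq k} q_{x_i^k x_i^{k'}|pa_i^j}$. Given clauses $c_1,\dots,c_m$ with the same head, fix a parent configuration $pa_i^j$ and let $Q^{(1)},\dots,Q^{(m)}$ be the corresponding CIMs; define $Q=\sum_{\ell=1}^m Q^{(\ell)}$ entrywise. Then every off-diagonal entry of $Q$ is a sum of non-negative numbers, hence non-negative, and every row of $Q$ sums to $\sum_\ell 0=0$, so $Q$ is again a valid CIM; moreover each $Q^{(\ell)}$ is constant once $pa_i^j$ is fixed, so $Q$ is constant in time. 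Collecting one such $Q$ for each variable and each parent configuration yields an ordinary CTBN, which by the fact already recalled in the background (any CTBN amalgamates into a joint intensity matrix over the global state space and thereby defines a homogeneous Markov process) gives a consistent homogeneous Markov process. This settles the first sentence; the only thing to note about ``consistency'' is that, unlike a general combining rule, entrywise addition cannot violate the CIM axioms.

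For (ii) I would use the race-of-exponential-clocks semantics. Fix the target in state $x_i^k$ under $pa_i^j$; clause $c_\ell$ contributes an independent exponential clock with rate $q^{(\ell)}_{x_i^k|pa_i^j}$, and conditioned on $c_\ell$ firing the target jumps to $x_i^{k'}$ with probability $\theta^{(\ell)}_{x_i^k x_i^{k'}|pa_i^j}=q^{(\ell)}_{x_i^k x_i^{k'}|pa_i^j}/q^{(\ell)}_{x_i^k|pa_i^j}$; the target transitions as soon as any clock fires. By independence, $P(\text{no transition in }[0,t])=\prod_\ell e^{-q^{(\ell)}_{x_i^k|pa_i^j}t}=e^{-(\sum_\ell q^{(\ell)}_{x_i^k|pa_i^j})t}$, which is the survival function with exit rate $\sum_\ell q^{(\ell)}_{x_i^k|pa_i^j}$, i.e. the exit rate of $Q$; and since the clocks are continuous, two fire simultaneously with probability zero, matching the amalgamation convention that simultaneous-change entries are zero. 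Conditioned on a transition, $c_\ell$ is the firing clause with probability $q^{(\ell)}_{x_i^k|pa_i^j}/\sum_{\ell'} q^{(\ell')}_{x_i^k|pa_i^j}$, so the destination is $x_i^{k'}$ with probability $\sum_\ell \frac{q^{(\ell)}_{x_i^k|pa_i^j}}{\sum_{\ell'} q^{(\ell')}_{x_i^k|pa_i^j}}\,\theta^{(\ell)}_{x_i^k x_i^{k'}|pa_i^j}=\frac{\sum_\ell q^{(\ell)}_{x_i^k x_i^{k'}|pa_i^j}}{\sum_\ell q^{(\ell)}_{x_i^k|pa_i^j}}$, which is precisely the off-diagonal entry of $Q$ divided by its exit rate. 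Hence the race semantics reproduce $Q$ exactly, in both timing and destination. Finally I would identify the race semantics with Noisy-Or: ``the target transitions out of $x_i^k$ by time $t$'' is the disjunction over clauses of the independent events ``$c_\ell$ has fired by $t$'', each with probability $1-e^{-q^{(\ell)}_{x_i^k|pa_i^j}t}$, so $P(\text{transition by }t)=1-\prod_\ell e^{-q^{(\ell)}_{x_i^k|pa_i^j}t}$, which is the Noisy-Or formula $1-\prod_\ell(1-P_\ell)$; for a binary target (the diabetes example) this is literally Noisy-Or, and for multi-state targets it is its natural continuous-time generalization, each cause independently proposing a transition with ties broken by the race.

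The main obstacle I anticipate is part (ii) beyond the exit rate: the row-sum and non-negativity bookkeeping in part (i) is routine, and matching the exponential timing is immediate, but one must check that the multinomial destination distribution $\theta$ delivered by the Noisy-Or/race semantics agrees with the one read off $Q$, and one must pin down the correct multi-state generalization of Noisy-Or so that the equivalence is a statement about full CIMs rather than just exponential waiting times.
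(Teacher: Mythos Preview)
Your proposal is correct and follows the same line as the paper, which in fact gives only a one-sentence intuition rather than a full proof: ``the amalgamation process is additive in the parameters'' and ``the Noisy-Or operation on the exponential distribution adds the conditional intensities as well.'' Your part (i) makes the first claim precise via the CIM axioms, and your part (ii) makes the second claim precise via the race-of-exponential-clocks calculation; both are exactly the content the paper gestures at, so you have supplied the details the paper omits rather than taken a different route.
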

 
The intuition is that the amalgamation process is additive in the parameters. Similarly, we can show that the Noisy-Or operation on the exponential distribution adds the conditional intensities as well.  
That is, simple addition of the intensities is our default combination function for the intensities (this is equivalent to a Noisy-Or combining rule for probabilities). 
Nevertheless, developing  equivalents of other combination functions such as weighted-mean and Noisy-And~\cite{crjournal} remains an interesting direction for future research. Moreover, aggregators as used in other SRL models such as PRMs~\cite{prm} can also be adopted easily in our learning formalism by allowing the search to consider aggregators of certain predicates.

Finally, it is interesting to note that unlike the directed model case of SRL models (for example BLPs), there is no necessity for explicitly checking for cycles. 
This is due to the fundamental assumption of the CTBN model: \textit{two variables cannot transition out of their current state at the same instant}. So, the arcs in the transition model represent the dependencies over the trajectories, which always point from the previous time towards the current time. 
More precisely, $X_i \rightleftarrows X_j$ in the transition model means $X_i(t)\rightarrow X_j(t+\bigtriangleup t)$ and $X_j(t)\rightarrow X_i(t+\bigtriangleup t)$. Note that $\bigtriangleup t$ could be different in these two relations as, unlike DBNs which have a predefined sampling rate before learning the model, CTBNs model transitions over continuous time. Thus, a target variable could transit anytime based on a specific CIM during the period after the parent set changes into its current joint state and before any of them transits out of the current state. As there are no cycles in a BN within any single time slice, chain rule of BNs holds in CTBNs. 

\section{Learning RCTBNs} 
We now  develop a non-parametric approach for learning RCTBNs from data.
Note that if the training data is complete, then during the entire trajectory of $\textbf{X}$, the values of $\textbf{Pa}(\textbf{X})$ are known. In turn, it is explicit at any time point which conditional intensity matrix is dominating the dynamics of \textbf{X}. Thus, the likelihood of the parameters can be decomposed into the product of the likelihood of the individual parameters~\cite{NodelmanUAI03}. This parameter modularity along with the structure modularity of CTBNs allow the parameters of CTBNs to be learned locally. Besides, we notice that the likelihood function is also decomposable over individual transitions along the trajectories because of the memoryless property of the exponential distribution. These lay out the theoretical foundation for applying the non-parametric RFGB approach. 

\begin{figure}[htbp!]
	\centering
	\includegraphics[scale=0.2]{trajectories4.png}	
			\begin{tabular}{|l|l|}
		\hline
		Examples & Facts\\ \hline
		$CVD(John, t_0). \ominus $  & $BP(John, t_0, low).$  \\ 
		$CVD(John, t_1). \ominus $  & $Diabetes(John, t_0, false). $ \\ 
		$CVD(John, t_2). \ominus $  & $BP(John, t_1, high). $  \\ 
		$CVD(John, t_3). \ominus $ & $BP(John, t_2, low). $ \\
		$CVD(John, t_4). \ominus $ & $ Diabetes(John, t_3, true).$ \\
		$CVD(John, t_5). \oplus $ & $BP(John, t_4, high).$  \\
		$CVD(John, t_6). \ominus $ & $BP(John, t_6, low).$ \\
		\hline
	\end{tabular}
	\caption{Example trajectories of John.\label{trajectories}}
\end{figure}

Note that each training example is a segment of a trajectory that includes {\em not more than one} transition of the target predicate. For example, Figure~\ref{trajectories} shows the trajectories of $Blood Pressure(BP)$, $Diabetes$ and $CVD$ for patient John. Let us assume that the goal is to predict the transition of $CVD$ from low to high. As can be observed, every feature/predicate transits at different time points (fundamental assumption of CTBNs). This can be represented in a logical notation as we show in the bottom half of Figure~\ref{trajectories}.

Now, we first define the transition distribution from the perspective of each segment:
\begin{equation}
\hspace{-1mm}\begin{array}{lr}
\text{\bf positive examples } (x_i^k \rightarrow x_i^{k'}): \\
\quad P(x_i^k \rightarrow x_i^{k'}| pa_i^j) = \\
\left( 1-\exp\left ( -q_{x_i^k|pa_i^j}T[x_i^kx_i^{k'}|pa_i^j]\right) \right )\times \frac{q_{x_i^kx_i^{k'}|pa_i^j}}{q_{x_i^k|pa_i^j}}\\
\text{\bf negative examples }(x_i^k \rightarrow x_i^{k''}): \\
\quad P(x_i^k \rightarrow x_i^{k''}| pa_i^j) =\\
\left( 1-\exp \left(-q_{x_i^k|pa_i^j}T[x_i^kx_i^{k''}|pa_i^j]\right) \right )\times \frac{q_{x_i^kx_i^{k''}|pa_i^j}}{q_{x_i^k|pa_i^j}}\\
\text{\bf negative examples } (x_i^k \rightarrow x_i^k ):  \\
\quad P(x_i^k \rightarrow x_i^k| pa_i^j) =  \exp\left(-q_{x_i^k|pa_i^j}T[x_i^kx_i^k|pa_i^j]\right)
\end{array}
\label{probs}
\end{equation}
where $T[x_i^kx_i^{k'}|pa_i^j]$ is the residence time of $X_i$ starting from being in its $k$-{th} state till transiting to its $k'$-{th} state given its parents being in their $j$-{th} joint state. 
We define the function value $\varphi_{x_i^kx_i^{k'}|pa_i^j} \in (-\infty, \infty)$ such that $q_{x_i^kx_i^{k'}|pa_i^j} = e^{\varphi_{x_i^kx_i^{k'}|pa_i^j}} \in \left[ 0, \infty \right)$. 
Also observe that since $q_{x_i^kx_i^{k'}|pa_i^j}$ monotonically increases with $\varphi_{x_i^kx_i^{k'}|pa_i^j}$, optimization w.r.t. $q_{x_i^kx_i^{k'}|pa_i^j}$ for CTBNs could be addressed by maximizing the loglikelihood function w.r.t. $\varphi_{x_i^kx_i^{k'}|pa_i^j}$. 

To summarize, instead of maximizing the likelihood function by calculating the \textit{sufficient statistics}---$\hat{T}[{x_i^k|pa_i^j}]$, the total amount of time that $X_i=x_i^k$ while $Pa(X_i)=pa_i^j$, and $M[x_i^k \rightarrow x_i^{k'}|pa_i^j]$, the total number of transitions---aggregated over the trajectories~\cite{NodelmanUAI03}, we  optimize the global loglikelihood function by maximizing the individual likelihood of all the segments each of which has a weight (i.e. $\varphi_{x_i^kx_i^{k'}|pa_i^j}$) attached to it. 

In the case of binary valued target predicates, the target variable has to transit to the other state upon transiting out of the current state, so $q_{x_i^kx_i^{k'}|pa_i^j} (k' \neq k) = q_{x_i^k|pa_i^j}$, hence $e^{\varphi_{x_i^kx_i^{k'}|pa_i^j}} (k' \neq k) = e^{\varphi_{x_i^k|pa_i^j}}$, and the negative examples now only contain one case where no transition happened. So, the gradient function could be derived as: 
\begin{align} 
& \frac{\partial LL}{\partial \varphi_{x_i^kx_i^{k'}|pa_i^j}} = \frac{\partial LL}{\partial q_{x_i^kx_i^{k'}|pa_i^j}} \cdot e^{\varphi_{x_i^kx_i^{k'}|pa_i^j}} \nonumber \\
& \hspace{-4mm}\begin{cases}
\text{\bf positive examples} \  (x_i^k \rightarrow x_i^{k'}) : \\
\frac{-\exp\left(-e^{\varphi_{x_i^k|pa_i^j}}T[x_i^kx_i^{k'}|pa_i^j]\right)\left(-e^{\varphi_{x_i^k|pa_i^j} }T[x_i^kx_i^{k'}|pa_i^j]\right)}{1-\exp \left(-e^{\varphi_{x_i^k|pa_i^j}}T[x_i^kx_i^{k'}|pa_i^j]\right)}  \\
\text{\bf negative examples} \  (x_i^k \rightarrow x_i^k): \\
-T[x_i^kx_i^k|pa_i^j]\cdot \exp(\varphi_{x_i^kx_i^{k'}|pa_i^j})\\\quad\quad\quad\quad\quad=-e^{\varphi_{x_i^k|pa_i^j}}T[x_i^kx_i^k|pa_i^j]\\
\end{cases}\hspace{-5mm}
\label{grad2}
\end{align}

Based on its definition in \eqref{probs}, the transition probability for positive examples: $prob^+ = 1-\exp(-e^{\varphi_{x_i^k|pa_i^j}}T[x_i^kx_i^{k'}|pa_i^j]) $, so the gradients function can be represented in the terms of $prob^+$ as:
\begin{equation} 
\frac{\partial LL}{\partial \varphi_{x_i^kx_i^{k'}|pa_i^j}} =
\begin{cases}
\text{\bf positive examples} \ (x_i^k \rightarrow x_i^{k'}) : \\
\frac{-(1-prob^+)\ln(1-prob^+)}{prob^+}\\
\text{\bf negative examples} \ (x_i^k \rightarrow x_i^k ) : \\
\ln(1-prob^+)\\ 
\end{cases}
\label{gradf}
\end{equation}
This presents the weight updates of the examples in each iteration based on the current model. 

Plugging \eqref{gradf} into the RFGB results in a convergent learner. More precisely, 
\begin{theorem}
	There exist global maxima for the loglikelihood function.
\end{theorem}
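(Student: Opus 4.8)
The plan is to mirror the concavity argument used for soft-margin RFGB in Section~\ref{softboost}: I would show that the loglikelihood is a concave function of the functional parameters $\varphi_{x_i^k x_i^{k'}|pa_i^j}$, whence a global maximum is guaranteed. The first step is to exploit the decomposition already established above --- by parameter and structure modularity of CTBNs together with the memorylessness of the exponential distribution, $LL$ is a sum of per-segment contributions, and each contribution depends on a single functional parameter (for a binary-valued target, $\varphi_{x_i^k|pa_i^j}$). Hence the Hessian of $LL$ is block structured and joint concavity reduces to concavity, in the single real variable $\varphi$, of the two elementary per-segment terms: a non-transition (``negative'') segment contributes $-e^{\varphi}\,T[x_i^k x_i^k|pa_i^j]$, while a transition (``positive'') segment contributes $\ln\!\bigl(1-\exp(-e^{\varphi}\,T[x_i^k x_i^{k'}|pa_i^j])\bigr)$, with residence time $T>0$.

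Next I would compute the second derivative (the analogue of the ``Hessian'' of Section~\ref{softboost}) of each term. The non-transition term is immediate: $\tfrac{d^2}{d\varphi^2}(-e^{\varphi}T)=-e^{\varphi}T<0$. For the transition term the clean route is the substitution $w:=e^{\varphi}T>0$; by \eqref{gradf} its $\varphi$-gradient equals $w/(e^w-1)$, so differentiating once more (with $dw/d\varphi=w$) gives a second derivative proportional to $w\cdot\frac{(e^w-1)-we^w}{(e^w-1)^2}$. It then only remains to check that the numerator $e^w(1-w)-1$ is negative for every $w>0$, which follows since it vanishes at $w=0$ and has derivative $-we^w<0$. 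Consequently each per-segment term --- and therefore $LL$ itself --- is concave on the whole functional space, which yields the claimed global maximum. As in the soft-RFGB case one can additionally read off the stationary point by setting \eqref{gradf} to zero: $\ln(1-prob^+)=0$ on non-transition segments and $(1-prob^+)\ln(1-prob^+)=0$ on transition segments, so the fit at a leaf is driven toward $prob^+\in\{0,1\}$ according to whether its segments are non-transitions or transitions, in agreement with the observed outcomes.

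The main obstacle is precisely the concavity of the transition-segment term $\ln(1-\exp(-e^{\varphi}T))$: it is a composition of a concave, increasing function $u\mapsto\ln(1-e^{-uT})$ with the convex map $\varphi\mapsto e^{\varphi}$, a combination to which the textbook composition rules do not apply (the composite can easily fail to be concave for other choices of the outer function), so the sign of the second derivative must be verified by the explicit calculation above rather than quoted. A secondary, cosmetic point is that for a hypothetical leaf whose segments are all transitions (resp.\ all non-transitions) the supremum is approached as $\varphi\to+\infty$ (resp.\ $\varphi\to-\infty$) rather than attained at a finite value --- the exact analogue of the predicted probability tending to $1$ (resp.\ $0$) in Section~\ref{softboost} --- so ``global maxima'' is to be understood in that same sense; at any leaf containing both a transition and a non-transition segment the maximizer is finite. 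Finally I would note that, since every boosting iteration fits the gradients \eqref{gradf} of this concave objective, plugging \eqref{gradf} into RFGB inherits the usual functional-gradient convergence behaviour.
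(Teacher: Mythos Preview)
Your proposal is correct and follows essentially the same route as the paper: decompose the loglikelihood into per-segment contributions and show each is concave in the single functional parameter $\varphi$ by computing the second derivative and verifying it is nonpositive. The only cosmetic difference is the substitution --- the paper works through $prob^+$ (using $prob^+ + \ln(1-prob^+)\le 0$) whereas you work through $w=e^{\varphi}T$ (using $e^{w}(1-w)-1<0$), which are equivalent reparameterizations of the same inequality.
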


\begin{proof}
	For positive examples, the Hessian function w.r.t. $\varphi_{x_i^kx_i^{k'}|pa_i^j}$ equals to:
	\begin{align*}
	H^+ & = \frac{\partial \ \ \nabla^+}{\partial \ \ \varphi_{x_i^kx_i^{k'}|pa_i^j} } = \frac{\partial}{\partial \ \ prob^+} \frac{-(1-prob^+)\ln(1-prob^+)}{prob^+} * \frac{\partial \ \ prob^+}{\partial \ \ \varphi_{x_i^kx_i^{k'}|pa_i^j} } \\
	& = \frac{[\ln(1-prob^+) + 1] * prob^+ + (1-prob^+)\ln (1-prob^+)}{(prob^+)^2} * \\
	& \quad   [e^{-q_{x_i^k|pa_i^j}T[x_i^kx_i^{k'}|pa_i^j]} * q_{x_i^k|pa_i^j}T[x_i^kx_i^{k'}|pa_i^j]] \\
	& = \frac{prob^+ + \ln(1-prob^+)}{(prob^+)^2} * [-\ln(1-prob^+) (1-prob^+)]
	\end{align*}
	Since $prob^+ \in [0,1]$, we have $\ln(1-prob^+) \leq 0$, $(1-prob^+) \geq 0$ and $[prob^+ + \ln(1-prob^+)] \leq 0 $. Hence, $H^+$ is non-positive for the interval of $prob^+$ as [0, 1]. 
	
	For negative examples, the Hessian function w.r.t. $\varphi_{x_i^kx_i^{k'}|pa_i^j}$ equals to:
	\begin{align*}
	H^- & = \frac{\partial \ \ \nabla^-}{\partial \ \ \varphi_{x_i^kx_i^{k'}|pa_i^j} } = \frac{-1}{1-prob^+}* [-\ln(1-prob^+) (1-prob^+)] \\
	& = \ln(1-prob^+)
	\end{align*}
	$H^-$ is non-positive everywhere for the interval $prob^+ \in [0,1]$.\\
	
	Hence, the log-likelihood function of $\varphi_{x_i^kx_i^{k'}|pa_i^j}$ is a concave function and there exists the global optimum. 
\end{proof}

\begin{theorem}
	RFGB for CTBNs converges when the predictions reach the true values. In other words, RFGB for CTBNs will converge to global maxima. 
\end{theorem}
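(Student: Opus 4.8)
The plan is to reuse the two-move template that proved the soft-margin convergence result and the concavity theorem just above: first characterize the fixed points of the functional-gradient iteration, then invoke concavity to conclude that such a fixed point is a global maximum. As in those earlier arguments, RFGB is taken to have converged (in the limit of the boosting iterations) exactly when the functional gradient $\partial LL/\partial\varphi_{x_i^kx_i^{k'}|pa_i^j}$ in~\eqref{gradf} vanishes on every training segment, so I would analyze the positive-example and negative-example gradients separately. For a negative segment the gradient is $\ln(1-prob^+)$, which is $0$ iff $prob^+=0$ and is strictly negative for $prob^+\in(0,1]$; since $prob^+ = 1-\exp(-e^{\varphi}T)$ is increasing in $\varphi$, the iteration monotonically drives $prob^+$ down to $0$, i.e.\ to the true transition probability of a ``no transition'' segment.

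For a positive segment the gradient is $-(1-prob^+)\ln(1-prob^+)/prob^+$, which on the open interval $prob^+\in(0,1)$ has a positive numerator and a positive denominator and is therefore strictly positive, so the iteration always increases $prob^+$ and the only candidate limits are the two endpoints. The step I expect to be the main obstacle is ruling out the spurious endpoint $prob^+=0$: there the expression is an indeterminate $0/0$, and I would settle it with a short one-sided limit, substituting $u=prob^+\to 0^+$ and using $\ln(1-u)=-u+o(u)$ to see the gradient tends to $1$, so $prob^+=0$ is not a stationary point. At $prob^+\to 1^-$, writing $u=1-prob^+\to 0^+$ and using $u\ln u\to 0$ shows the gradient tends to $0$; hence the unique convergence point of a positive segment is $prob^+=1$, again the true value.

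Finally, the preceding theorem established that the log-likelihood is concave in each $\varphi_{x_i^kx_i^{k'}|pa_i^j}$ (both $H^+$ and $H^-$ were shown non-positive on $[0,1]$), and because the objective decomposes over segments --- by the parameter and structure modularity of CTBNs together with the memoryless property of the exponential distribution, as noted at the start of this section --- the aggregate log-likelihood is concave on the (convex) parameter space, so every stationary point is a global maximum. Combining this with the previous two paragraphs: at convergence every segment's predicted transition probability equals its observed value, that configuration is the stationary point of the iteration, and by concavity it is a global maximum of the log-likelihood. Therefore RFGB for CTBNs converges precisely when the predictions reach the true values, and that limit is a global maximum, which is the claim.
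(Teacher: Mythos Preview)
Your proposal is correct and follows essentially the same approach as the paper: show that the per-segment gradients are sign-definite (non-negative for positives, non-positive for negatives) and vanish only at the true labels ($prob^+=1$ and $prob^+=0$ respectively), then appeal to concavity for the global-maximum claim. The only technical difference is that the paper computes the positive-example limit at $prob^+\to 1$ via the substitution $a=1-prob^+$ and L'H\^{o}pital's rule, whereas you use the Taylor expansion $\ln(1-u)=-u+o(u)$ and the standard limit $u\ln u\to 0$; your version is arguably tidier and also explicitly rules out the spurious endpoint $prob^+=0$ for positive segments, which the paper leaves implicit.
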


\begin{proof}
	For positive examples, let $a=1-prob^+$, based on Equation~\ref{gradf} and $L'H\hat{o}pital's$ rule
	\begin{equation*}
	\underset{prob^+ \to 1}{\lim} \nabla^+ = \underset{a \to 0^+}{\lim} \frac{-a\ln a}{1-a} = \underset{a \to 0^+}{\lim} \frac{\ln a}{1-\frac{1}{a}} = \underset{a \to 0^+}{\lim} \frac{1/a}{1/a^2} = 0^+
	\end{equation*}
	For negative examples, 
	\begin{equation}
	\underset{prob^+ \to 0}{\lim} \nabla^- = \underset{prob^+ \to 0}{\lim} \ln(1-prob^+) = 0^-
	\end{equation}	
	As shown above, $\nabla^+ \in [0, +\infty)$ and converges when the prediction equals the true value of positives (i.e. $prob^+=1$) while $\nabla^- \in (-\infty, 0]$ and converges when the prediction equals the true value of negatives (i.e. $prob^+=0$).
\end{proof} 	

The algorithm for learning RCTBNs is summarized in Algorithm~\ref{algo_rctbn}, where we use $OP$ to denote observed predicates, $TP$ to denote the set of target predicate transitions that we are interested in and $CT$, the current target transition (for example, low to high of CVD).
In each gradient step, we first generate examples (denoted as $Tr$) based on the current $\varphi$ function. We use standard off-the-shelf relational regression tree learning approaches~\cite{BlockeelR98} to get a regression tree that fits these gradients $Tr$ (function $FitRelRegressionTree$). Then, we add the learned tree $\Delta^{CT}_{m}$ to the model and repeat this process. Algorithm \ref{eg-algo_rctbn} describes the example generation process for RCTBN-RFGB. Since any non-transition can be used to generate a negative example, we can potentially have infinite negative examples (for every time point). To prevent skew and scalability issues, we generate negative examples only at time points when a certain predicate other than the target predicate has a transition. Algorithmically, we iterate over all the transitions in the trajectories. If the transition is over the target predicate, we generate a regression example using the gradients for positive examples from Equation \ref{gradf}. If the target predicate does not transit in this segment, we treat it as a negative example, and compute the gradient accordingly. 

\begin{algorithm}[t!]	
	\caption{RCTBN-RFGB: RFGB for RCTBNs}
	\label{algo_rctbn}
	\begin{algorithmic}[1]		
		\Function{RCTBNBoost}{$TP$, $OP$}
		\For {$CT$ in $TP$} \Comment{Iterate through target predicates}
		\State $\varphi^{CT}_0 := $ Initial function \Comment{Empty tree}
		\For {$1 \leq m \leq M$} \Comment{M gradient steps}
		\State $Tr := GenExamples(CT, \varphi^{CT}_{m-1},OP)$
		\State  \Comment{Generate examples}
		\State $\Delta^{CT}_{m} := FitRelRegressionTree(Tr,OP)$
		\State  \Comment{Fit trees to gradients}
		\State $\varphi^{CT}_{m} := \varphi^{CT}_{m-1} + \Delta^{CT}_{m}$  \Comment{Update model}
		\EndFor
		\EndFor
		\EndFunction
	\end{algorithmic}
	
\end{algorithm}
\begin{algorithm}[t!]
	\caption{Example generation for RCTBNs}
	\label{eg-algo_rctbn}
	\begin{algorithmic}[1]
		\Function{GenExamples}{$CT, \varphi, OP$}
		\State $Tr := \emptyset$
		\For { $tr_i \in OP$} \Comment{Iterate over all transitions in $OP$}
		\State Compute the residence time $T$ of the target predicate 
		\State Compute $prob^+ = f(T, \varphi )$ \Comment{Transition probability}
		\If { $tr_i \in CT $ }
		\State $\Delta(tr_i) = \frac{-(1-prob^+)\ln(1-prob^+)}{prob^+}$ 
		\State \Comment{Compute gradient of positive example}
		\Else
		\State $ \Delta(tr_i) = \ln(1-prob^+)$ 
		\State \Comment{Compute gradient of negative example}
		\EndIf
		\State $ Tr := Tr \, \cup <tr_i, \, \Delta(tr_i)>$ 
		\State \Comment{Update relational regression examples}
		\EndFor
		\\
		\Return $Tr$ \Comment{Return regression examples}
		\EndFunction
	\end{algorithmic}
\end{algorithm}

\section{Experiments for Relational CTBN}

Our intention is to investigate whether RCTBNs truly generalize CTBNs to relational domains by addressing the following two questions:
\begin{description}	
	\item{\bf(Q1)} How does RCTBN compare to CTBNs on propositional domains?
	\item{\bf(Q2)} Can RCTBN-RFGB learn relational models faithfully where CTBNs would fail or need extensive feature engineering?
\end{description}
To this aim we evaluated RCTBN-RFGB on propositional and relational data sets and compared it to state-of-the-art where possible.
More precisely, we employed three standard CTBN datasets from prior literature, i.e. the Drug model~\cite{NodelmanUAI03}, MultiHealth model~\cite{WeissNP12} and S100 model~\cite{WeissNP12}. In the drug domain, the data is generated for length of 10, the target predicate is $JointPain(X, T)$ and other variables include the trajectories of $Eating$, $Fullstomach$, $Hungry$, $Uptake$, $Concentration$, $Barometer$, and $Drowsy$. 
The MultiHealth data has the target predicate $Atherosclerosis(X, T)$ and trajectories of $Gender$, $Smoking$, $Age$, $Glucose$, $BMI$, $BP$, $ChestPain$, $AbnormalECG$, $MI$, $Troponin$, $ThrombolyticTherapy$, $Arrhythmia$, $Stroke$, and $HDL$, for a length of 100. The S100 domain has 100 binary valued trajectories of length 2 and the target is $S100(X, T)$.

We compared RCTBN-RFGB learning approach with the state-of-the-art propositional CTBN learning approach-- mfCTBN~\cite{WeissNP12} on these data. Note that these are {\em specialized domains created by the authors of the respective papers} (Nodelman et al. for Drug and Weiss et al. for MultiHealth and S100). Our goal is to demonstrate that even for data that are not created synthetically by our model, we can still learn a comparably accurate model from them. Moreover, our formalism can handle structured inputs/outputs while these prior work cannot handle them without significant feature engineering.
We ran 5-fold cross validation to generate the learning curve of the loglikelihood and AUC-ROC on the test set.  

\begin{figure}[htbp!]
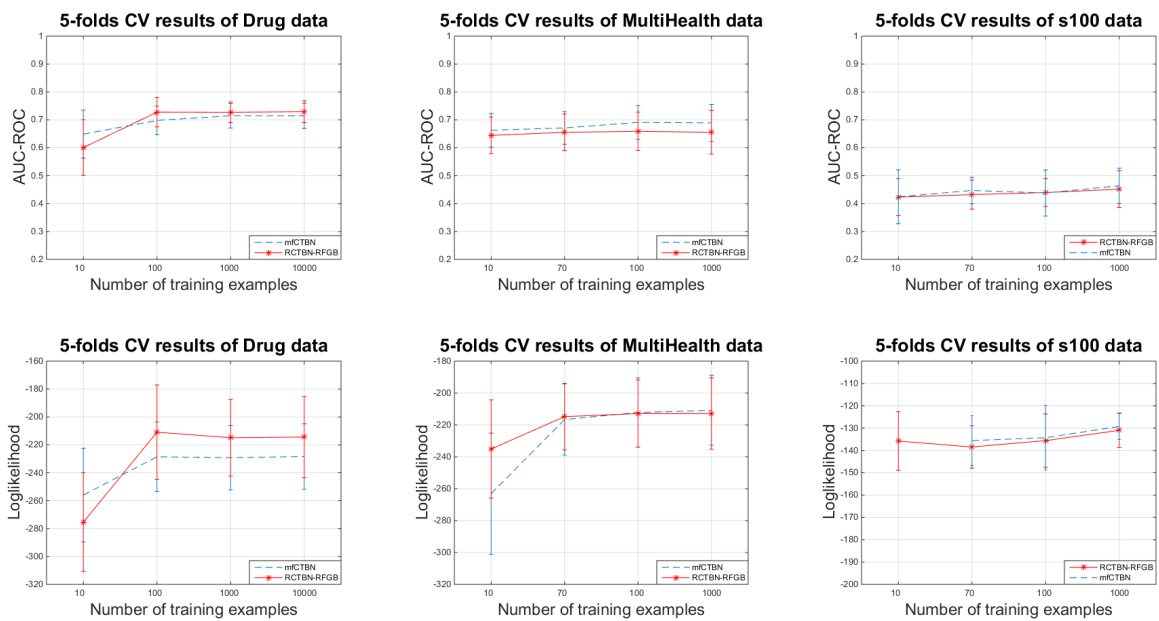

	
	\begin{minipage}[b]{0.32\textwidth}
		\centering
		\includegraphics[scale=0.34]{Drug_aucroc.png}
	\end{minipage}
	\begin{minipage}[b]{0.32\textwidth}
		\centering
		\includegraphics[scale=0.34]{mhm_aucroc.png}
	\end{minipage}
	\begin{minipage}[b]{0.32\textwidth}
		\centering
		\includegraphics[scale=0.34]{S100_aucroc.png}
	\end{minipage}\\
	
	\begin{minipage}[b]{0.32\textwidth}
		\centering
		\includegraphics[scale=0.34]{Drug_ll.png}
	\end{minipage}
	\begin{minipage}[b]{0.32\textwidth}
		\centering
		\includegraphics[scale=0.34]{mhm_ll.png}
	\end{minipage}
	\begin{minipage}[b]{0.32\textwidth}
		\centering
		\includegraphics[scale=0.34]{S100_ll.png}
	\end{minipage}
	\caption{ {\bf propositional domains (Q1):} The results show no statistically significant difference between RCTBNs and state-of-the-art methods.\label{results2}}
\end{figure}

\begin{figure}
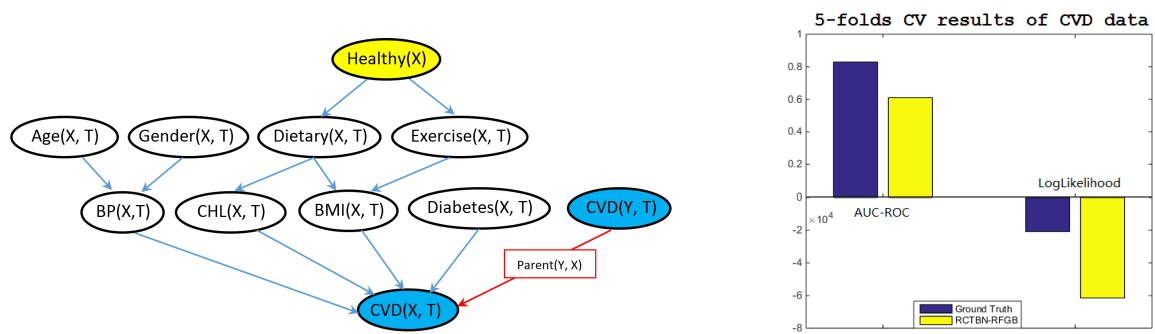

	\begin{minipage}[b]{0.6\textwidth}
		\centering
		\includegraphics[scale=0.2]{Model.png}
	\end{minipage} 
	\begin{minipage}[b]{0.4\textwidth}
		\centering
		\includegraphics[scale=0.4]{CVD.png}	
	\end{minipage}
	\caption{ {\bf relational experiment (Q2):} {\bf (left frame)} RCTBN model of CVD, \label{model} {\bf (right frame)} Results. The results show that RCTBNs can capture relations faithfully. As there is no propositional model, the results are compared against ``ground truth". Note that the ``ground truth"  does not have AUC = 1.0 because when we generated the data using forward sampling, the values are sampled based on the transition intensity calculated by $P=1-e^{-q*t}$. 
		The AUC of ground truth is calculated based on these generating transition probabilities instead of the determinant values (i.e. 0 for positive examples and 1 for negative examples).	\label{results1}}
\end{figure}
{\bf (Q1) RCTBN compares well to CTBN on propositional domains:} 
As Figure~\ref{results2} show, our method is comparable (no statistically significant difference at p=0.05) to the best propositional method i.e. mfCTBN on data created by mfCTBN model. This answers \textbf{Q1} affirmatively -- RCTBN is comparable to state-of-the-art algorithms for modeling propositional dynamic processes.

{\bf (Q2) RCTBN can capture relations faithfully where CTBNs would fail or need extensive engineering: }
The  relational data is generated by a cardiovascular disease(CVD) model shown in Figure~\ref{results1} (left) over a duration of 50 years. We used a latent variable, $Healthy(X)$ to generate the transition patterns of dietary and exercise habit for individuals with healthy/unhealthy lifestyle. $CVD(X, T)$ is our target predicate whose CIMs are conditioned on both the features related to the target individual and the CVD status of his/her parents. We used forward sampling~\cite{Nodelmanthesis} to generate 200 trajectories for individuals whose parents are absent in the data (i.e. sample their CVD transitions based on CIMs conditioned only on white parent nodes in Figure~\ref{results1}(left)) and 200 trajectories for individuals who have one or both parents in the data (CIMs conditioned on white and blue parent nodes in Figure~\ref{model}(left)). 
Our goal is to show that our RCTBN-RFGB learning approach could not only achieve comparable performance on propositional data but can also model the relationships faithfully.

For this dataset, we present the results averaged over 5 runs compared against the ground truth in Figure~\ref{results1} (right). As can be observed, our approach is comparable to the ground truth. A more compelling result is that we are able to retrieve the structure of the CVD predicate, i.e., the learned tree includes the CVD status of the parents. 
This answers  \textbf{Q2} affirmatively, i.e., our proposed approach does model relations faithfully\footnote{Note that our work is the first dynamic relational model to handle continuous time. Other formalisms exist to handle continuous variables but not continuous time which in our case is not a variable by itself but rather an argument of all the predicates. Hence, our relational baseline is the ground truth.}.

\section{Discussion and Future Work}
We proposed the first relational model that can faithfully model continuous time. Our model is inspired from the successes in the CTBNs and the SRL communities. Besides, we adapt and refine the leading learning algorithm for SRL models to the temporal case. The key innovation is that the conditional intensity is defined as a monotonic function of the RFBG function value, hence, maximizing w.r.t. function values can maximize the loglikelihood of the trajectories. Our resulting algorithm is non-parametric and can learn the dependencies and the parameters together. Our initial results are promising and show that we can indeed recover the original structure and faithfully model the relational trajectories.

Our current learning approach is discriminative and we are only learning for a single query. Extending this to generative learning and modeling jointly over multiple predicates is our next step. More applications of the algorithm to complex tasks such as Electronic Health Records, gene experimental databases, network flow modeling, and 
monitoring the reliability of human-to-X communication systems, among others, with asynchronous events are exciting directions. Adaptation of different types of combination functions is another important direction. Finally, improving inference is essential to scale up our learning algorithm to a large number of tasks and is also an interesting direction for future research.

\chapter{Statistical Relational Learning for Hybrid Data}
\label{chap4}
\ifpdf
    \graphicspath{{Chapter4/Chapter4Figs/PNG/}{Chapter4/Chapter4Figs/PDF/}{Chapter4/Chapter4Figs/}}
\else
    \graphicspath{{Chapter4/Chapter4Figs/EPS/}{Chapter4/Chapter4Figs/}}
\fi


\label{hplm}
EHR data is composed with heterogeneous records each of which describes a specific aspect of health condition and is collected according to its own standard and stored in its unique form. The value types of those variables vary among the data from different facets of health care.  For example, laboratory or clinical tests usually have continuous values, such as the blood pressure, cholesterol level, blood glucose, etc.;  the occurrence of certain procedure, treatment or symptom is recorded as boolean values;  there are also multinomial variables, such as gender, exercise level, clinical narratives described with International Classification of Disease-9 (ICD‑9) or ICD‑10; count data is generated when the binary variables are aggregated over the historical EHR, which could be the number of times a specific symptom the patients have experienced, the amount of a certain procedure has been performed, the count of a particular therapy or medication the patients have taken, the amount of cigarette or alcohol intake, the cumulative hours of exercise or REM sleep, etc. It is a challenge regarding how to standardize such multi-formats data into an integrated and consistent form so it could be described, mined and reasoned over by a hybrid machine learning model. In this chapter, a hybrid probabilistic logic model is introduced and an efficient statistical relational learning approach for it is proposed. The following sections will be organized as: first, the related work focusing on SRL inference and learning in hybrid domains are introduced; then, the proposed learning approach based on Relational Functional Gradient Boosting (RFGB) is presented from the perspective of exponential family; finally, the gradient update functions for this hybrid RFGB approach are derived for different value types of variables.  

\section{Related Work}

There are some work focusing on the inference and parameter learning problems in hybrid probabilistic logic models, such as the one for hybrid Markov logic networks~\cite{WangHML}, for hybrid probabilistic relational models~\cite{NarmanBKJ10}, for Hybrid Logical Bayesian Networks~\cite{Ravkic2012} and Continuous Bayesian Logic Program (CBLP)~\cite{AdaptiveBLPs},  the inference algorithms for piecewise-constant hybrid Markov networks (CHML) and piecewise-polynomial hybrid Markov networks (PHMN)~\cite{BellePB15}, for Relational Hybrid Models~\cite{ChoiA12}, as well as the inference procedure and parameter learning in the context of Probabilistic Logic Programming languages such as PRISM~\cite{Islam12IPL,Islam12Corr} and ProbLog~\cite{GutmannJR10}.
 
Wang et al.~\citeyear{WangHML} extended Markov Logic networks into the hybrid domain by allowing features to be continuous properties and functions over them. Markov networks represent the joint distribution of a set of variables as a log-linear function of arbitrary feature functions. Its first-order extension is Markov Logic networks (MLNs) which is a collection of first-order clauses and associated weights reflecting the strength of the constraints. A Markov network can be constructed by grounding the MLN template with one node per ground atom and one feature per ground clause and the probability of a world is an exponential function of the weighted sum of all true groundings of MLN clauses: $P(x) = \frac{1}{Z} \exp (\sum_i w_in_i(x))$. The Hybrid MLN (HMLN)~\cite{WangHML} generalizes feature counts to feature sums and defines a family of log-linear models as $P(x) = \frac{1}{Z} \exp (\sum_i w_is_i(x))$, where $s_i$ is the sum of the values of all groundings of the $i_th$ clause in x. HMLNs represent the first-order formulas that include numeric variables as a set of soft constraints (equality or inequality), with an implied Gaussian penalty for diverging from them.
Weights can be learned using the same algorithms as for MLNs~\cite{Richardson06,Lowd2007EWL}.

Kersting et al.~\citeyear{AdaptiveBLPs} extended the Bayesian Logic Programs to handle continuous random variables.
A Continuous Bayesian Logic Program (CBLP) consists of logical as well as quantitative components. The logical component is a (finite) set of Bayesian clauses which define the conditional independences among the Bayesian atoms whose grounded correspondence are random variables (discrete or continuous valued). The quantitative component is composed of the conditional probability density functions associated to Bayesian clauses and combing rules associated to Bayesian predicates, e.g. Noisy-Or for discrete random variables, linear regression model for Gaussian variables. In this work, they proposed to use a gradient ascent approach to learn the optimal parameters for mean values of continuous variables. However, it failed to present the explicit formulae of the gradient functions after substituting the conditional Gaussian distribution functions into the likelihood function, neither the process for learning the weight parameters when there are multiple continuous parent variables.   

Ravkic et al. presented their preliminary work on extending Logical Bayesian Networks (LBNs) to hybrid domains~\cite{Ravkic2012}. LBN is an extension of Bayesian networks to relational domains by combining the first-order logic with the three components of BNs (i.e. random variables, conditional dependences and conditional probability distributions) and adding an additional component -- \textit{logic program} which describes the background knowledge for specific problems or worlds. Ravkic's work allows continuous variables to appear as the head of the conditional dependency clauses. However, it has a restriction on the continuous parents, which is common in standard hybrid BNs.  


Choi et al.~\citeyear{ChoiHA10} proposed the Relational Continuous Model (RCM) which allow for par-factors with continuous valued variables, but restricting φ to Gaussian potentials. Choi et al. later extended the framework to allow inference in hybrid relational domains~\cite{ChoiA12}, which they called Relational Hybrid Models (RHMs).

Statistical Relational Learning approaches which use the (finite) First-Order Logic (FOL) as description language usually employ MAX-SAT solvers which convert the maximum a posterior (MAP) inference problem into maximizing the total weight of the satisfied formulae. From a different angle, Teso et al.~\citeyear{TesoSP13} proposed Learning Modulo Theories (LMT) which rely on Satisfiability Modulo Theories (SMT) -- an alternative class of formal languages. LMT converts the original inference problem to minimizing the total cost of the unsatisfied rules by employing different cost functions for formulae with variables of different value types (e.g. linear cost for formulae with only numerical variables). Then the formed optimization problem can be solved by Optimization Modulo Theories (OMT) solvers which can perform parameter learning and inference in mixed Boolean-numerical domains. 

Although there are numerous research on extending the statistical relational learning approaches to mixed discrete-continuous domains, there is little work having been done for the structure learning of hybrid probabilistic logic models.
However, for the real world problems, the necessary expertise and knowledge needed to construct the logic structure of the model is usually lacking and we need to gain such knowledge through learning from the data. 

The only work so far which tries to address the structure learning problems in hybrid relational domains is the work of Ravkic et al.~\citeyear{Ravkic2015}. Their work on hybrid RDNs tackled structure learning by employing  a greedy search approach with a decomposable score function. However, it only employed the multinomial distribution assumption for discrete variables. We, on the other hand, also considered the Poisson probability distribution assumption for discrete dependent variables to fulfill various needs when modeling heterogeneous medical data. Our learning approach built upon the state-of-the-art relational learning approach RFGB which is more efficient compared with learning one complicated global model for each dependent variable.


\section{Proposed Approach}

In this section, a statistical relational learning approach is presented, which can perform the structure learning and parameter learning simultaneously in hybrid relational domains for variables of any value type that belongs to the exponential family. 
  
\subsection{Graphical Models as Exponential Family}

Exponential Family~\cite{Erling1970} provides a general way to represent the probability distribution as a density $p$  continuous w.r.t some base measure $h(x)$~\cite{Wainwright08}. This base measure $h(x)$ could be the counting measure or the ordinary Lebesgue measure on $\mathbb{R}$. Many common distributions belong to the exponential family, such as the normal, exponential, gamma, Dirichlet, Bernoulli, Poisson, multinomial (with fixed number of trials) and etc. Generically, there could be numerous different distributions that are consistent with the observed data, so this problem can be converted to find the distribution that faithfully fits the data while has the maximal Shannon entropy~\cite{Wainwright08}. Hence, the exponential family is defined as a family of probability distributions whose probability mass function (discrete variables) or probability density function (continuous variables) can be represented with a parameterized collection of density functions:
\begin{equation*}
p_X(x|\theta) = h(x)\exp\{ \langle \theta, \phi(x) \rangle - A(\theta) \},
\end{equation*}
where $\phi(x)$ is a collection of \textit{sufficient statistics}, $\theta$ is an associated vector of \textit{exponential parameters} and $A(\theta)$ is the \textit{log partition function} which ensures $p_X(x|\theta)$ being properly normalized.

The target variable (discrete or continuous) is denoted as $y$ while the other variables are denoted as set $\textbf{X}$ whose instances are denoted as $\textbf{x}$. As in graphical models, each node corresponds to one random variable, nodes and variables will be used interchangeably in the following sections. Parent nodes denote the variables having dependence correlations with the target variable which are captured by the graphical models. The subscript $i$ indexing the examples indicates the $i^{th}$ example. The subscript $_{y;\textbf{X}}$ indicates the corresponding sufficient statistics (i.e. $\phi(y;\textbf{X})$) or exponential parameters (i.e. $\theta_{y;\textbf{X}}$) in a certain graphical model which could be a linear function such as in Logistic Regression and Naive Bayes or a non-linear models such as Decision Trees, Neural Networks and Poisson Dependency Networks.  So, the exponential family in terms of one example can be re-written as:
\begin{equation}
\label{ef}
 p_{y_i;\textbf{x}_i}(y_i;\textbf{x}_i|\theta) =  h(y_i;\textbf{x}_i)\exp\{ \langle \theta_{y_i;\textbf{x}_i},  \phi(y_i;\textbf{x}_i) \rangle - A(\theta_{y_i;\textbf{x}_i}) \}.
\end{equation} 

As a general form of most probabilistic graphical models,  exponential family provides a better perspective from which the fundamental connection between their theories as well as learning algorithms can be revealed. Hence, the theories are illuminate from the perspective of exponential families and the gradient update formulae of RFGB are derived for three probability distributions of exponential family -- multinomial distribution, Poisson distribution and Gaussian distribution so a hybrid probabilistic logic model can be learned in mixed discrete-continuous domains.

\subsection{RFGB for Multinomial Distributed Variables}

\textit{Multinomial distribution} is the most common assumption machine learning models employed for discrete valued variables. It assumes that each sample is independently extracted from an identical categorical distribution, and the numbers of samples falling into each category follow a {\em multinomial distribution}. Figure~\ref{distributions} shows an example data set of $N$ samples where variable $X$ can take one of five possible values $\{x^1, x^2, x^3, x^4, x^5\}$. Multinomial distribution assumption (Figure~\ref{distributions} left) states that the occurrences of the samples belonging to categories  $\{x^1, x^2, x^3, x^4, x^5\}$ follow a multinomial distribution with parameters $\{p_1, p_2, p_3, p_4, p_5\}$ where $\sum_{i=1}^5 p_i =1$, which means that when $N$ is large enough, $N_i$-- the occurrence of $X=x^i$ can be modeled as a normal distribution with mean value $ N \cdot p_i$. 

\begin{figure}[htbp!]
	\centering
	\begin{minipage}[b]{0.4\textwidth}
		\includegraphics[scale = 0.25]{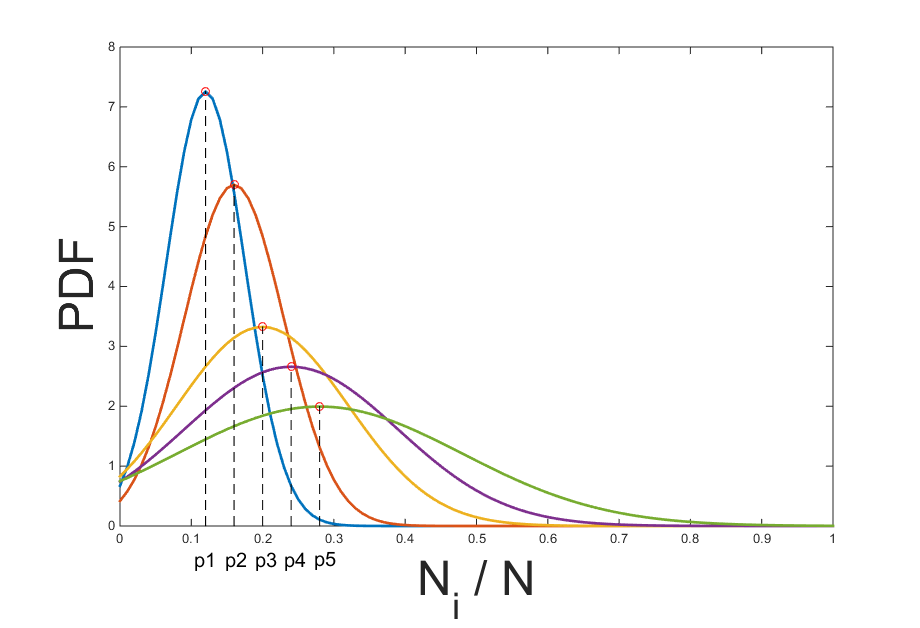}
		\label{multi}
	\end{minipage}
	\begin{minipage}[b]{0.4\textwidth}
		\includegraphics[scale = 0.4]{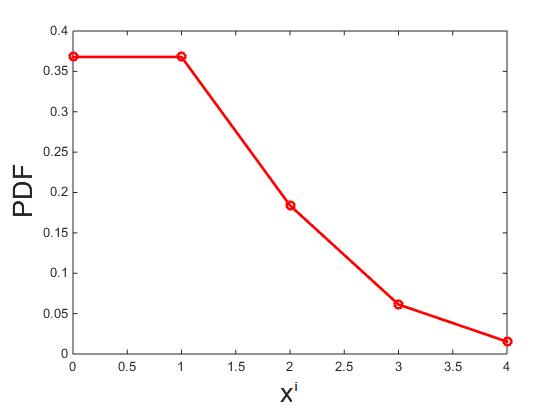}
		\label{poisson}
	\end{minipage}
	\caption{Two Examples of Exponential Family Distributions - (left) Multinomial, (right) Poisson.}
	\label{distributions}
\end{figure}

Now, let us consider the objective function for optimizing the log-likelihood under the multinomial distribution assumption and derive the corresponding gradient. Assuming that given the observations in the learned discriminative model, the target variable follows a categorical distribution with parameters $\{p_{y^1}, ..., p_{y^r}\}$ where $\sum_k p_{y^k} = 1$ and $\{ y^1, ..., y^r\}$ indicate possible values $y$ can take, the probability distribution over the number of all possible values is a multinomial distribution:
\begin{equation*}
p(N_{y^1}, ..., N_{y^r}; p_{y^1}, ..., p_{y^r}) = \frac{\Gamma (\sum_k N_{y^k} +1)}{\prod_k \Gamma (N_{y^k}+1)} \prod_{k=1}^r p_{y^k}^{N_{y^k}},
\end{equation*}
where $y \in \{0, \ \ 1, \ \ ..., \ \ r-1\}$ and $r-1$ is the number of possible values $y$ can take.
According to the definition in Equation~\ref{ef}, the parameters in the general form of exponential family can be calculated through the natural parameters in this multinomial distribution by:
\begin{align*}
& \phi(y;X) = [N_{y^1}, \ \ ..., \ \ N_{y^r} ]^T  \\
& \theta_{y;X} = [\ln p_{y^1}, \ \ ..., \ \ \ln p_{y^r}]^T \\
& h(y;X) = \frac{(\sum_{k=1}^{r} N_{y^k})!}{\prod_{k=1}^{r} N_{y^k} !}  \\
& A(\theta_{y;X}) = 0
\end{align*}
The gradients of the logarithm of the exponential family probability function w.r.t. the natural parameter $p_{y^k}$ equals to:
\begin{align}
\label{multie}
\frac{\partial LL}{\partial p_{y^k} } & = \frac{\partial}{\partial p_{y^k}} \left [\ln h(y;X) + \langle \theta_{y;X}, \ \ \phi(y;X) \rangle - A(\theta_{y;X}) \right ]  \nonumber \\
& = \langle \frac{\partial \theta_{y;X}}{\partial p_{y^k}},\ \ \phi(y;X) \rangle - \frac{\partial A(\theta_{y;X}) }{\partial p_{y^k}} \nonumber \\
& = \frac{1}{ p_{y^k}}  N_{y^k}. 
\end{align}

Because of the simplex constraint $\sum_k p_{y^k} = 1$ and the value range of a probability $0\leq p_{y^k} \leq 1$, in order to optimize the parameters separately while keeping these constraints satisfied, the multinomial regression models are usually converted into the form of a log-linear model with a new set of parameters $\{\beta_{y^1},  \ \ ..., \ \  \beta_{y^r} \}$ which satisfies $p_{y^k} = \frac{e^{\beta_{y^k}}}{\sum_{k'} e^{\beta_{y^{k'}} }}$.

To learn such log-linear models with multinomial variables with functional gradient boosting, one needs to view this probability distribution assumption from the perspective of a single example $(y_i; \textbf{x}_i)$ each of which attached with a function parameter $\psi_{y_i; \textbf{x}_i}$. Instead of optimizing the log-likelihood function w.r.t. the natural parameters $\{\beta_{y^1},  \ \ ..., \ \  \beta_{y^r} \}$, functional gradient boosting optimizes the objective function w.r.t. the function parameter $\psi_{y_i; \textbf{x}_i}$ for every example. 

Under the multinomial distribution assumption, the probability of a single example is defined as a function of parameter $\psi_{y_i; \textbf{x}_i}$ as: $p_{y_i; \textbf{x}_i} = \frac{e^{\psi_{y_i; \textbf{x}_i}}}{\sum_{k} e^{\psi_{y_i=y^k; \textbf{x}_i} }}$.
The probability for this example to have value $x_k$ can be treated as a categorical distribution, in which case the sufficient statistics in the general exponential family form~\ref{ef} can be derived as  $\phi(y_i;\textbf{x}_i) = [I_{y_i=y^1}, \ \ ..., \ \ I_{y_i=y^r} ]^T$ where $I$ is the indicator function. So the optimization problem w.r.t. $p_{y^k}$ can be converted to an optimization problem w.r.t. $\psi_{y^k}$. The gradients of the logarithm of function~\ref{ef} w.r.t. functional parameter $\psi_{y_i;\textbf{x}_i}$ can be derived by substituting Equation~\ref{multie} as:
\begin{align}
\frac{\partial LL_i}{\partial \psi_{y_i = y^k; \textbf{x}_i} } & =  \frac{1}{ p_{y_i; \textbf{x}_i}}  \cdot \frac{\partial p_{y_i; \textbf{x}_i}}{\partial \psi_{y_i = y^k; \textbf{x}_i}} \nonumber \\
& = I(y_i = y^k; \textbf{x}_i) - p(y_i = y^k; \textbf{x}_i).  
\end{align}

This is the gradient function employed by regression models with multinomial distribution assumption when learning with RFGB. At each step, a regression function is learned to fit these gradients.

\subsection{RFGB for Poisson Distributed Variables}

Among the discrete variables, there is an important class -- count variable which is ubiquitous in almost every domain and contains vast amount of information and great potential available for machine learning and data mining techniques to explore. 
Since such count data typically only take the non-negative integer values, there are numerous machine learning algorithms which model such count variable as categorical values~\cite{KollerFriedman09} by assuming the underlying distribution to be a multinomial one. Their applications covers various domains such as the prediction on prevalence of certain disease~\cite{luo15}, tourism demand forecasting~\cite{Hong2011},  online shopper demands prediction~\cite{FerreiraLS16}, etc. The key issue with this assumption is that it places an upper bound on the counts (i.e. limited number of categories) of the target. While in many domains, it is possible to obtain a reasonable upper bound, in other domains, this requires guessing a good one. 

Another common assumption that most prior work assumed is that of the Gaussian assumption over these count variables. However, low counts can lead to the left tail of the Gaussian distribution predict negative values for these counts. Subsequently, there are research directions~\cite{YangRAL12, YangRAL13a, pdn15} that model such data from a different perspective by assuming that these counts are distributed according to a Poisson. Such models have been extended to learn dependencies among multiple Poisson variables as Poisson graphical models. 
 
{\em Poisson distribution} (as shown in Figure~\ref{distributions} right)                                                                                                                                                                                                                                                                                                                                                                                                                                                                                                                                                                                                                   states that each sample is a instance of a count variable following Poisson distribution with one parameter $\lambda$.  
The probability distribution function of Poisson distribution is as following:
\begin{equation*}
p(y) = \frac{\lambda^y e^{-\lambda}}{y!},
\end{equation*}
where $y \in \{0, \ \ 1, \ \ 2\ \ ... \}$. The components in the general exponential family form~\ref{ef} can be converted from the point-wise perspective to functions of the natural parameters of Poisson distribution as:
\begin{align*}
& \phi(y_i;\textbf{x}_i) = y_i  \\
& \theta_{y_i;\textbf{x}_i} = \ln \lambda_{y_i;\textbf{x}_i} \\
& h(y_i;\textbf{x}_i) = \frac{1}{y_i !} \\
& A(\theta_{y_i;\textbf{x}_i}) = e^{\theta_{y_i;\textbf{x}_i}} = \lambda_{y_i;\textbf{x}_i}
\end{align*}

Then, the point-wise gradients of the log-likelihood function w.r.t. the natural parameter $\lambda_{y_i;\textbf{x}_i}$ can be derived as:
\begin{align}
\label{poisson1}
\frac{\partial LL_i}{\partial \lambda_{y_i;\textbf{x}_i} } & =  \frac{\partial}{\partial \lambda_{y_i;\textbf{x}_i}} \left [\ln h(y_i;\textbf{x}_i) + \langle \theta_{y_i;\textbf{x}_i}, \ \  \phi(y_i;\textbf{x}_i) \rangle - A(\theta_{y_i;\textbf{x}_i}) \right ] \nonumber \\
& = \langle \frac{\partial \theta_{y_i;\textbf{x}_i}}{\partial \lambda_{y_i;\textbf{x}_i}},\ \  \phi(y_i;\textbf{x}_i) \rangle - \frac{\partial A(\theta_{y_i;\textbf{x}_i}) }{\partial \lambda_{y_i;\textbf{x}_i}} \nonumber \\
& = \frac{1}{ \lambda_{y_i;\textbf{x}_i}}  y_i - 1. 
\end{align}

Here, $\lambda_{y_i;\textbf{x}_i}$ is the Poisson parameter for the probability distribution of the target variable of the $i^{th}$ example given the learned discriminative Poisson model and the values of $y_i$'s parent nodes $X_i$, which are embodied by the subscript $_{y_i;\textbf{x}_i}$.

For relational functional gradient boosting, we define the function parameters as $\psi_{y_i;\textbf{x}_i}$ so that $\lambda_{y_i;\textbf{x}_i} = e^{\psi_{y_i;\textbf{x}_i}}$, in order to satisfy the constraints on $\lambda$ which is $\lambda >0$. Since exponential is a monotonic increasing function, the optimal Poisson parameters $\lambda$ can be found by boosting the log-likelihood function with respect to the function parameters $\psi$. Based on Equation~\ref{poisson1}, the gradients of the log-likelihood function of \ref{ef} w.r.t. functional parameter $\psi_{y_i;\textbf{x}_i}$ is derived as:
\begin{align}
\label{poisson2}
\frac{\partial LL_i}{\partial \psi_{y_i;\textbf{x}_i} } &  = \frac{\partial LL_i}{\partial \lambda_{y_i;\textbf{x}_i} } \cdot \frac{\partial \lambda_{y_i;\textbf{x}_i}}{\partial \psi_{y_i;\textbf{x}_i} } \nonumber  \\
& = (\frac{1}{ \lambda_{y_i;\textbf{x}_i}}  y_i - 1) \cdot  e^{\psi_{y_i;\textbf{x}_i}} = y_i - \lambda_{y_i;\textbf{x}_i}. 
\end{align}
Equation~\ref{poisson2} is the gradient function used by the additive RFGB to optimize a discriminative Poisson model.



\subsection{RFGB for Gaussian Distributed Variables}
Now, Let us turn to the continuous variables which are commonly being assumed as normally distributed. \textit{Gaussian distribution} assumes that a continuous variable follows the probability density function:
\begin{equation}
p(y) = \frac{1}{\sqrt {2\pi}\sigma} e ^{-\frac{(y-\mu)^2}{2 \sigma^2}}
\end{equation}
where $\mu$ and $\sigma$ are parameters of Gaussian distribution and indicate the mean and standard deviation of the Gaussian distribution respectively. 

From the point-wise perspective, the parameters in the general exponential family form~\ref{ef} can be calculated by functions of natural parameters $\mu_{y_i;\textbf{x}_i}$ and $\sigma_{y_i;\textbf{x}_i}$ :
\begin{align*}
& \phi(y_i;\textbf{x}_i) = [y_i, \ \ y_i^2]^T  \\
& \theta_{y_i;\textbf{x}_i} = [\frac{\mu_{y_i;\textbf{x}_i}}{\sigma_{y_i;\textbf{x}_i}^2}, \ \ -\frac{1}{2\sigma_{y_i;\textbf{x}_i}^2} ]^T \\
& h(y_i;\textbf{x}_i) = \frac{1}{\sqrt {2\pi}} \\
& A(\theta_{y_i;\textbf{x}_i}) = \frac{\mu_{y_i;\textbf{x}_i}^2}{2\sigma_{y_i;\textbf{x}_i}^2} + \ln \sigma_{y_i;\textbf{x}_i}
\end{align*}

Then, the point-wise gradients of the log-likelihood function w.r.t. the natural parameter $\mu_{y_i;\textbf{x}_i}$ can be derived as:
\begin{align}
\label{gaussian1}
\frac{\partial LL_i}{\partial \mu_{y_i;\textbf{x}_i} } & =  \frac{\partial}{\partial \mu_{y_i;\textbf{x}_i}} \left [\ln h(y_i;\textbf{x}_i) + \langle \theta_{y_i;\textbf{x}_i},\ \   \phi(y_i;\textbf{x}_i) \rangle - A(\theta_{y_i;\textbf{x}_i})  \right ] \nonumber \\
& = \langle \frac{\partial \theta_{y_i;\textbf{x}_i}}{\partial \mu_{y_i;\textbf{x}_i}},\ \  \phi(y_i;\textbf{x}_i) \rangle - \frac{\partial A(\theta_{y_i;\textbf{x}_i}) }{\partial \mu_{y_i;\textbf{x}_i}} \nonumber \\
& = \frac{1}{ \sigma_{y_i;\textbf{x}_i}^2}  (y_i - \mu_{y_i;\textbf{x}_i})
\end{align}

The point-wise gradients of the log-likelihood function w.r.t. the natural parameter $\sigma_{y_i;\textbf{x}_i}$ can be derived as:
\begin{align}
\label{gaussian2}
\frac{\partial LL_i}{\partial \sigma_{y_i;\textbf{x}_i} } & =  \frac{\partial}{\partial \sigma_{y_i;\textbf{x}_i}} \left [ \ln h(y_i;\textbf{x}_i) + \langle \theta_{y_i;\textbf{x}_i}, \ \  \phi(y_i;\textbf{x}_i) \rangle - A(\theta_{y_i;\textbf{x}_i}) \right ]  \nonumber \\
& = \langle \frac{\partial \theta_{y_i;\textbf{x}_i}}{\partial \sigma_{y_i;\textbf{x}_i}},\ \  \phi(y_i;\textbf{x}_i) \rangle - \frac{\partial A(\theta_{y_i;\textbf{x}_i}) }{\partial \sigma_{y_i;\textbf{x}_i}} \nonumber \\
& = \frac{-2\mu_{y_i;\textbf{x}_i}}{\sigma_{y_i;\textbf{x}_i}^3} \cdot y_i + \frac{1}{\sigma_{y_i;\textbf{x}_i}^3} \cdot y_i^2 + \frac{\mu_{y_i;\textbf{x}_i}^2}{\sigma_{y_i;\textbf{x}_i}^3} - \frac{1}{\sigma_{y_i;\textbf{x}_i}} \nonumber \\
& = \frac{1}{ \sigma_{y_i;\textbf{x}_i}^3}  (y_i - \mu_{y_i;\textbf{x}_i})^2 - \frac{1}{\sigma_{y_i;\textbf{x}_i}}
\end{align}

Here, $\sigma_{y_i;\textbf{x}_i}$ and $\mu_{y_i;\textbf{x}_i}$ are the point-wise parameters of the Gaussian distribution. Since there is no limit on the value range of Gaussian parameters (i.e. $\mu \in (-\infty, \infty)$, $\sigma \in (-\infty, \infty)$), it is reasonable to define the functional parameters as $\psi^{\mu}_{y_i;\textbf{x}_i} = \mu_{y_i;\textbf{x}_i}$ and $\psi^{\sigma}_{y_i;\textbf{x}_i} = \sigma_{y_i;\textbf{x}_i}$. The gradients of the log-likelihood function of \ref{ef} w.r.t. the functional parameters are equal to:
\begin{align}
& \frac{\partial LL_i}{\partial \psi^{\mu}_{y_i;\textbf{x}_i} }  = \frac{1}{ {\psi^{\sigma}_{y_i;\textbf{x}_i} }^2}  (y_i - \psi^{\mu}_{y_i;\textbf{x}_i}) \\
& \frac{\partial LL_i}{\partial \psi^{\sigma}_{y_i;\textbf{x}_i} }  = \frac{1}{ {\psi^{\sigma}_{y_i;\mathbf{x}_i}}^3}  (y_i - \psi^{\mu}_{y_i;\textbf{x}_i})^2 - \frac{1}{\psi^{\sigma}_{y_i;\textbf{x}_i}}
\end{align} 

Hence, finding the optimal parameters in the feature space can be converted to optimizing the log-likelihood function in the function space by learning a relational regression function to fit the summation of those gradient values $\psi^{\mu}_{y_i;\textbf{x}_i}$ and $\psi^{\sigma}_{y_i;\textbf{x}_i}$ in each iteration.
Similar as in the relational Multinomial models and relational Poisson models, the subscript $_{y_i;\textbf{x}_i}$ indexes the $i^{th}$ example's corresponding functional parameters, it actually indicates the condition of the parameters, i.e.  the current estimation of mean $\psi^{\mu}$ or variance $\psi^{\sigma}$ of the target variable $y_i$ given the learned discriminative relational Gaussian model and the values of $y_i$'s parent nodes $X_i$. Note that, $\psi^{\mu}_{y_i;\textbf{x}_i}$ and $\psi^{\sigma}_{y_i;\textbf{x}_i}$ convey not only the quantitative parameters of the learned model but also the qualitative logic structures of the relational model in each iteration. 

The process of the proposed RFGB learning approach for hybrid probabilistic logic models is shown in Algorithm~\ref{hrfgbalgo}. The key difference from the standard RFGB lies in the functions for generating examples (i.e. \textit{GenGaussianEgs}, \textit{GenPoissonEgs} and \textit{GenMultiNEgs}) which follow different gradient functions under different probability distribution assumptions for different value types.

\begin{figure}[htbp!]
	\renewcommand\figurename{Algorithm}
	\caption{Additive RFGB for Learning Hybrid Probabilistic Logic Models}
	\label{hrfgbalgo}
	\begin{algorithmic}[1]
		\Function{HPLM-ARFGB}{$Data$}
		\For {$1 \leq t \leq T$}  
		\Comment{Iterate through T predicates}
		\For {$1 \leq m \leq M$} 
		\Comment{Iterate through M gradient steps}
		\Switch {$ValueType(t)$}\Comment{Generate examples}
		\Case{$Gaussian$} 
		\State $S_t := $\textsc{GenGaussainEgs}$(t, Data, F^t_{m-1})$
		\EndCase
		\Case{$Poisson$}
		\State $S_t := $\textsc{GenPoissonEgs}$(t, Data, F^t_{m-1})$
		\EndCase
		\Case{$MultiN$}
		\State $S_t := $\textsc{GenMultiNEgs}$(t, Data, F^t_{m-1})$
		\EndCase
		\EndSwitch
		
		\State $\Delta_{m}(t):=$\textsc{FitRegressTree}$(S_t, t)$ 
		\Comment{Regression Tree learner}
		\State $F_m^t := F_{m-1}^t + \Delta_{m}(t)$ 
		\Comment{Update model}
		\EndFor
		\State $P( y^t | \mathbf{x}^t ) \propto F_M^t(y^t)$ 
		\EndFor
		\EndFunction
		
		\vspace{8mm}
		\Function{FitRegressionTree}{$S$}
		\State Tree := createTree($P(X)$)
		\State Beam := \{root(Tree)\}
		\While {$numLeaves(Tree) \le L$}
		\State Node := popBack(Beam)\\
		\Comment{Node w/ worst score}
		\State C := createChildren(Node) \\
		\Comment{Create children}
		\State BN := popFront(Sort(C, S)) \\
		\Comment{Node w/ best score}
		\State addNode(Tree, Node, BN)
		\State \Comment{Replace Node with BN}
		\State insert(Beam, BN.left, BN.left.score) 
		\State insert(Beam, BN.right, BN.right.score)
		\EndWhile \\
		\Return Tree
		\EndFunction
			
	\end{algorithmic}
\end{figure}

\begin{figure}[htbp!]
	\begin{algorithmic}[1]
		
		\Function{GenGaussianEgs}{$t,Data,F$}
		\State $y := grounded \ \ target \ \ predicate \ \ t$
		\State $\mathbf x := grounded \ \ other \ \ predicates $		
		\State $\mathbf S := \emptyset$
		\For { $1 \leq i \leq N$} 
		\Comment{Iterate over all examples}
		\State $\Delta_{\mu}(y _i;\mathbf{x}_i) := \frac{(y_i - F^{\mu} ({y_i;\textbf{x}_i}) )}{ {F^{\sigma}({y_i;\textbf{x}_i}) }^2}   $
		\State $\Delta_{\sigma}(y_i;\mathbf{x}_i) := \frac{1}{ {F^{\sigma} ({y_i;\mathbf{x}_i}}) ^3}  (y_i - F^{\mu} ({y_i;\textbf{x}_i}) )^2 - \frac{1}{F^{\sigma} ({y_i;\textbf{x}_i}) } $ 
		\State $ \mathbf S := \mathbf S \cup {[(y_i), \Delta_{\mu}(y_i;;\mathbf{x}_i); (y_i), \Delta_{\sigma}(y_i;;\mathbf{x}_i) ]}$ \\
		\Comment{Update relational regression examples}
		\EndFor
		\\
		\Return $\mathbf S$ \Comment{Return regression examples}
		\EndFunction
		
		\vspace{8mm}
		\Function{GenPoissonEgs}{$t, Data,F$}
		\State $y := grounded \ \ target \ \ predicate \ \ t$
		\State $\mathbf x := grounded \ \ other \ \ predicates $
		\State $ S := \emptyset$
		\For { $1 \leq i \leq N$} 
		\Comment{Iterate over all examples}
		\State $\Delta(y_i;\mathbf{x}_i) := y_i - e^{F ({y_i;\textbf{x}_i}) } $ 
		\State $ S :=  S \cup {[(y_i), \Delta(y_i;;\mathbf{x}_i)]}$ 
		\Comment{Update relational regression examples}
		\EndFor
		\\
		\Return $S$ \Comment{Return regression examples}
		\EndFunction
		
		\vspace{8mm}				
		\Function{GenMultiNEgs}{$t, Data,F$}
		\State $y := grounded \ \ target \ \ predicate \ \ t$
		\State $\mathbf x := grounded \ \ other \ \ predicates $
		\State $\mathbf S := \emptyset$
		\For { $1 \leq i \leq N$} 
		\Comment{Iterate over all examples}
		\For {$1 \leq k \leq K$}
		\Comment{Iterate over all possible values of $y$}
		\State Compute $P(y_i = y^k | \textbf{x}_i)$\\
		\Comment {Probability of an example being the $k^{th}$ value of $y$}
		\State $\Delta_{k}(y_i;\mathbf{x}_i) := I(y_i = y^k; \textbf{x}_i) - P(y_i = y^k; \textbf{x}_i) $
		\EndFor
		\State $ \mathbf S := \mathbf S \cup {[(y_i), \Delta_1(y_i;;\mathbf{x}_i); (y_i), \Delta_2(y_i;;\mathbf{x}_i); ... ; (y_i), \Delta_K(y_i;;\mathbf{x}_i) ]}$ \\
		\Comment{Update relational regression examples}
		\EndFor
		\\
		\Return $\mathbf S$ \Comment{Return regression examples}
		\EndFunction
		
	\end{algorithmic}
\end{figure}			

\newpage
\subsection{Extensions}

The previous section presents the gradient function for hybrid RFGB when the parent predicates are discrete values. This section presents the way to extend the proposed learning approach to handle the continuous valued parents. Symbol $\mathbf X_G$ is used to indicate continuous parents and $\mathbf X_D$ for discrete valued parents. 

\textbf{Multinomial variables with mix boolean-numeric parents}: when the target predicate follows multinomial distribution and the parent variables are mix boolean-numeric. The SoftMax function can be used to calculate the conditional probability for the target variable given other grounded predicates as following:
\begin{equation}
\label{multi2}
p_{y_i; \textbf{x}_i} = \frac{\exp \left [ \psi^0_{y_i; \textbf{x}_{Di}} + \sum_{x^j \in \mathbf X_G}{ x_i^j \psi^j_{y_i; \textbf{x}_{Di}}}    \right ]} {\sum_{k} \exp \left [ \psi^0_{y_i=y^k; \textbf{x}_{Di}} + \sum_{x^j \in \mathbf X_G} {x_i^j \psi^j_{y_i=y^k; \textbf{x}_{Di}} } \right ]}
\end{equation}

\textbf{Poisson variables with mix boolean-numeric parents}: when the target predicate follows Poisson distribution and the parent variables are mix boolean-numeric. The conditional Poisson parameter for the target variable given other grounded predicates  can be calculated as a log-linear function of functional parameters from all Gaussian parents:
\begin{equation}
\label{poisson3}
\lambda_{y_i; \textbf{x}_i} =  \exp \left [ \psi^0_{y_i; \textbf{x}_{Di}} + \sum_{x^j \in \mathbf X_G}{x_i^j \psi^j_{y_i; \textbf{x}_{Di}}}    \right ]
\end{equation}

\textbf{Gaussian variables with mix boolean-numeric parents}: when the target predicate follows Gaussian distribution and the parent variables are mix boolean-numeric. The conditional Gaussian mean parameter for the target variable given other grounded predicates can be calculated as a linear combination of functional parameter of every Gaussian parent:
\begin{equation}
\label{gaussain2}
\mu_{y_i; \textbf{x}_i} =  \psi^0_{y_i; \textbf{x}_{Di}} + \sum_{x^j \in \mathbf X_G}{x_i^j \psi^j_{y_i; \textbf{x}_{Di}}} 
\end{equation}

In each iteration, instead of learning one relational regression tree, multiple regression trees are learned each of which corresponds to one Gaussian parent. The functional parameter update function can be derived following the chain rule of differentiating compound functions. If the gradient function involves the functional parameter of other Gaussian parents, then use the current model of the corresponding Gaussian parent to estimate the value for it. When the learning approach converges, the conditional probabilities for the target predicate can be calculated based on the corresponding probability distribution assumptions by combining the models learned for all the Gaussian parents (following Equation~\ref{multi2}, \ref{poisson3}, \ref{gaussain2})




\chapter{Real-world Applications}
\label{chap5}
\ifpdf
    \graphicspath{{Chapter5/Chapter5Figs/PNG/}{Chapter5/Chapter5Figs/PDF/}{Chapter5/Chapter5Figs/}}
\else
    \graphicspath{{Chapter5/Chapter5Figs/EPS/}{Chapter5/Chapter5Figs/}}
\fi

\section{Application of Soft-Margin RFGB to Large-Scale Hybrid Recommendation Systems}

With their rise in prominence, recommendation systems have greatly alleviated information overload for their users by providing personalized suggestions for countless products such as music, movies, books, housing, jobs, etc. 
Since the mid-1990s, not only new theories of recommender systems have been proposed but also their application softwares have been developed which involve various domains including e-government, e-business, e-commerce/e-shopping, e-learning, etc~\cite{LU201512}.
We consider a specific recommender system domain, that of job recommendations, and propose to build a hybrid job recommendation system with a novel statistical relational learning approach. This domain easily scales to billions of items including user resumes and job postings, as well as even more data in the form of user interactions with these items. CareerBuilder, the source of the data for our experiments, operates one of the largest job boards in the world. It has millions of job postings, more than 60 million actively-searchable resumes, over one billion searchable documents, and receives several million searches per hour~\cite{aljadda2014crowdsourced}. 
The scale of the data is not the only interesting aspect of this domain, however. The job recommendations use case is inherently relational in nature, readily allowing for graph mining and relational learning algorithms to be employed. 
As Figure~\ref{rgraph} shows, the jobs that are applied to by the same users as well as the users who share similar preferences are not i.i.d. but rather connected through application relations. If we treat every single job post or user as an object which has various attributes, the probability of a match between the target user and a job does not only depend on the attributes of these two target objects (i.e. target user and target job) but also the attributes of the related objects such as patterns of the user's previous applied jobs, behaviors of users living in the same city or having the same education level. As we show in this work, richer modeling techniques can be used to capture these dependencies faithfully.
However, since most of the statistical relational learning approaches involve a searching space exponential to the number of related objects, how to efficiently build a hybrid recommendation system with statistical relational learning in such a large scale real-world problem remains a challenge in this field.
\begin{figure*}[htbp]
	\begin{minipage}[b]{\textwidth}
		\centering
		\includegraphics[width=0.9\textwidth]{rgraph.png}
	\end{minipage}
	\caption{ \small Job recommendation domain}
	\label{rgraph}
\end{figure*}

One of the most popular recommender approaches is {\em content-based filtering}~\cite{Basu98recommendationas}, which exploits the relations between (historically) applied-to jobs and similar features among new job opportunities for consideration (with features usually derived from textual information). An alternative recommendation approach is based on {\em collaborative filtering}~\cite{Breese1998}, which makes use of the fact that users who are interested in the same item generally also have similar preferences for additional items. Clearly, using both types of information together can potentially yield a more powerful recommendation system, which is why model-based hybrid recommender systems were developed~\cite{Basilico2004}. While successful, these systems typically need extensive feature engineering to make the combination practical. 

The hypothesis which we sought to verify empirically was that recent advancements in the fields of machine learning and artificial intelligence could lead to powerful and deployable recommender systems. In particular, we assessed leveraging Statistical Relational Learning (SRL)~\cite{srlbook}, which combines the representation abilities of rich formalisms such as first-order logic or relational logic with the ability of probability theory to model uncertainty. We employed a state-of-the-art SRL formalism for combining content-based filtering and collaborative filtering. SRL can directly represent the probabilistic dependencies among the attributes from different objects that are related with each other through certain connections (in our domain, for example, the jobs applied to by the same user or the users who share the same skill or employer). SRL models remove the necessity for an extensive feature engineering process, and they do not require learning separate recommendation models for each individual item or user cluster, a requirement for many standard model-based recommendation systems~\cite{Breese1998,Pazzani1997}.  

We propose a hybrid model combining content-based filtering and collaborative filtering that is learned by a cost-sensitive statistical relational learning approach - soft-margin Relational Functional Gradient Boosting (RFGB) presented in Chapter~\ref{chap2}. Specifically, we define the target relation as $Match(User, Job)$ which indicates that the user--job pair is a match when the ground fact is true, hence that job should be recommended to the target user. The task is to predict the probability of this target relation $Match(User, Job)$ for users based on the information about job postings, the user profile, the application history, as well as application histories of users who have similar preferences or profiles as the target user. RFGB is a boosted model which contains multiple relational regression trees with additive regression values at the sink node of each path. These trees are set of weaker classifiers that capture the conditional dependences between the target relation $Match(User, Job)$  and all other predicates in the data. Our hypothesis is that combining this set of weaker classifiers would yield a more accurate and efficient model that presents the $Match$ between the target user and the target job with the features from both content-based filtering and collaborative filtering.

In addition, this domain has practical requirements which must be considered. For example, we would rather overlook some of the candidate jobs that could match the users (false negatives) than send out numerous spam emails to users with inappropriate job recommendations (false positives). The cost matrix thus does not contain uniform cost values, but instead needs to represent a higher cost for the user--job pairs that are false positives compared to those that are false negatives, i.e. precision is preferred over recall. To incorporate such domain knowledge on the cost matrix, we adapted the previous work from ~\cite{Yang14}, which extended RFGB by introducing a penalty term into the objective function of RFGB so that the trade-off between the precision and recall can be tuned during the learning process. 

In summary, we considered the problem of matching a user with a job and developed a hybrid content-based filtering and collaborative filtering approach. We adapted a successful SRL algorithm for learning features and weights and are the first to implement such a system in a real-world big data context. Our algorithm is capable of handling different costs for false positives and false negatives making it extremely attractive for deploying within many kinds of recommendation systems, including those within the domain upon which we tested. Our proposed approach has three main innovations: 1. it is the first work which employs probabilistic logic models to build a real-world large-scale job recommendation system; 2. it is the first work which allows the recommender to incorporate special domain requirements of an imbalanced cost matrix into the model learning process; 3. it is the first to prove the effectiveness of statistical relational learning in combining the collaborative filtering and content-based filtering with real-world job recommendation system data.

\subsection{Recommendation System}

Recommendation systems usually handle the task of estimating the relevancy or ratings of items for certain users based on information about the target user--item pair as well as other related items and users. The recommendation problem is usually formulated as $f: U\times I \rightarrow R$ where $U$ is the space of all users, $I$ is the space of all possible items and $f$ is the utility function that projects all combinations of user-item pairs to a set of predicted ratings $R$ which is composed by nonnegative integers. For a certain user $u$, the recommended item would be the item with the optimal utility value, i.e. $u_i^* = argMax_{i \in I}  f(u,i)$. The user space $U$ contains the information about all the users, such as their demographic characteristics, while the item space $I$ contains the feature information of all the items, such as the genre of the music, the director of a movie, or the author of a book.

Generally speaking, the goal of \textit{content-based filtering} is to define recommendations based upon feature similarities between the items being considered and items which a user has previously rated as interesting~\cite{Adomavicius2005}, i.e. for the target user-item rating $f(\hat u, \hat i)$, \textit{content-based filtering} would predict the optimal recommendation based on the utility functions of $f(\hat u, I_h)$ which is the historical rating information of user $\hat u$ on items ($I_h$) similar with $\hat i$. 
Given their origins out of the fields of information retrieval and information filtering, most content-based filtering systems are applied to items that are rich in textual information. From this textual information, item features $I$ are extracted and represented as \textit{keywords} with respective weighting measures calculated by certain mechanisms such as the \textit{term frequency/inverse document frequency (TF/IDF)} measure~\cite{Salton89}. The feature space of the user $U$ is then constructed from the feature spaces of items that were previously rated by that user through various keyword analysis techniques such as averaging approach~\cite{Rocchio71}, Bayesian classifier~\cite{Pazzani1997}, etc. Finally, the utility function of the target user-item pair $f(\hat u, \hat i)$ is calculated by some scoring heuristic such as the cosine similarity~\cite{Salton89} between the user profile vector and the item feature vector or some traditional machine learning models~\cite{Pazzani1997}. \textit{Overspecialization} is one of the problems with content-based filtering, which includes the cases where users either get recommendations too similar to their previously rated items or otherwise never get recommendations diverse enough from the items they have already seen. Besides, since the model-based content-based filtering builds its recommendation model based on the previous rated items of the target user, it requires a significant amount of items to be rated in advance in order to give accurate recommendations especially for the probabilistic machine learning models which require the number of training examples at the exponential scale of the dimension of the feature space. 

On the other hand, the goal of the \textit{collaborative filtering} is to recommend items by learning from users with similar preferences~\cite{Adomavicius2005,Su2009,NIPS2015_5938,WANG201680}, i.e. for the target user-item rating $f(\hat u, \hat i)$, \textit{collaborative filtering} builds its belief in the best recommendation by learning from the utility functions of $f(U_s, \hat i)$ which is the rating information of the user set $U_s$ that has similar preferences as the target user $\hat u$. The commonly employed approaches fall into two categories: \textit{memory-based (or heuristic-based)} and \textit{model-based} systems. The heuristic-based approaches usually predict the ratings of the target user-item pair by aggregating the ratings of the most similar users for the same item with various aggregation functions such as mean, similarity weighted mean, adjusted similarity weighted mean (which uses relative rating scales instead of the absolute values to address the rating scale differences among users), etc. The set of most similar users and their corresponding weights can be decided by calculating the correlation (such as Pearson Correlation Coefficient~\cite{Resnick94}) or distance (such as cosine-based~\cite{Breese1998} or mean squared difference) between the rating vectors of the target user and the candidate user on common items. Whereas model-based algorithms are used to build a recommendation system by training certain machine learning models~\cite{Breese1998, Salakhutdinov2007,  Si03,sahoo2010hidden} based on the ratings of users that belong to the same cluster or class as the target user. 
Hence, prior research has focused on applying statistical relational models to collaborative filtering systems~\cite{Getoor99,newton2004hierarchical,gao2007recommendation, huang2005unified}.
Although collaborative filtering systems can solve the overspecialization problem present in the content-based filtering approach, it has its own problems as well, such as the new user/item problem (commonly known as the "cold start" problem) and the \textit{sparsity} problem, which occurs when the number of users ratings on certain items is not sufficient. 
Hence, there are works focusing on enhancing collaborative filtering systems for solving such problems. Shambour et al.~\cite{Shambour2011} proposed a G2B recommendation e-services which alleviated the sparsity and cold start problems by employing additional domain knowledges of trust and trust propagation.

There are \textit{Hybrid approaches} which combine collaborative filtering and content-based filtering into a unified system~\cite{Basilico2004,de2010combining,balabanovic1997fab}. For instance Basilico et al.~\cite{Basilico2004} unified content-based and collaborative filtering by engineering the features based on various kernel functions, then trained a simple linear classifier (Perceptron) in this engineered feature space.

There are some research focusing on job recommender systems. However, most of them only exploit the techniques of content-based filtering~\cite{Guo14, Almalis15, Salazar2015, Malherbe14, Diaby13}. Hong et al.~\cite{Hong13} proposed a hybrid job recommender system by profiling the users based on the historical applied jobs and behaviors of job applicants. Lu et al.~\cite{Yao13} proposed a directed weighted graph which represents the content-based and interaction-based relations among users, jobs and employers with directed or bidirectional edges. It computes the content-based similarity between any two profiles of objects (user, employer or job). The key difference of our model from theirs is that the graph they used is not a machine learning model trained from the historical data, but rather built based on known facts of the target objects whereas our model is a first-order logic probabilistic model trained with historical data and only partially grounded with the related objects when it is necessary for inference on the target objects. Pacuk et al.~\cite{PacukSWWW16} also exploited gradient boosting. But they only built a content-based filtering recommender by using the standard gradient boosting. We built a hybrid recommender with relational functional gradient boosting, which can capture the dependencies among the features not only from the target user-item pair but also from similar users. Besides, our model is a cost-sensitive learning approach which allows the tunning of precision and recall in a principled way. 

The most related work to ours is \cite{HoxhaR13}, where they proposed to use Markov Logic Networks to build hybrid models combining content-based filtering and collaborative filtering. Their work only employed one type of probabilistic logic model, which is demonstrated later in this paper to be a worse one. Besides, it did not consider the special requirement of many recommendation systems, which is that precision should be preferred over recall (or at least that the relative weights of the two should be configurable).

\subsection{Building Hybrid Job Recommendation System with Soft-Margin RFGB}

Traditional machine learning algorithms make a fundamental assumption about the data they try to model: the training samples are independent and identically distributed (i.i.d.), which is not the typical case in recommendation systems. In order to represent the data in a flat table, the standard model-based recommendation systems need an exhaustive feature engineering process to construct the user profile by aggregating the attributes over all the similar users who share the same background or similar preferences as the target user. The aggregation-based strategies are necessary because the standard algorithms require a regular flat table to represent the data. However, the number of similar users related to the target user may vary a lot among different individuals. For example, users with common preferences could have more similar users than the users with unique tastes. There are aggregation-based strategies~\cite{Perlich03} to make the feature amount identical for all the samples when extending the feature space. However, such strategies would inevitably lose meaningful information otherwise introduce some amount of noise.

We propose to employ SRL for the challenging task of implementing a hybrid recommendation system. Specifically, we consider the formulation of Relational Dependency Networks (RDNs)~\cite{Neville07}, which are approximate graphical models that are inferred using the machinery of Gibbs sampling.
Figure~\ref{rdn} shows a template model of RDNs. As can be seen, other than the attributes of the target user A and target job B, it also captures the dependencies between the target predicate $Match(A, B)$ and attributes from the similar user D and previous applied job C. The interpretation of the learned model will be explained in more detail in section~\ref{exp}.
\begin{figure}[htbp]
	\centering
	\includegraphics[width=0.6\textwidth]{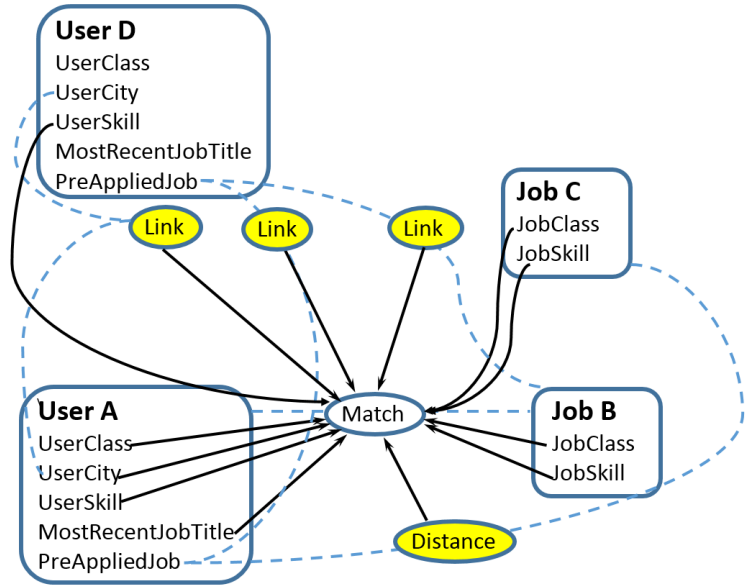}
	\caption{ \small Template Model of a Sample RDN. The target is $Match(UserA, JobB)$ while the related objects are User D (introduced by the link nodes) and previous applied Job C. Note that D and C are first-order variables which could have multiple groundings for different target user--job pairs. }
	\label{rdn}
\end{figure}

As an approximation of Bayesian Networks, Dependency Networks (DNs) make the assumption that the joint distributions can be approximated as the product of individual conditional probability distributions and that these conditional probability distributions are independent from each other. Since it prevents the need for acyclicity checking, the structure and conditional probability of each node can be optimized separately by certain local search strategy.
RDNs extend DNs to relational data and are considered as one of the most successful SRL models that have been applied to real-world problems. Hence, we propose to construct a hybrid recommendation system by learning an RDN using a state-of-the-art learning approach--Relational Functional Gradient Boosting (RFGB) which has been proven to be one of the most efficient relational learning approaches~\cite{Natarajan2012}. In our case, the target predicate is \textit{Match(User, Job)}, and the parents $\mathbf{X}_i$ would be the attributes of the target user and target job, and the jobs previously applied to by the target user and similar users sharing the same preferences. $\hat{y}_i$ is the true label for a user--job pair which is $1$ for a positive matching pair and $0$ for a negative matching pair. The key assumption is that the conditional probability of a target grounding $y_i$, given all the other predicates, is modeled as a sigmoid function. 

Following the work of Yang et al.~\cite{Yang14}, we also take into account the domain preference on Precision over Recall when learning the hybrid job recommendation system.  In this specific application, 
the gradient of the objective function w.r.t $\psi(y_i=1;\textbf{X}_i)$ equals to:
\begin{equation}
\Delta = I(\hat{y}_i=Match) \, - \, \lambda P(y_i=Match;\textbf{X}_i).
\label{simp_grad_hrs}
\end{equation}
where $\lambda\, =$
\begin{equation}
\begin{array}{l}
\left\{ \begin{array}{ll}
\displaystyle{For \ \ matching \ \ User-Job \ \ pairs:} \\
\displaystyle{\frac{1}{P(y_i=Match;\textbf{X}_i)+P(y_i=MisMatch;\textbf{X}_i)\cdot e^{\alpha}}} \\ 
\\
\displaystyle{For\ \ mismatching\ \ User-Job\ \ pairs:} \\
\displaystyle{\frac{ e^{\beta}}{P(y_i=Match;\textbf{X}_i)\cdot e^{\beta} +P(y_i=MisMatch;\textbf{X}_i)}} 
\end{array} \right.
\end{array}
\label{lambda_hrs}
\end{equation}

As shown above, the cost function $c(\hat{y}_i, y_i)$ is controlled by $\alpha$ when a potentially matching job being ruled out by the recommender, while being controlled by $\beta$ when a mis-matching job being recommended. The cost matrix of our approach can be formally defined as Table~\ref{costmatrix} shows.

\begin{table}[htbp]
	\caption{Cost Matrix}
	\label{costmatrix}
	\small
	\begin{tabular}{ccc|l}
	
		& & \multicolumn{2}{ c }{Actual Class} \\ \cline{3-4}
		& & \multicolumn{1}{ |c| } {True} &  \multicolumn{1}{ c| } {False}  \\ \cline{2-4}
		\multicolumn{1}{ c }{\multirow{2}{*}{Predicted } } &
		\multicolumn{1}{ |c| }{True} & 0 & \multicolumn{1}{ c| } {$ I(\hat{y}_i=1) \, - \, \frac{ {P(y_i=1;\textbf{X}_i)}}{P(y^\prime=1;\textbf{X}_i) +P(y^\prime=0;\textbf{X}_i) \cdot e^{ - \beta} } $} \\ \cline{2-4}
		\multicolumn{1}{ c }{Class}                        &
		\multicolumn{1}{ |c| }{False} & \multicolumn{1}{ c| } {$ I(\hat{y}_i=1) \, - \, \frac{P(y_i=1;\textbf{X}_i)}{P(y^\prime=1;\textbf{X}_i)+P(y^\prime=0;\textbf{X}_i)\cdot e^{\alpha}} $} &  \multicolumn{1}{ c| } 0 \\ \cline{2-4}

	\end{tabular}
\end{table}

As the cost matrix shows, the influence of false negative and false positive examples on the final learned model can be directly controlled by tuning the parameters $\alpha$ and $\beta$ respectively.

Now, consider the special requirement on the cost matrix in most job recommendation systems, which is that we would rather miss certain candidate jobs which to some extent match the target user than send out recommendations that are not appropriate to the target user. In other words, we prefer high precision as long as the recall maintains above such a reasonable value that the system would not return zero recommendation for the target user. 

Since $\alpha$ is the parameter controlling the weights of false negative examples, we simply assign it as 0 which makes $\lambda=1 \, / \, \sum_{y^\prime} \, [P(y^\prime; \textbf{X}_i)] = 1$ for misclassified positive examples. As a result, the gradient of the positive examples is the same as it was in the original RFGB settings. 

For the false positive examples, we assign a harsher penalty on them, so that the algorithm would put more effort into classifying them correctly in the next iteration. 
According to Equation~\ref{lambda_hrs}, when it is a negative example ($\hat{y}_i = 0$), we have
\begin{equation}
\lambda= \frac{1}{P(y_i=Match;\textbf{X}_i)+P(y_i=MisMatch;\textbf{X}_i)\cdot e^{-\beta}}. \nonumber
\end{equation}
As $\beta \rightarrow \infty$, $e^{-\beta} \rightarrow 0$, hence $\lambda \rightarrow 1\,/\,P(y_i=Match; \textbf{X}_i)$, so
\begin{equation*}
\Delta = 0 \, - \, \lambda P(y_i=Match;\textbf{X}_i) \rightarrow -1 
\end{equation*}

This means that the gradient is pushed closer to its maximum magnitude $|-1|$, no matter how close the predicted probability is to the true label. On the other hand, when $\beta \rightarrow -\infty$,  $\lambda \rightarrow  0$, hence $\Delta \rightarrow 0$, which means that the gradients are pushed closer to their minimum value of 0. 
So, in our experiment, we set $\beta > 0$, which amounts to putting a large cost on the false positive examples.

The proposed approach follows the same procedure as described in Algorithm~\ref{softrfgb}. Figure~\ref{algo_hrs} shows the function for generating the regression examples in this application use case where the parameter $\lambda$ for positive and negative examples are calculated respectively based on the set values of parameters $\alpha=0$ and $\beta >0$ to satisfy the domain requirement of recommendation systems. 
 
\begin{figure}[htbp]
	\begin{algorithmic}[1]
		
		\Function{GenSoftMEgs}{$Data,F$}
		\Comment{ Example generator}
		\State $S := \emptyset$
		\For { $1 \leq i \leq N$} 
		\Comment{Iterate over all examples}
		\State Compute $ P(y_i=1|\mathbf{x}_i) $ 
		\Comment{Probability of the example being true}
		\If {$\hat{y}_i=1$}
		\State $\lambda = 1$
		\Comment{Compute parameter $\lambda$ for positive examples}
		\Else
		\State $\lambda = 1/[ P(y_i = 1|\mathbf{x}_i) + P(y_i = 0|\mathbf{x}_i)\cdot e^{-\beta}]$
		\State \Comment{Compute parameter $\lambda$ for negative examples}
		\EndIf
		\State $\Delta(y_i;\mathbf{x}_i) := I(\hat {y}_i=1) - \lambda P(y_i = 1|\mathbf{x}_i)$ 
		\Comment{Cost of current example }
		\State $ S := S \cup {[(y_i), \Delta(y_i;\mathbf{x}_i))]}$ 
		\Comment{Update relational regression examples}
		\EndFor
		\\
		\Return $S$ \Comment{Return regression examples}
		\EndFunction
		
	\end{algorithmic}
	\caption{Cost-sensitive RFGB for Job Recommendation Systems}
	\label{algo_hrs}
\end{figure}

In job recommendation systems, the major goal is typically not to have misclassified negative examples (false positive). As a result, we need to eliminate the noise/outliers in negative examples as much as possible. Most algorithms generate negative examples by randomly drawing objects from two related variables, and the pair that is not known as positively-related based on the given facts is assumed to be a negative pair. However, in our case, if we randomly draw instances from $User$ and $Job$, and assume it is a negative example if that grounded user never applied to that grounded job, it could introduce a lot of noise into the data since not-applying does not necessarily indicate the job not matching the user. For example, it could simply be due to the fact that the job has never been seen by the user. Hence, instead of generating negative instances following a ``closed-world assumption'', as most of the relational approaches did, we instead generated the negative examples by extracting the jobs that were sent to the user as recommendations but were not applied to by the user. In this way, we can guarantee that this User--Job pair is indeed not matching. 

\subsection{Experimental Results}
\label{exp}

We extracted 4 months of user job application history and active job posting records and evaluated our proposed model on that data. Our goal was to investigate whether our proposed model can efficiently construct a hybrid recommendation system with cost-sensitive requirements by explicitly addressing the following questions:
\begin{description}	
	\item{\bf(Q1)} How does combining collaborative filtering improve the performance compared with content-based filtering alone?
	\item{\bf(Q2)} Can the proposed cost-sensitive SRL learning approach reduce false positive prediction without sacrificing too much on other evaluation measurements?
\end{description}

To answer these questions, we extracted 9 attributes from user resumes as well as job postings, which are defined as first-order predicates: \textit{JobSkill(JobID, SkillID)}, \textit{UserSkill(UserID, SkillID)}, \textit{JobClass (JobID, ClassID)}, \textit{UserClass(UserID, ClassID)}, \textit{PreAppliedJob(UserID, JobID)}, \textit{UserJobDis(UserID, JobID, Distance)},\\  \textit{UserCity(UserID, CityName)}, \textit{MostRecentCompany(UserID, CompanyID)}, \textit{MostRecentJobTitle(UserID, JobTitle)}.

There are 707820 total job postings in our sample set, and the number of possible instances the first order variables can take is shown in the Table~\ref{variables}. 
\begin{table}[htbp]
	\caption{Feature Space}
	\centering
	\small
	\begin{tabular}{|c|c|c|c|}
		\hline
		Variable Name & SkillID & ClassID & Distance  \\ \hline
		Num of Instances & 8,534 & 1,867 & 4  \\ \hline
		Variable Name & CityName & CompanyID & JobTitle \\ \hline
		Num of Instances & 22,137 & 1,154,623 & 823,733 \\ \hline
	\end{tabular} 
	\label{variables}	
\end{table}

Information on the \textit{JobClass} and \textit{UserClass} are extracted based upon the work of Javed et al.~\cite{Javed2015}.
The other features related to users are \textit{UserSkill}, \textit{UserCity}, \textit{MostRecentCompany} and \textit{MostRecentJobTitle} which are either extracted from the user's resume or the user's profile document, whereas the job feature \textit{JobSkill} represents a desired skill extracted from the job posting. Predicate \textit{UserJobDis} indicates the distance between the user (first argument) and the job (second argument), which is calculated based on the user and job locations extracted from respective documents. The \textit{UserJobDis} feature is discretized into 4 classes (1: $<15$ mile; 2: [15 miles, 30 miles); 3: [30 miles, 60 miles]; 4: $>60$ miles). The predicate \textit{PreAppliedJob} defines the previous applied jobs and serves as an independent predicate which indicates whether the target user is in a cold start scenario, as well as acts as a bridge which introduces into the searching space the attributes of other jobs related to the target user during the learning process. 

To incorporate collaborative filtering, we use three additional first-order predicates: \textit{CommSkill(UserID1, UserID2)}, \textit{CommClass (UserID1, UserID2)} and \textit{CommCity(UserID1, UserID2)} which are induced from the given groundings of the predicates \textit{UserSkill}, \textit{UserClass} and \textit{UserCity} and also serve as bridges which introduce features of other users who share the similar background with the target user.

The performance of our model is evaluated in 3 different user classes, each of which has its data scale description shown in Table~\ref{domains}.  

\begin{table*}[!htbp]
	\caption{Domains} 
	\centering
	\begin{tabular}{|c|c|c|c|c|c|c|}
		\hline
		\multirow{2}{*}{JobTitle} & \multicolumn{3}{c}{Training} & \multicolumn{3}{|c|}{Test} \\ \cline{2-7}
		~& pos & neg & facts & pos & neg & facts \\ \hline
	    Retail Sales Consultant & 224 & 6,973 & 13,340,875 & 53 & 1,055 & 8,786,938 \\ \hline
		Case Manager & 387 & 35,348 & 13,537,324 & 87 & 5,804 & 8,815,216\\ \hline
		District Manager & 358 & 16,014 & 13,396,635 & 87 & 3,521 & 8,798,522\\ \hline
		
	\end{tabular}	
	\label{domains}
\end{table*}

As Natarajan et al. discovered in their work~\cite{Natarajan2012}, limiting the number of leaves in each tree and learning a set of small trees can improve the learning efficiency as well as prevent overfitting compared with learning a single complex tree.
To choose the appropriate number of trees, we sampled a smaller tuning set of just 100 examples. Our tuning set results showed that choosing the number of trees beyond 20 did not improve the performance. However, between 10 and 20 trees the performance had significant improvement. This is similar to the original observation of Natarajan et al.~\cite{Natarajan2012} and hence, we followed their discoveries and set the number of iterations $M$ as 20 and maximum number of leaves $L$ as 8.

The results are shown in Table~\ref{results_hrs}. For each of these user classes, we experimented with our proposed model using first-order predicates for the content-based filtering alone (denoted as CF in Table~\ref{results_hrs}), as well as the first-order predicates for both content-based filtering and collaborative filtering (denoted as HR in Table~\ref{results_hrs}). We also tried different settings of the parameters $\alpha$ and $\beta$ (denoted in the parentheses following the algorithm names in Table~\ref{results_hrs} ) for both scenarios in order to evaluate their effectiveness on reducing the false positive prediction.
\begin{sidewaystable*}	
	\caption{Results }		
	\small	
	\begin{tabular}{|c|c|c|c|c|c|c|c|}		
		\hline
		Job Title & Approach & \textbf{FPR} & FNR & Precision & Recall & Accuracy & AUC-ROC \\ \hline
		
		~ & Content-based Filtering (CF) & 0.537 & 0.321 & 0.060 & 0.679 & 0.473 & 0.628  \\ \cline{2-8}
		Retail & Cost-sensitive CF ($\alpha$0$\beta$2) & \textbf{0.040} & 0.868  & 0.143 & 0.132 & \textbf{0.921} & 0.649  \\ \cline{2-8}
		 Sales  & Hybrid Recommender (HR) & 0.516 & \textbf{0}  & 0.089 & \textbf{1.0} & 0.509 & \textbf{0.776}   \\ \cline{2-8}
		Consultant & Cost-sensitive HR ($\alpha$0$\beta$2) & 0.045  & 0.906 & 0.096 & 0.094 & 0.914 & 0.755  \\ \cline{2-8}
		~ & Cost-sensitive HR ($\alpha$0$\beta$1) & 0.113  & 0.623 & \textbf{0.144} & 0.377 & 0.863 & 0.772  \\ \hline \hline
		
		~ & Content-based Filtering (CF) & 0.220 & 0.184 & 0.053 & 0.816 & 0.781 & 0.861  \\ \cline{2-8}
		Case & Cost-sensitive CF ($\alpha$0$\beta$2) & 0.084 & 0.609 & 0.066 & 0.391 & 0.909 & 0.847  \\ \cline{2-8}
		 Manager & Hybrid Recommender (HR) & 0.239  & \textbf{0} & 0.059 & \textbf{1.0} & 0.765 & \textbf{0.911}  \\ \cline{2-8}
		~ & Cost-sensitive HR ($\alpha$0$\beta$2) & \textbf{0.037}  & 0.736 & \textbf{0.096} & 0.264 & \textbf{0.952} & \textbf{0.911}   \\ \hline \hline
		
		~ & Content-based Filtering (CF) & 0.427  & 0.195 & 0.045 & 0.805 & 0.579 & 0.746  \\ \cline{2-8}
		District & Cost-sensitive CF ($\alpha$0$\beta$2) & 0.017 & 0.920 & \textbf{0.104} & 0.080 & 0.961 & 0.745   \\ \cline{2-8}
		 Manager & Hybrid Recommender (HR) & 0.439 & \textbf{0} & 0.053 & \textbf{1.0} & 0.572 & 0.817  \\ \cline{2-8}
		~ & Cost-sensitive HR ($\alpha$0$\beta$2) & \textbf{0.013} & 0.977 & 0.042 & 0.023 & \textbf{0.964} & 0.812  \\ \cline{2-8}
		~ & Cost-sensitive HR ($\alpha$0$\beta$1) & 0.068 & 0.678 & \textbf{0.104} & 0.322 & 0.917 & \textbf{0.825}  \\ \hline
	\end{tabular}
	\label{results_hrs}
\end{sidewaystable*}
	
As the table shows, although the two approaches show similar performance on False Positive Rate, Precision, and Accuracy, the hybrid recommendation system improves a lot on the False Negative Rate, Recall and AUC-ROC compared with content-based filtering alone, especially on the Recall (reached 1.0 for all three of the user classes). So, question {\bf(Q1)} can be answered affirmatively. The hybrid recommendation system improves upon the performance of content-based filtering alone, by taking into consideration the information of similar users who have the same expertise or location as the target user. 

The first column of Table~\ref{results_hrs} shows the False Positive Rate which we want to reduce. As the numbers shown, the cost-sensitive approach greatly decreases the FPR compared with prior research which does not consider the domain preferences on the cost matrix. It also significantly improves the accuracy at the same time. Hence, question {\bf(Q2)} can also be answered affirmatively. Note that, although it seems that recall has been considerably sacrificed, our goal here is not to capture all the matching jobs for the target user, but instead to increase the confidence on the recommendations we are giving to our users. Since we may have hundreds of millions of candidate jobs in the data pool, we can usually guarantee that we will have a sufficient number of recommendations even with relatively low recall. Moreover, our proposed system can satisfy various requirements on the trade-off of precision and recall for different practical consideration by tuning the parameters $\alpha$ and $\beta$. If one does not want the recall too low, in order to guarantee the quantity of recommendations, one can simply decrease the value of $\beta$; if one does not want the precision too low, in order to improve the customer satisfaction, one can just increase the value of $\beta$. 

\begin{figure*}[htbp]
	\begin{minipage}[b]{\textwidth}
		\includegraphics[width=1\textwidth]{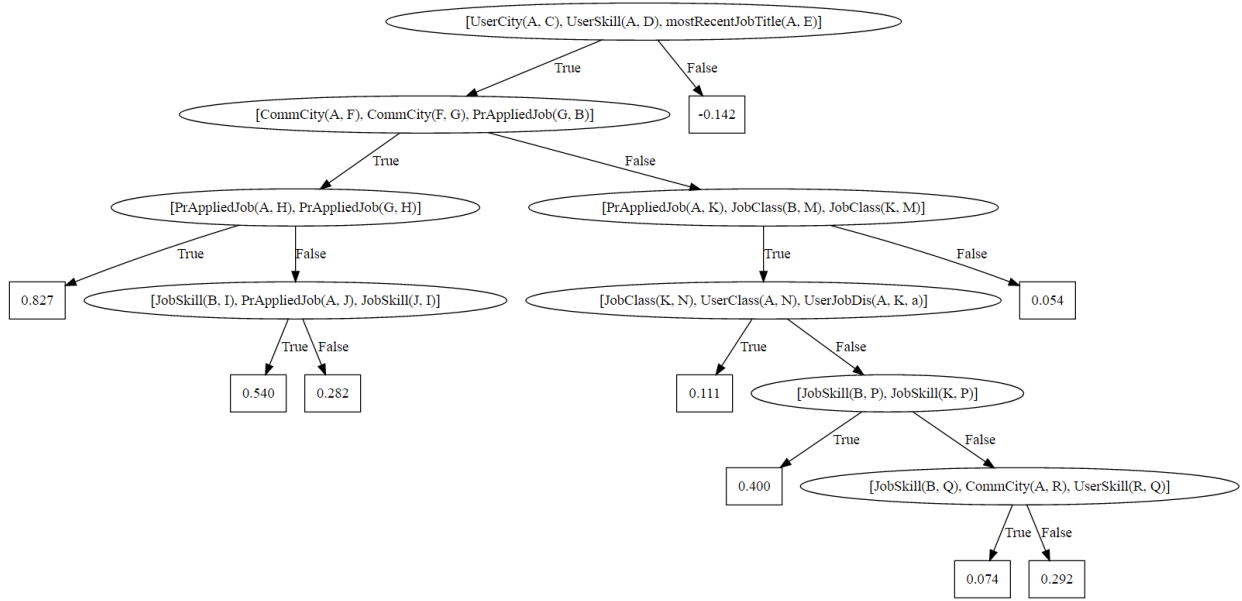}
	\end{minipage}
	\caption{ \small Sample Regression Tree}
	\label{c22tree}
\end{figure*}

Figure~\ref{c22tree} shows a sample regression tree learned from \textit{Case Manager} users. The left most path can be interpreted as: if we have the information on record about the $UserCity$, $UserSkill$ and $MostRecentJobTitle$ of the target user $A$ (in other words, the target user does not suffer from the cold start problem), and there is another user $G$ who lives in the same city and applied for the same job as user $A$ before, and if user $G$ applied for the target job $B$, then the probability for the target user $A$ to apply to the target job $B$ (in other words, the probability for the user--job pair to be matching) is $P( Match(A, B)= 1 ) = \frac{1}{1+e^{-0.827}} \approx 0.696 $. 

It is worth mentioning that we also tried to experiment with Markov Logic Networks on the same data with \textit{Alchemy 2} \cite{Alchemy09}. However, it failed after continuously running for three months due to the large scale of our data. This underscores one of the major contributions of this research in applying statistical relational learning to building a hybrid recommendation model in a real-world large-scale job recommendation domain.

\subsection{Conclusion}

We proposed to construct a hybrid job recommendation system with an efficient statistical relational learning approach which can also satisfy the unique cost requirements regarding precision and recall in this specific domain. The experiment results show the ability of our model to reduce the rate of inappropriate job recommendations. Our contribution includes: i. we are the first to apply statistical relational learning models to a real-world large-scale job recommendation system; ii. our proposed model has not only been proved to be the most efficient SRL learning approach, but also demonstrated its ability to further reduce false positive predictions; iii. the experiment results reveal a promising direction for future hybrid recommendation systems-- with proper utilization of first-order predicates, an SRL-model-based hybrid recommendation system can not only prevent the necessity for exhaustive feature engineering or pre-clustering, but can also provide a robust way to solve the cold-start problem.

\section{Application of Soft-Margin RFGB to A Rare Disease Case Study}
\label{raredis}
Besides its significant application values in recommendation systems proved in~\cite{YANG201737}, another important applicable domain for the proposed cost-sensitive RFGB (Chapter~\ref{chap2}) is the medical domain where the patients diagnosed as positive for certain diseases (positive examples) are much more than those who are not (negative examples). 
Besides, the practical requirements in medical domains are also consistent with putting less weights on the majority class which is negative in this case. For instance, when predicting heart attacks, it is imperative that any modeling approach reduces the number of false negatives (failed to predict cardiac event with adverse consequences and high cost), even if it comes at the expense of adding {\it a few more} false positives (predicted a cardiac event that did not happen, moderate cost). The low-cost examples, for this application, would be false positive examples.

We applied the Soft-Margin RFGB approach to a case study which aims to investigate the different experience patterns between people with rare diseases (minority class) and those with more common chronic illnesses (majority class). The experiment results showed superior performance of soft-margin RFGB  compared with sampling based approaches  when predicting the occurrence of rare diseases based on the self-reported behavioral data~\cite{CHASEhaley}.

\section{Application of Dynamic Bayesian Networks to A Cardiovascular Disease Study}
\label{DBNmain}

Cardiovascular disease (CVD) is the leading cause of death worldwide, accounting for more than 17.3 million deaths per year which is more than deaths from any other cause. In 2013, cardiovascular deaths represented 31 percent of all global deaths~\cite{Mozaffarian447}.
It is well known that successful and established lifestyle intervention and modification can result in prevention of the development of CVD, especially when the interventions occur in early life. 
Because of this reason, Lung and Blood Institute designed the Coronary Artery Risk Development in Young Adults (CARDIA) Study. This is a longitudinal study with 25 years of data (from 1985) collected from early adulthood of participants whose physical conditions have been monitored for the past 3 decades. 

In this project, we designed and implemented a series of experiments to model the development of Coronary Artery Calcification (CAC) level which has been proved to be the most predictive measure of subclinical Coronary Artery Disease (CAD)~\cite{detrano} by employing the CARDIA data. 
In order to discover the influence of risk factors on the development of CVD in later adult life, a temporal model that is easy to interpret is needed for training such longitudinal data. Since the revisits of the participants happened regularly and due to the highly uncertainty nature of medical data, a discrete-time temporal probabilistic model is required. Hence, we chose Dynamic Bayesian Networks (DBNs)~\cite{KollerFriedman09} and compared three standard optimization scoring metrics~\cite{KollerFriedman09} -- Bayesian Information Criterion (BIC), Bayesian Dirichlet metric (BDe) and mutual information (MI) --  for learning these temporal models. we also combined the different probabilistic networks resulting from these three metrics and evaluated their predictive ability through cross-validation.

Since the clinical measurements such as blood pressure, cholesterol level, blood glucose, etc. are usually more directly related to the CAC level, in order to better reveal the correlation between the lifestyle factors and the CVD development, only the basic socio-demographic and health behavior information were employed to prevent the more informative features interfering with the model training. In other words, only non-clinical data was given for training these DBNs to predict the incidence of CAC-level abnormality as the individual ages from early to middle adult life. 

The experiment results indicate that these behavioral features are reasonably predictive of the occurrence of increased CAC-levels. They allow the doctors to identify life-style and behavioral changes that can potentially minimize cardiovascular risks. In addition, the interpretative nature of the learned probabilistic model facilitates possible interactions between domain experts (physicians) and the model and provides them an easy way to modify/refine the learned model based on their experience/expertise.

In the following sections, a brief background on the data will be provided. Then the learning algorithm that has been adapted  will be outlined along with different scoring metrics which have been employed. Finally, the cross-validation results as well as interpretable learned models will be presented to demonstrate the effectiveness of the proposed approach.

\subsection{Proposed Approach}

One of the questions that the CARDIA study tries to address is to identify the risk factors in early life that have influence on the development of clinical CVD in later life. It is a longitudinal population study started in 1985-86 and performed in 4 study centers in the US (Birmingham, AL; Chicago, IL; Minneapolis, MN; and Oakland, CA) and includes 7 subsequent evaluations (years 2, 5, 7, 10, 15, 20, 25). It is approximately evenly distributed over race, gender, education level and beginning age. It includes various clinical and physical measurements and in-depth questionnaires about sociodemographic background, behavior, psychosocial issues, medical and family history, smoking, diet, exercise and drinking habits. 
In this project, only demographic and socio-economic features were taken into account when predicting the CAC-level as a binary prediction task. Specifically, we considered these features (with their number of categories noted in the following parentheses) -- participant's education level(9), full time(3)/part time(3) work, occupation(8), income(6), marriage status(8), number of children(3), alcohol usage(3), tobacco usage(3) and physical activities(6) during the year of the survey -- to model the development of CAC level (High($CAC>0$)/Low($CAC=0$)) in each observation of the CARDIA study. The variable names in order are \textit{03degre, 03wrkft, 03wrkpt, 03typem, 03dfpay, 03mrage, 03lvchl, 07drink, 09smknw, 18pstyr} as they are referred in the data-collection forms.\footnote{For detailed information about the features, please refer to CARDIA online resource at \url{ https://www.cardia.dopm.uab.edu/exam-materials2/data-collection-forms} } 
The chosen variables are extracted from data collected by sociodemographic questionnaires (Form 3) of the CARDIA study, as well as some features related to certain health behaviors collected by Alcohol Use Questionnaires (Form 7), Follow-up Questions for Tobacco Use Questionnaires (Form 9-TOB) and Physical Activity Questionnaires (Form 18). The target variable (cardiovascular disease) is believed to be closely related to the CAC total adjudicated agatston score which has been tested since year 15. A total number of 10 factors of all participants with 6 evaluations (at year 0, 5, 10, 15, 20, 25)  are extracted from CARDIA data as well as their CAC scores in the corresponding years. 

CARDIA data has some unique characteristics which require a preprocessing step or special modeling technique. 

The first issue with this study is that there are some missing values. Preprocessing is required for matching the different ways of recording  missing values across different study centers.
For example, for the missing values, some of them are just simply left as blank, some are denoted as ``M'' while the others as ``0'' or ``9''(as it is the option for ``No answer'' in the questionnaires). These different answers actually all present the same class in which case the participants are reluctant to answer the questions for some reason (given that they answered other questions on the questionnaire). Hence, ``M'', ``0'', `` '' and ``9'' should be clustered into one special class instead of four different classes as presented in the original data. Otherwise, the data would be unnecessarily scattered and potentially lead to inaccurate machine learning models. Hence, we treat these missing values as special cases instead of imputing them with the previous measurement or the most common measurement across the subjects. 
The other kind of missing values is due to the fact that some participants did not attend certain subsequent evaluations, i.e. the values of all the features are missing from that year.
While the participant retention rate is relatively high ($91\%$, $86\%$, $81\%$, $79\%$, $74\%$, $72\%$, and $72\%$ respectively for each evaluation), there are still at least 10 percent of the data are missing from the records. For this type of missing values, the missing values are filled in with the values from the participant's previous measurements.

Another issue is the evolution of evaluation measurements and the questionnaire designs over these 25 years of study. Some of the questions related to a certain aspect of the sociodemographic background may be divided to multiple questions or combined into one question in the follow up evaluations. For example, since year 15, the question related to the participant's marriage status  has added an option--``6. living with someone in a marriage-like relationship'' which used to be a separate question on the questionnaires of year 0 to year 10 where option `6' for the marriage status question indicated an answer different from this meaning. If the original data had been used without careful compilation, there would be considerable amount of noisy introduced into the data.

Besides, the data is composed of seven subsequent exams at different time points. Such temporal data needs dynamic models to capture the evolution of the risk factors over time.

Given these challenges, we employed a purely probabilistic formalism of dynamic Bayesian networks (DBNs) that extend Bayesian networks (BNs) to temporal setting. They employ a factorized representation that decreases the dimension from exponential in the total number of features to exponential in the sizes of parent sets. They also  handle the longitudinal data by using a BN fragment to represent the probabilistic transition between adjacent time slots, which allows both intra-time-slice and inter-time-slice arcs. Besides its ability of capturing the temporal transition probability of time-series data, DBNs allow cycles in the transition BN fragment to model feedback loops which are essential in medical domain. For instance, treatment of a disease in previous time slice can influence the disease severity in current time slice which in turn influences the treatment plan in the next time step. 

In most literature, the probabilistic influence relationships of the DBN (particularly the temporal influences) are pre-specified by domain experts and only the parameters are learned. However, since the aim here is to determine how the CAC-levels evolve as a function of socioeconomic and behavior factors, both the qualitative and quantitative aspects of the conditional dependencies need to be learned by adapting standard BN structure learning algorithms. There are two popular approaches for DBN structure learning: one is to exploit the greedy local search such as hill climbing based on certain decomposable local score functions; the other is to employ a stochastic global optimization framework. In this project, we implement the greedy search strategy and consider three different scoring functions -- the Bayesian Dirichlet (BDe) scores, Bayesian information criterion (BIC) and Mutual Information Test (MIT). 
Before discussing the scoring functions, the high-level overview of the framework is presented in Figure.\ref{flowchart} to facilitate a better understanding of the proposed approach.
\begin{figure}[htbp]
	\centering
	\includegraphics[scale=0.45]{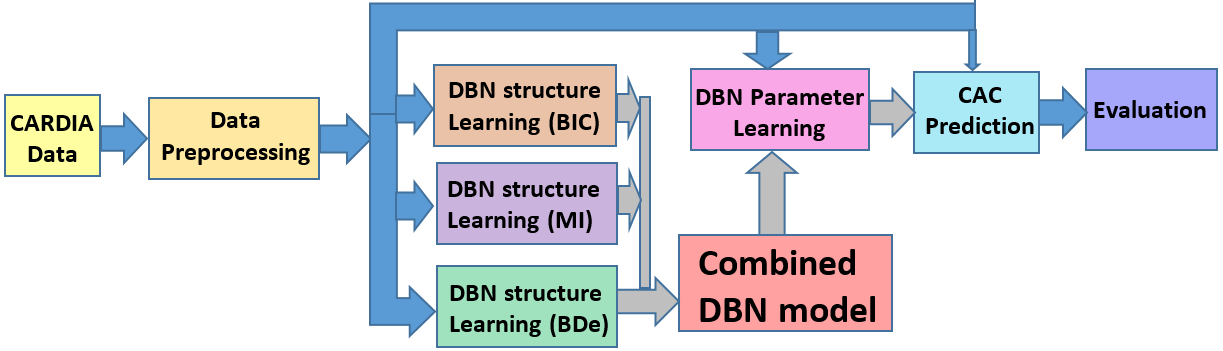}
	\caption{ Flow Chart of the Proposed Model. The blue arrows denote the flow of data and the grey arrows denote the flow of the model.}
	\label{flowchart}
\end{figure}

After preprocessing, the hill-climbing algorithm using three different scoring metrics was performed. 
First, the multi-series dynamic data was transformed into a training set by extracting every pair of sequential recorded measurements of every patient as a sample. After this step, a total of $5114$ ($|subjects|$) $*5$ ($|paired$ $time$ $slices|$) samples were obtained. Randomly drew a training set from these instances, then three models were learned by using three scoring metrics. These models were also combined in different ways by using the union of all the edges to construct a new unified model\footnote{When the combination induces intra-slice cycles, randomly remove one edge.}. The goal is to introduce more dependencies and investigate whether a more complex model is indeed more accurate. For these new unified models, the corresponding parameters were learned. The performance of all learned DBNs were then evaluated with 5-fold cross-validation.

Returning to the scoring function, the first row of Table.\ref{scoref} presents the general form of the decomposable penalized log-likelihood (DPLL)~\cite{LiuMY12} for a BN $\mathcal B$ given the data $\mathcal D$ where $\mathcal D_{il}$ is the instantiation of $X_i$ in data point $D_l$, and $PA_{il}$ is the instantiation of $X_i$'s parent nodes in $D_l$. So the general form of DPLL is the sum of individual variable scores which equals to the loglikelihood of the data given the local structure minus a penalty term for the local structure. Note that BIC and BDe mainly differ in the penalty term. 
The penalty term for BIC is presented in the second row where $q_i$ is the number of possible values of $PA_i$, $r_i$ is the number of possible values for $X_i$ and $N$ is number of samples ($5114\times 5$). Hence the BIC penalty is linear in the number of independent parameters and logarithmic in the number of instances. BDe penalty is presented in the third row where $\mathcal D_{ijk}$ is the number of times $X_i = k $ and $PA_i = j$ in $ \mathcal D$, and $\alpha_{ij}=\sum_k \alpha_{ijk}$ with $\alpha_{ijk} = \frac{\alpha}{q_ir_i}$ in order to assign equal scores to different Bayesian network structures that encode the same independence assumptions. Ignoring the details, the key is that the complexity of BIC score is independent of the data distribution, and only depend on the arity of random variables and the arcs among them while the BDe score is dependent on the data and controlled by the hyperparameters $\alpha_{ijk}$. \cite{KollerFriedman09} covers more details.
Instead of calculating the log-likelihood, MIT score (Table.\ref{scoref} last row) uses mutual information to evaluate the goodness-of-fit~\cite{VinhCCW11}. 
$I(X_i, PA_i)$ is the mutual information between $X_i$ and its parents. $\chi_{\alpha, l_{i\sigma_i(j)}}$ is chi-square distribution at significance level $1-\alpha$~\cite{VinhCCW11}. 

%
\begin{table}
	\caption{ The different scoring functions.}
	\begin{tabular}{|c|c|}
		\hline
		\multirow{3}{*}{$DPLL(\mathcal B, \mathcal D)$} & $  DPLL(\mathcal B, \mathcal D)=\sum_i^n [\sum_l^N \log P(\mathcal D_{il} |PA_{il})- Penalty(X_i, \mathcal B, \mathcal D)]$\\ \cline{2-2}
		~ & $Penalty_{BIC}(X_i, \mathcal B, \mathcal D) = \frac{q_i(r_i-1)}{2} \log N$ \\ \cline{2-2}
		~ & $Penalty_{BDe}(X_i, \mathcal B, \mathcal D) = \sum_j^{q_i} \sum_k^{r_i} \log \frac{P(\mathcal D_{ijk}| \mathcal D_{ij})}{P(\mathcal D_{ijk}| \mathcal D_{ij}, \alpha_{ij})}$ \\ \hline
		$DPMI(\mathcal B, \mathcal D)$ &  $S_{MIT}(\mathcal B, \mathcal D) = \sum_{i, PA_i\neq \emptyset} 2N*I(X_i, PA_i)-\sum_{i, PA_i\neq \emptyset}\sum_j^{q_i}\chi_{\alpha, l_{i\sigma_i(j)}}$\\ \hline
	\end{tabular}
	\label{scoref}
\end{table}


\subsection{Experimental Results of DBN on CARDIA data}

For the DBN DPLL-structure learning, we extended the BDAGL package of Murphy et al.~\cite{Eaton07_uai} to allow learning from multi-series dynamic data and to support learning with BIC score function. We also adapted DPMI-structure learning by exploiting the GlobalMIT package which was developed to model multi-series data from gene expression~\cite{VinhCCW11}. The learned DBN structure is shown in Figure~\ref{lDBN}. 
\begin{figure}[htbp]
	\centering
	\includegraphics[scale=0.6]{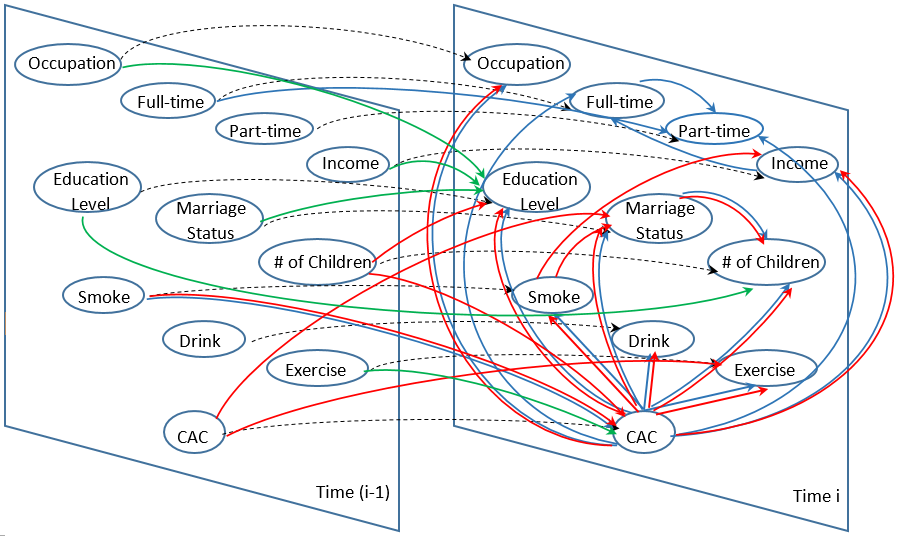}
	\caption{ Combined DBN model. The blue arcs are learned by DPLL-BDe; reds by DPLL-BIC; greens by DPMI; black dash lines are self-links which are captured by all three score metrics.}
	\label{lDBN}
\end{figure}

Note that all three score metrics learned the self-link for every variable. This shows that many socio-demographic factors are influenced by previous behavior (5 years backward). 
Observe that both BIC and BDe learned the inter-slice dependency between ``Smoke'' and ``CAC'' level. Both BIC and MI returned a temporal correlation between ``Exercise'' and ``CAC'', which indicates that previous health behaviors have strong influence on the risk of CAC in current time.

Then we combined the structure from the different learning approaches into a comprehensive model for learning parameters. We applied this DBN to test data and predict the CAC score based on the variables at the current and previous time steps. As CAC score from previous time-steps is highly predictive of future values, we hid the CAC-scores in the test set for fair evaluation.

We calculated the accuracy, AUC-ROC as well as F measure\footnote{Accuracy = (TP+TN)/(P+N); F = 2TP/(2TP+FP+FN)
} to evaluate the independent and mixed models learned by different score functions. The results are shown in Table~\ref{results_dbn}. As the table shows, the model learned by BDe score has the best performance while the MI the worst. 
We also performed t-test on the five folds results, which shows the BDe is significantly better than MI with P-values at 0.0014 (AUC-ROC), 0.0247 (Accuracy) and $5.7784\times 10^{-6}$ (F). In order to rule out the possibility that the better performance of BDe is resulted from the non-temporal information which MI does not have, we also experimented on BNs with intra-slice arcs only as well as DBNs with inter-slice arcs only. And the results showed that the difference between them is not statistically significant at 5\% significance level. In other words, the temporal information is as important as the non-temporal information in predicting the CAC level. 
Compared to BDe alone, the combined model has deteriorated performance. This is probably because the combined model has more arcs which exponentially increases the parameter space and the limited amount of training data cannot guarantee the accurate training for such high dimension model (possibly overfitting). 

\begin{table}
	\centering
	\caption{ Model Evaluation Results.}
	\begin{tabular}{|c|c|c|c|}
		\hline
		~  & Accuracy &  AUC-ROC & F measure	\\ \hline
		 MI & 0.5774 &	0.4870 & 0  \\ \hline
		 BIC & 0.6558  & 0.6979 & 0.5417  \\ \hline
		 BDe & \textbf{0.6805}  & \textbf{0.7139} & \textbf{0.6144}  \\ \hline
		 MI+BIC & 0.6482 & 0.6473 & 0.4640 \\ \hline
		 MI+BDe & 0.6715 & 0.6809 & 0.5632 \\ \hline
		 BIC+BDe & 0.6701 &	0.7092 & 0.5880 \\ \hline
		 MI+BIC+BDe & 0.6600 & 0.6581 & 0.5244	 \\ \hline
		 Inter-s Arcs & 0.5774 & 0.6013 & 0.0005 \\ \hline
		 Intra-s Arcs & 0.5853 & 0.6489 & 0.0501 \\ \hline
	\end{tabular}
	\label{results_dbn}
\end{table}

While our work generatively models all the variables across different years, extending this work to predict CAC-levels discriminatively remains an interesting direction. 
Considering more risk factors and more sophisticated algorithms which allow learning temporal influence jumping through multiple time slices are other directions. The end goal is the development of interventions for young adults that can reduce the risk of CVDs at their later ages.  

\section{Application of Exponential Family Models to Predicting Cardiovascular Events}
In this section, we propose to model a specific medical condition Coronary Artery Disease (CAD) which is the leading cause of death for people in the United States and killing more than 370,000 people annually~\cite{death}. A serious condition a lot of CAD patients suffer from is the narrowed or blocked artery caused by plaque. One of the common procedures to restore blood flow through the artery is {\em Angioplasty}. A considerable amount of patients treated with angioplasty often need repeat revascularization procedures due to the restenosis~\cite{Alfonso14}. 
Given its medical significance, there are some research investigating the predictive factors of restenosis~\cite{Gaudry16,Kastrati97,KUNTZ199315,Iida2014792}.
Kuntz et al.~\citeyear{KUNTZ199315} researched on the association between demographic or angiographic variables and continuous measures of restenosis after stenting or angioplasty with a continuous regression model. 
Iida et al.~\citeyear{Iida2014792} applied the Cox proportional hazards model to investigate the factors associated with restenosis after endovascular therapy, which includes demographic, medication and other disease characteristics of the patients. 

However, we do not only aim to investigate the association between the restenosis and predictive factors, but also to predict the count of restenosis. This allows us to better estimate a new patient's condition by analyzing his/her historical health record data. 
Since the number of angioplasties a patient needs is usually associated with the severity and complexity of his/her CAD condition, modeling and predicting the count of angioplasties can provide better understanding on the future CAD condition of patients and hence facilitate better treatment plans in advance. 
Another contribution of our work lies in the fact that there is no other significant research which focuses on the factors over a substantial period of time to discover their effect on the number of angioplasties.

Faithfully modeling such count data could potentially provide a better way to exploit the large scale of health records data in order to reveal the development patterns of certain disease or facilitate the discoveries of the significant influential factors on a specific medical condition. 
Most of the previous work that employed machine learning algorithms modeled such count variables as multinomial distributed~\cite{KollerFriedman09} or assumed the Gaussian assumption over these count variables. But they all have obvious limitations as introduced in Chapter~\ref{chap4}. Hence, Poisson models have been increasingly employed in the context of the count data~\cite{Ma2008, Bolker2009a, Allen13}.
However, little research has been performed on comparing these two different probability distribution assumptions in real-world medical domains. 

In this project, we target the number of {\em Angioplasty} as it is a significant indicator for the severity of patients' coronary disease conditions and aim to examine various probabilistic graphical models on predicting the count of {\em Angioplasty} by learning from the historical medical conditions of the CAD patients.

To model such count variables, we employed probabilistic graphical models with two different assumptions on the probability distributions of the target variable.
One is multinomial distribution assumption which assumes that each sample is independently extracted from an identical categorical distribution, and the numbers of samples falling into each category follow a {\em multinomial distribution}. 
Another assumption is the {\em Poisson distribution} which states that each sample is a instance of a count variable following Poisson distribution.

It must be mentioned that such assumptions on the data probability distributions cannot be examined by simply employing certain statistical analysis methods, such as data transformations, on the interested variable alone. This is because most of the time we also aim to know how other variables in the domain influence the target variable, and which assumption more faithfully fits the conditional probability distributions of the target variable given the variables on which it depends. 

Consequently, we propose a set of experiments to model the dependencies among variables and at the same time examine these probability distribution assumptions by employing probabilistic graphical models with two different probability distributions from the exponential family. 
Specifically, we consider the problem of predicting the number of Angioplasty procedures from temporal EHR data and consider different learning methods for these two different distributions. 

We make the following key contributions: first, we consider multiple base learners in the context of gradient boosted multinomial and Poisson models for the task of predicting the number of Angioplasty procedures; second, we consider different types of gradient updates for learning Poisson models; third, we consider different types of modeling for the predictive features when learning these models. Finally, we perform comprehensive analyses on the real EHR data and demonstrate the usefulness of exponential family distributions for this interesting task.

\subsection{Modeling EHRs}
\label{hrdnehr}

We use the electronic health record extracted from {\em Regenstrief} Institute. The data was extracted from  $5991$ patients for a total of $4928799$  health records spanning from $1973$ to $2015$.  The target variable $Angioplasty$ was assigned the value as the number of Angioplasty through the trajectory of the patient's medical record. Predictive factors of the target variable include medical measurements,  procedures, medications as well as certain health behaviors before the first Angioplasty. The value space of the variables is mix boolean-numeric domain. Boolean variables include Procedures for EKG, Beta-Blocker Medications, Lipid Lowering Medication, Tricyclic Anti-Depressant, Calcium Channel Blocker Medication, DM Medication, HTN Medication, Post acute coronary syndrome, Presence of atrial fib, History of alcoholism and Smoke status while numeric variables include diastolic blood pressure, systolic blood pressure, High-density lipoproteins (HDL), Low-density lipoprotein (LDL), A1C test, Triglyceride and body mess index. 

Figure~\ref{trj} shows a sample EHR trajectory of one patient with all records that are related to him/her from the EHR. Each node along the time-line represents a medical event happened at that time point, e.g. having a \textit{Procedures for EKG} at 3/16/1995, taking HTN Medication at 6/19/2001 or being tested for A1C at 10/31/2009. The red nodes indicate the occurrences of $Angioplasty$.
\begin{figure*}[htbp!]
	\begin{minipage}[b]{1.0\textwidth}
		\centering
		\includegraphics[scale=0.35]{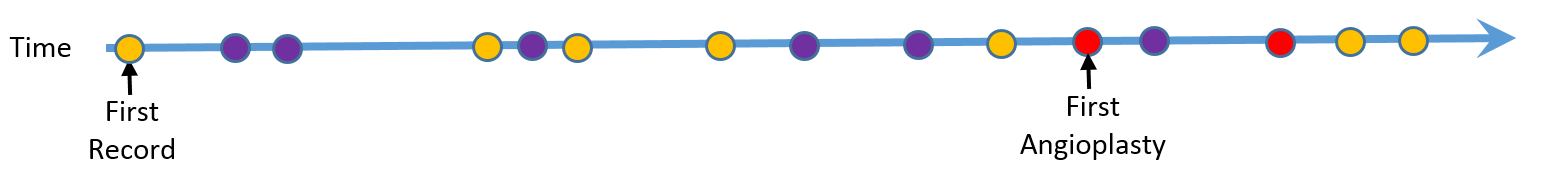}
		\caption{Sample Trajectory of One EHR. \\The red circles indicate the events of $Angioplasty$; purple circles indicate the events of medical procedures with binary values; orange circles indicate the events of medical measurements with continuous values. Events before the first $Angioplasty$ would be aggregated with certain aggregators according to their value types.  }
		\label{trj}
	\end{minipage}
\end{figure*}

The time point when the first $Angioplasty$ procedure was performed is used to determine the end time point of the segment over which the other sequential events are extracted and aggregated. For the boolean valued variables (usually the usage of certain medications or procedures), we employed two different aggregation functions: i). \textit{Indicator} which is $1$ if a certain event happens at least once within the segment (i.e. before the first $Angioplasty$) and $0$, otherwise; ii). \textit{Count} which is the number of occurrences for that event along the chosen segment of the patient's EHR trajectory. 
For the case of numeric variables, we considered three different ways for aggregating the multiple observations before the first Angioplasty - i). \textit{Min} which is the min value of all the recorded values through the segment for a certain medical measurement; ii). \textit{Max} which is the max value of all the measured values within the segment; iii). \textit{Mean} which is the mean value of those values. We also considered the final measurement before the first Angioplasty as the representative measure of the observations till that time point.
If a patient never had $Angioplasty$, the target variable has the value of $0$ and the values of other variables are calculated by aggregating their corresponding values over the entire trajectory instead of a segment.  

We aim to answer the following questions explicitly:
\begin{itemize}
	\item{Q1:} Which of the two exponential family distributions faithfully models the number of Angioplasty procedures?
	\item{Q2:} Does the choice of base learner or the update type impact the modeling ability of the learned distributions? 
	\item{Q3:} Does the choice of aggregation function matter when predicting the number of procedures?
\end{itemize}

\subsection{Preliminary Results}

As mentioned earlier, we considered both Poisson and Multinomial distributions with gradient boosting as the learning algorithm. Inside the gradient boosting, we considered two types of base learners for Multinomials - Trees and Neural networks~\cite{weka} denoted as  REPTree-B and NN-B respectively in our results. For the Poisson assumption,  we employ discriminative Poisson model whose structure is similar as a reversed Naive Bayes but the conditional probability distribution of the target variable follows Poisson distribution and two different update manners are considered - multiplicative boosting~\cite{pdn15} (denoted as DPM-MGB) and additive boosting(denoted as DPM-AGB). 

To evaluate these questions, we calculated the Mean Square Error (MSE) 
\begin{equation*}
MSE = \frac{\sum_{i=1}^N [1- p(y_i = \hat{y}_i; X_i)]^2}{N},
\end{equation*}
where $\hat{y}_i$ is the true value of the target variable of the $i^{th}$ patient and $N$ is the number of patients.
We also calculated the mean Log-likelihood (LL) for DPMs, since the other classifiers often predict the probability for $Angioplasty$ being high values as 0, hence the LL score cannot be reported for them. 
We performed 5-fold cross-validation using the data from $5991$ patients and aggregated their results.

\begin{table*}[htbp!]
	\centering
	\caption{MSE of Predictions from Boosting Different Classifiers}
	\begin{minipage}{\textwidth}
		\centering
		\small
		\begin{tabular}{|c|c|c|c|c|c|}
			\hline
			Numeric & Boolean & REPTree-B & NN-B  & DPM-AGB & DPM-MGB \\ \hline
			\multirow{2}{*}{Min} & Count &  0.071 & 0.705 & 0.072 & \textbf{0.066} \\ \cline{2-6}
			~ & Indicator &  0.064 & 0.819 & 0.067 &  \textbf{ 0.058 } \\ \hline
			\multirow{2}{*}{Max} & Count & 0.066 & 0.614 & 0.069 & \textbf{0.063}    \\ \cline{2-6}
			~ & Indicator &  0.063 & 0.994 & 0.066 &  \textbf{0.059} \\ \hline
			\multirow{2}{*}{Mean} & Count & 0.064 & 0.255 &  0.068 & \textbf{0.061} \\ \cline{2-6}
			~ & Indicator & 0.063  & 0.513 & 0.068 &  \textbf{0.061} \\ \hline
			Latest & Latest &  0.065 & 0.438 & 0.068 & \textbf{0.061}\\  \hline			
		\end{tabular}		
		\label{mse}
	\end{minipage}
\end{table*}

The first observation on Table~\ref{mse} is that the Poisson model with multiplicative updates tends to have lower MSE than their multinomial counterparts. Its superior performance is statistically significant  at the $3\%$ significance level except for the case of Indicator Mean when the \textit{p}-value equals 0.1. DPM-AGB has slightly higher MSE than multinomial boosting with regression trees (RepTree-B). However, the differences between their performance are NOT statistically significant when applying the aggregators: Count Min ($\textit{p}-value = 0.588$), Indicator Min ($\textit{p}-value = 0.088$) and Latest ($\textit{p}-value = 0.077$).  We hypothesize that this is due to the sensitiveness of the step-size parameter of the gradient steps in additive Poisson models. The multiplicative models appear more robust to the step size (an observation made by Hadiji et al.~\cite{pdn15}). So we answer Q1 cautiously in that Poisson models appear more suited for this task if they are robust.

With respect to the base learners employed, it appears that the neural networks do not perform as well as trees when boosted - possibly due to overfitting. The trees serve as better weak learners for the boosted model. For the gradient updates, the multiplicative models appear to exhibit better performance than additive models which are sensitive to their step-sizes. Thus Q2 can be answered in favor of trees and multiplicative updates.

Finally, for the comparison of different aggregators, we considered the model with the best performance (i.e. DPM-MGB). The differences between the performances of Indicator and Count are NOT statistically significant at $3\%$ significance level, which indicates that the Indicator function is as useful as Count for boolean variables (in that knowing whether an event happened appears to be as informative as the number of times the event happened). 
Same results are observed with numeric variables as well. Thus answering Q3, there does not seem to be too much difference which is statistically significant in the choice of aggregation functions .

It is intriguing to check if the aggregators can result in vastly different models. To this effect, we analyze the first tree learned by each Poisson model. The first tree is a good indicator of the most important set of features that are employed in the prediction task. These trees are presented in Figure~\ref{prt}. As can be seen clearly, all the settings use the same feature in the root, which is to check if the patient had post acute coronary syndrome. Alcohol consumption appears on the left subtree in all the aggregators and the Triglyceride appears on the right. It is also instructive to note that the regression values at the leaves are similar as well. This demonstrates that the aggregators do not necessarily influence the final learned model.

In summary, our results are encouraging in that extensive feature engineering is not necessary when employing a reasonable probability distribution for modeling Angioplasties. Our results show that modeling the count distributions using Poisson regression model allows us to faithfully capture the number of procedures (as the high log-likelihood numbers show in Table~\ref{ll}).
\begin{table}[htbp!]
	\caption{Log-Likelihood of Results from DPM-MGB  }
	\centering
	\begin{tabular}{|c|c|c|c|c|}
		\hline
		LL & Min & Max & Mean & Latest \\ \hline
		Count & -0.486 & -0.37 & -0.384 & \backslashbox{}{} \\ \hline
		Indicator & -0.412 & -0.399 & -0.414 & \backslashbox{}{}\\ \hline
		Latest & \backslashbox{}{} & \backslashbox{}{} & \backslashbox{}{} & -0.409\\ \hline
		
	\end{tabular}		
	\label{ll}
\end{table}

\begin{figure}[htbp!]
	\begin{minipage}[b]{0.5\textwidth}
		\centering
		\includegraphics[scale=0.19]{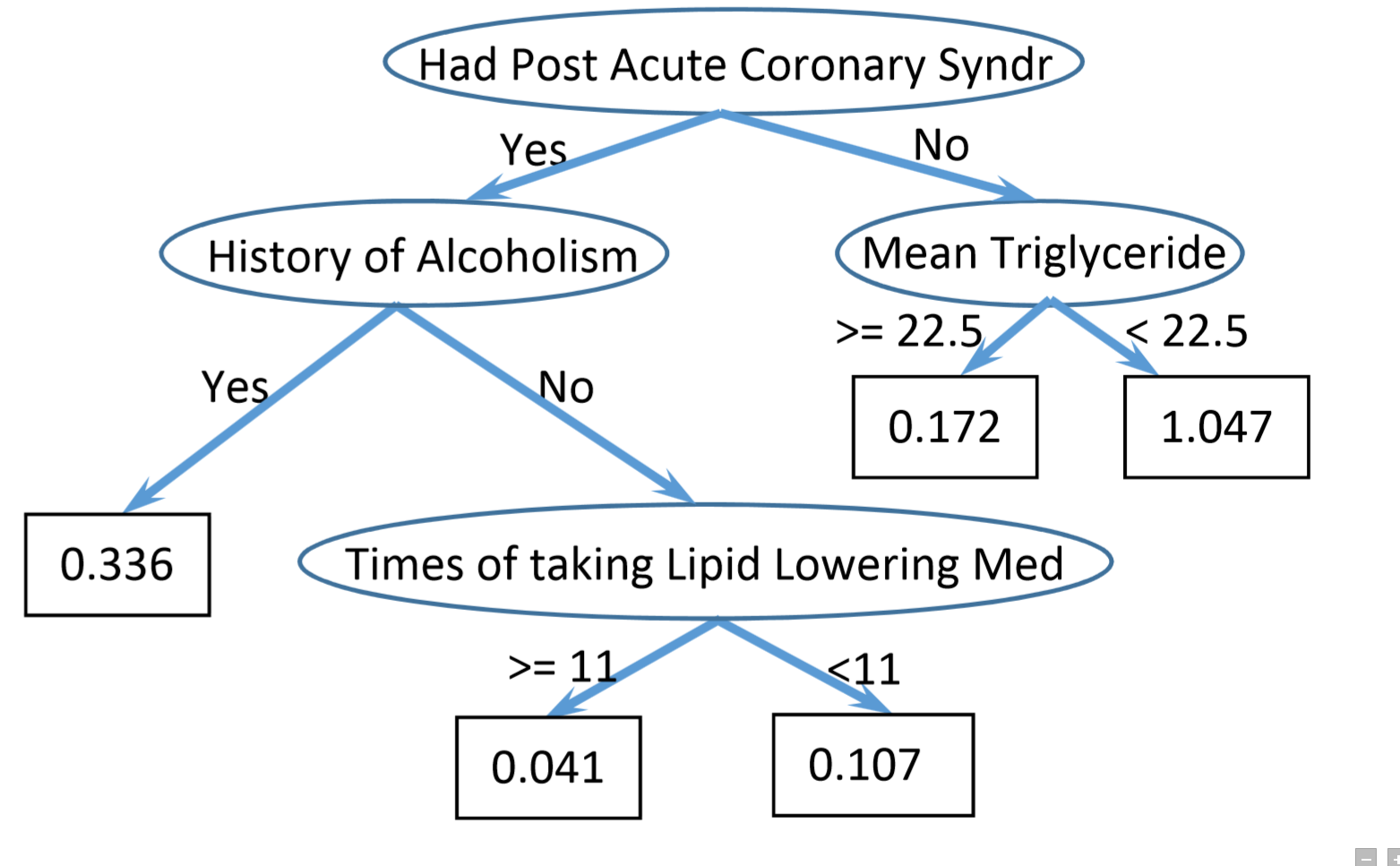}
		\centering (a) 
	\end{minipage}	
	\begin{minipage}[b]{0.5\textwidth}
		\centering
		\includegraphics[scale=0.19]{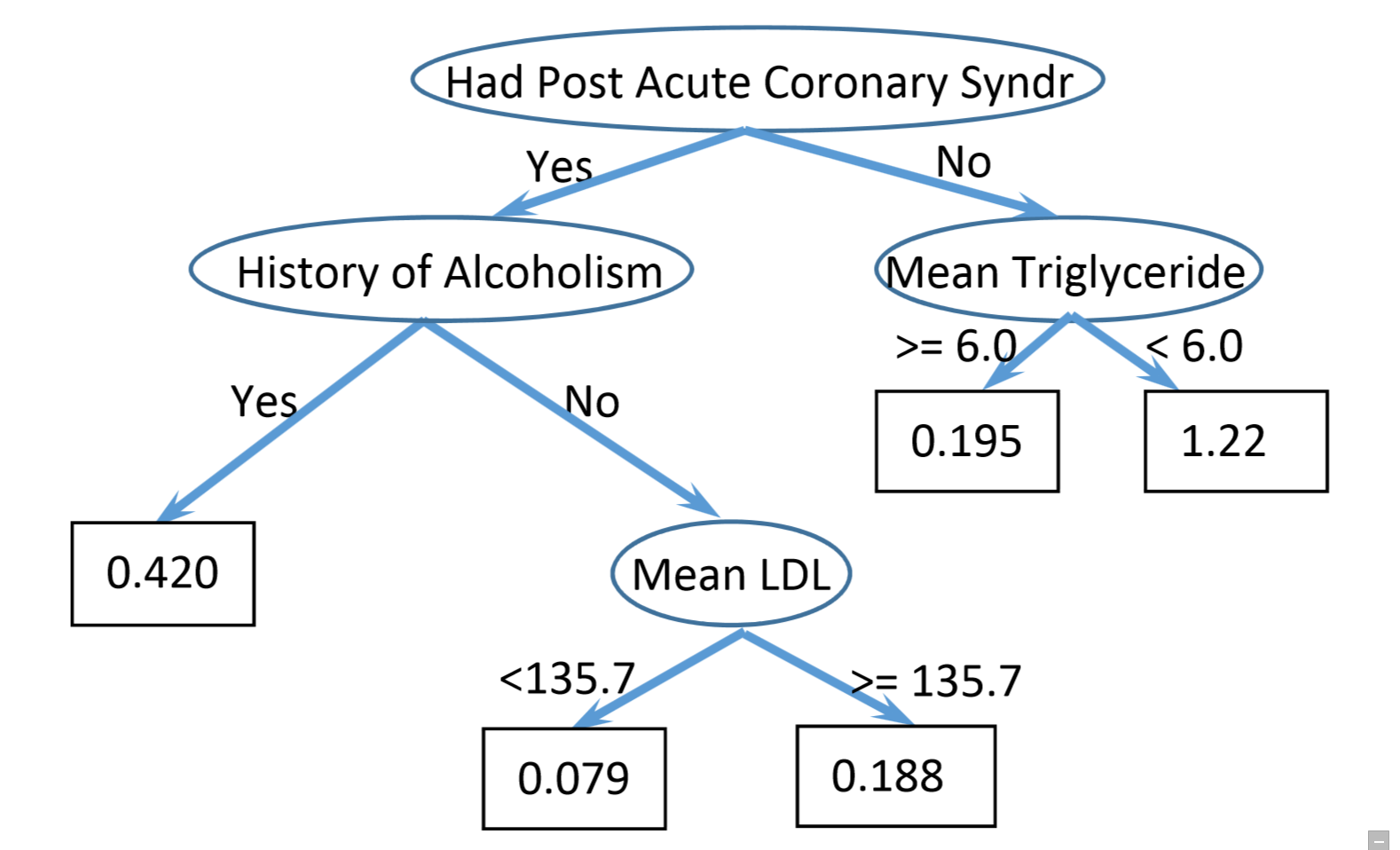}
		\centering (b) 
	\end{minipage}
	\begin{minipage}[b]{0.5\textwidth}
		\centering
		\includegraphics[scale=0.2]{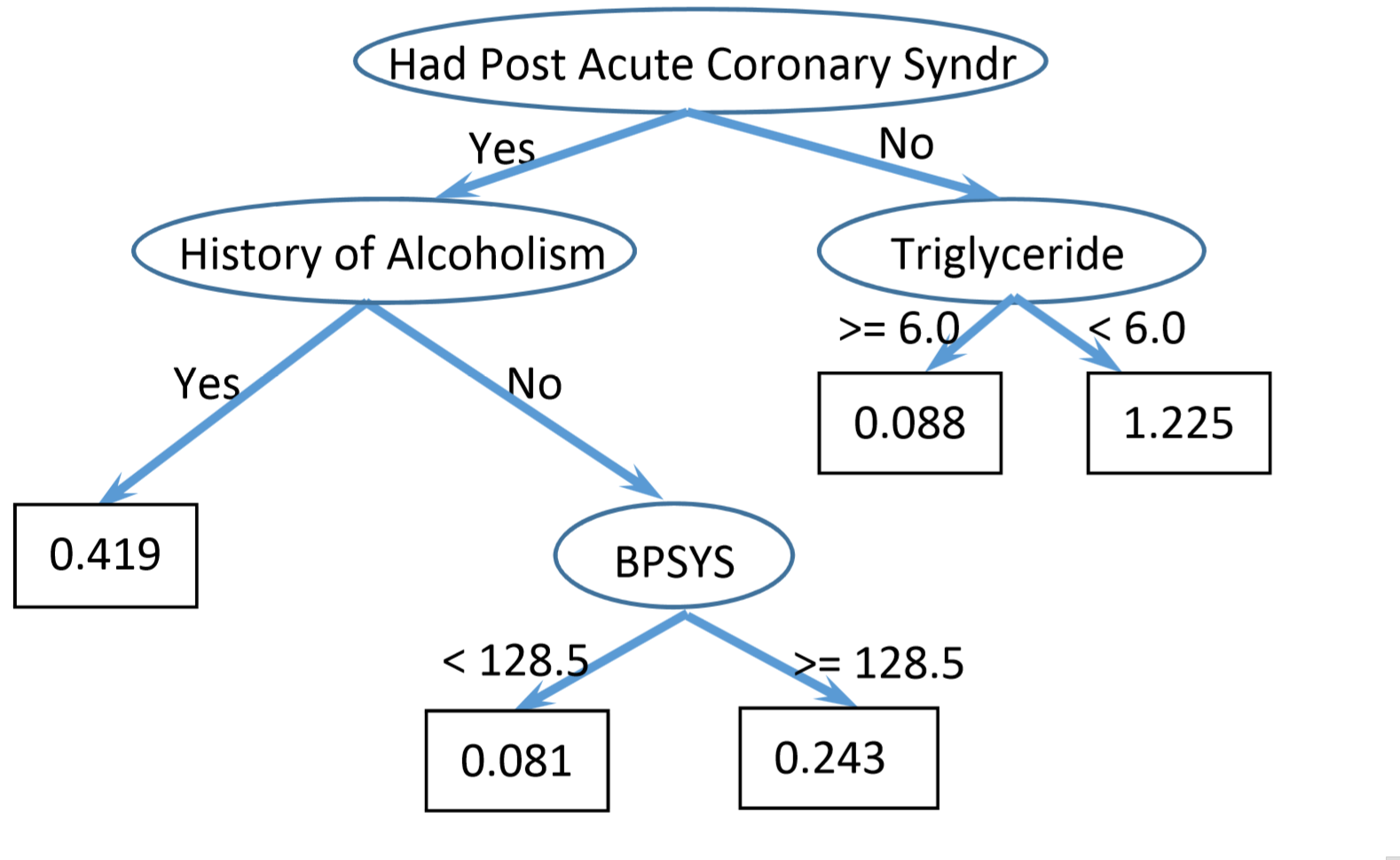}
		\centering (c) 
	\end{minipage}			
	\caption{ Sample Learned Poisson Regression Tree for each of the aggregator. (a) is the count mean, (b) is the indicator mean and the (c) is the latest.  It is worth noting that in all the three cases, the top levels appear to be very similar demonstrating that the learned mode is indeed robust for all settings.å}
	\label{prt}
\end{figure}

\chapter{Future Work}
\label{chap6}

\section{Predicting Cardiovascular Events with Hybrid Relational Dependency Networks}

The previous work~\cite{YangBIBM17} has shown the significance of probabilistic distribution assumptions being consistent with true distributions of the target variable. Actually, in order to prevent potential information loss, it is essential to keep the original value forms of all variables in the domain, i.e. target variables and predictor variables as well. The great potential of hybrid relational dependency networks proposed in Chapter~\ref{chap4} is ready to be explored by applying it to the real medical study data as well as electronic health record data.  

\subsection{Modeling CARDIA Data}

As introduced in Chapter~\ref{DBNmain}, CARDIA Study was designed to investigate how the lifestyle factors and physical conditions in early life influence the development of clinical CVD in later life. The previous work~\cite{YangKTCN15} employed Dynamic Bayesian networks to model the conditional dependences of the CAC level given the socioeconomic factors and health behavior within 5 years. In that work, the variables are either originally or discretized into discrete values and modeled as multinomial distributed. However, there are several open questions unanswered which could potentially be solved with the proposed hybrid statistical relational learning approach: 1). the correlations between the CVD development and risk factors in early life (say 20 years earlier) has not yet been investigated; 2). will the prediction performance be improved if the underlying assumptions on the variables' probabilistic distributions are closer to the original value types? 3). As mentioned in Chapter~\ref{DBNmain}, CARDIA is a longitudinal population study performed in 4 study centers located in Birmingham, AL; Chicago, IL; Minneapolis, MN; and Oakland, CA. If there are certain hidden factors which make the i.i.d.assumption invalid for individuals from the same location, it probably would provide a better way to inference the missing values for some participants. In other words, for predicting the CVD risk of participants with missing sub-tests, will the statistical relational learning approach which can make use of the attribute values of related objects (in this case, the participants from the same location) beat the standard propositional learning approach?     

Motivated by the questions presented above, the possible directions for future work include: 1). employ the hybrid statistical relational learning approach proposed in Chapter~\ref{hplm} and model the dependency of CAC level at year 25 on the risk factors of year 0, 2, 5, 7, 10, 15 and 20 respectively. By comparing the prediction performance of these 7 hybrid relational models, the significance of the risk factors at early age would be revealed; 2). On top of that, define predicate $SameCenter(X, Y)$, which can introduce related objects into the searching space. By comparing its performance with standard propositional models, we would be able to investigate the potential to inference the condition of certain diseases with information on locationally related individuals.  

\subsection{Modeling EHR Data}

The current work could be extended in two directions: 1). employ the proposed hybrid statistical relational model in Chapter~\ref{hplm} which allows continuous valued variables, so the medical measurements, such as blood pressure, cholesterol level can maintain the original form and information instead of being discretized into categorical values; 2). include the related objects into the training set, in this case, they are the patients in the control group of CVD patients.

The goal of the future work is to investigate whether learning with continuous values performs better than discretizing them into categorical values and if hybrid statistical relational learning with information on the related objects can facilitate a more accurate prediction model for this medical task. Besides, investigating the previous questions Q1 to Q3 in the context of hybrid statistical relational learning could also potentially reveal some interesting patterns for medical research on cardiovascular disease.  

\section{Predicting Cardiovascular Events with Relational Continuous-Time Bayesian Networks}

Faithfully modeling longitudinal medical data could potentially provide a better way to exploit the large scale health records data in order to reveal the development patterns of certain disease or facilitate the discoveries of the significant influential factors on a specific medical condition.
The prior work on applying dynamic Bayesian networks to CARDIA data has shown promising results of this research direction~\cite{YangKTCN15}. Given the effectiveness of proposed relational continuous-time Bayesian networks~\cite{YangKKN16} on employing the information from related objects, it would be interesting to see if such relational temporal modeling approach would result in better prediction  when applied to the real-world data.

\subsection{Modeling CARDIA Data}

As mentioned in the previous section~\ref{DBNmain}, CARDIA study is a longitudinal population study including a sequence of 8 evaluations at year 1985, 1987, 1990, 1992, 1995, 2000, 2005, 2010. Since dynamic Bayesian networks require a constant time span between adjacent time slices, the previous experiments omitted the records from year 1987 and year 1992 to obtain a constant sampling rate which is once in every five years. The relational continuous-time Bayesian networks can be applied to CARDIA data by: 1). converting the evaluation records from crosswise at each observation point to trajectories of variables where only transitions of the variables' states are represented as events along the trajectories; 2). considering the information from locationally related participants when training the temporal models. The goal of this research would be to investigate the potential of relational time-line analysis in predicting cardiovascular disease based on the historical physical conditions and health behavior patterns. 

\subsection{Modeling EHR Data}

Another potential application topic would be to predict the CVD event {\em Angioplasty} by training a relational continuous time model from the historical medical informations in EHRs. This proposed work differs from the previous work~\cite{NandiniBIBM17, YangBIBM17} in that: 1). the value of the target variable is not aggregated over the trajectories of patients' EHRs but a boolean value with a time stamp attached to it indicating when the event happened; 2). the values of the predictive variables are not aggregated to a single value but preserved their original value and time information. The goal of this proposed direction would be to investigate the possibility of predicting the CVD event by employing the informations on the historical health conditions of the target patient as well as his/her counterparts in the control group.  

\def\baselinestretch{1}
\chapter{Conclusions}
\ifpdf
    \graphicspath{{Conclusions/ConclusionsFigs/PNG/}{Conclusions/ConclusionsFigs/PDF/}{Conclusions/ConclusionsFigs/}}
\else
    \graphicspath{{Conclusions/ConclusionsFigs/EPS/}{Conclusions/ConclusionsFigs/}}
\fi

\def\baselinestretch{1.66}

Statistical relational learning combines first-order logic which can represent the rich relations existing among the objects in EHRs with probability theory which can capture the uncertainty in medical domains. Temporal modeling provides an effective way to represent and learn from sequence data which is a common format in health related research. These two techniques have been used to address different challenging tasks in medical domains and proved their promising effectiveness respectively~\cite{WeissNP12,Natarajan2013,WeissNPMP12,YangKTCN15}. However, there are still numerous challenging problems demanding advanced statistical relational learning approaches, such as learning in class-imbalanced relational domains, learning from  continuous-time structured data, learning in hybrid domains, etc. In this thesis, the advanced probabilistic logic models for time-line analysis, for cost-sensitive learning and for hybrid structured data mining are proposed. Their theoretical foundations as well as convergence properties have been presented and proved. The remarkable prediction improvement of the cost-sensitive RFGB and RCTBN are shown in various standard relational data sets. 

To evaluate the practical significance of the proposed approaches, two possible directions for the future work are proposed. A series of experiments are designed to employ real-world data from a cardiovascular study as well as EHR data to evaluate the performances of proposed models, i.e. relational continuous-time Bayesian networks and hybrid relational functional gradient boosting.  

This thesis has shown the great potential of advanced statistical relational learning approaches in resolving the challenges faced by exploiting the longitudinal clinical data which is usually high dimensional, rich in relations, noisy, class-imbalanced, irregular sampled and heterogeneous in values and formats.  




\backmatter 
\appendix
\chapter{Appendix A}
\section*{Proof of the Modification on Weigting AUC-ROC}
\label{appendix:a}
\newtheorem*{mydef}{Theorem}
\begin{mydef}
	Equally dividing the area in ROC space and assigning every region the weights from Equation~\ref{weight}, the weighted AUC-ROC would have the following characteristics:\\
	1. The range of the weighted AUC-ROC is [0, 1];\\
	2. When $\gamma$ is 0, the resulting weighted-AUC is equal to the conventional AUC;\\
	3. when $\gamma$ is 1, only the area at the top is considered.
\end{mydef}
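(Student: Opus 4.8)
The plan is to establish the three characteristics of the weighted AUC-ROC by working directly from the recursion in Equation~\ref{weight}. The crucial preliminary step is to solve the recursion explicitly: for $0 < x < N$ we have $W(x) = \gamma W(x-1) + (1-\gamma)$ with $W(0) = 1-\gamma$, which is a standard linear recurrence whose solution is $W(x) = 1 - \gamma^{x+1}$ for $0 \le x \le N-1$. I would then handle the top region separately: $W(N) = \frac{W(N-1)\gamma + (1-\gamma)}{1-\gamma} = \frac{1-\gamma^{N+1}}{1-\gamma}$. The rescaling by $\frac{1}{1-\gamma}$ in the top strip is precisely the ``correction'' the paper alludes to, and having the closed form in hand makes its role transparent.

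For characteristic 1 (the range is $[0,1]$), I would observe that the weighted AUC-ROC is a weighted average of the per-strip areas, each of which lies in $[0,1]$, and so it suffices to check that the weights, suitably normalized, sum to $1$. Here I must be careful about what normalization the construction uses: the total ``horizontal width'' available in each strip is $1$, so the maximal weighted area is $\sum_{x=0}^{N-1} W(x) \cdot \frac{1}{N} + W(N)\cdot\frac{1}{N}$ (or whatever the exact geometric bookkeeping in the cited construction is); I would compute $\sum_{x=0}^{N-1}(1-\gamma^{x+1}) = N - \gamma\frac{1-\gamma^N}{1-\gamma}$ and add $W(N) = \frac{1-\gamma^{N+1}}{1-\gamma}$, and verify the total equals $N$, which is exactly what is needed for the normalized measure to attain maximum $1$ and minimum $0$. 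This telescoping identity $\sum_{x=0}^{N-1}(1-\gamma^{x+1}) + \frac{1-\gamma^{N+1}}{1-\gamma} = N$ is the heart of why the $\frac{1}{1-\gamma}$ correction is the right one — with the uncorrected weight $\gamma$ at the bottom (as in the original paper) this identity fails, which is the content of the remark.

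For characteristic 2 ($\gamma = 0$ recovers the conventional AUC), I would substitute $\gamma = 0$ into the closed form: $W(x) = 1 - 0^{x+1} = 1$ for all $x$ in $0 \le x \le N-1$, and $W(N) = \frac{1-0}{1-0} = 1$. All strips receive equal weight, so the weighted AUC reduces to the ordinary (unweighted) area under the ROC curve. For characteristic 3 ($\gamma = 1$ keeps only the top), I would take the limit $\gamma \to 1^-$: for $x < N$, $W(x) = 1 - \gamma^{x+1} \to 0$, while $W(N) = \frac{1-\gamma^{N+1}}{1-\gamma} = 1 + \gamma + \cdots + \gamma^N \to N+1$, which diverges relative to the others, so after normalization all the mass concentrates on the top strip. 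Equivalently, the ratio $W(x)/W(N) \to 0$ for every $x < N$.

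The main obstacle I expect is not any single calculation but pinning down the precise geometric normalization used in the original weighted-AUC construction of Weng et al. — i.e., exactly how the $N+1$ strip weights combine with strip widths to form the reported score, and hence what ``sums to $N$'' (or to $1$) must be verified. Once that bookkeeping is fixed, characteristics 2 and 3 are essentially immediate from the closed form, and characteristic 1 follows from the telescoping identity above; the only mildly delicate point is treating the $\gamma \to 1$ and $\gamma = 0$ cases as genuine limits/substitutions in the rescaled top-strip weight, since that is where the $\frac{1}{1-\gamma}$ factor could appear singular.
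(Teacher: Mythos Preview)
Your approach is essentially the same as the paper's: both derive the closed form $W(x)=1-\gamma^{x+1}$ for $0\le x\le N-1$ and $W(N)=\frac{1-\gamma^{N+1}}{1-\gamma}$, verify the weight sum so the maximum weighted AUC equals $1$, and then read off the $\gamma=0$ and $\gamma=1$ cases directly. One small bookkeeping slip to fix: there are $N+1$ strips each of maximal area $\tfrac{1}{N+1}$, and the weight sum is $N+1$ (not $N$), so the normalization you flagged as uncertain resolves to dividing by $N+1$; your treatment of $\gamma\to 1$ as a limit is in fact cleaner than the paper's direct substitution.
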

\begin{proof}
	Expanding on the weight formula in eq. ~\ref{weight},\\
	$w_0=1-\gamma$, \\
	$w_1=(1-\gamma)\times \gamma +(1-\gamma)$, \\
	... \\
	$w_{n-1}=(1-\gamma)\times \gamma^{n-1} +(1-\gamma)\times \gamma^{n-2}+...+(1-\gamma)$\\
	As can be seen, $w_{n-1}$ is a geometric progression with first term $a_1=1-\gamma$ and common ratio $r=\gamma$, so 
	$w_{n-1}=\frac{(1-\gamma)\cdot (1-\gamma^n)}{1-\gamma}=1-\gamma^n$.
	Assume, there are $N+1$ partitions in the ROC space, since the area is divided equally, so the maximum area of each part is $1/(N+1)$, so 
	\begin{align*}
	& Weighted \ \ AUC-ROC=\sum_{i=0}^N area(i)\times W(i)  \nonumber \\
	& \leqslant \sum_{i=0}^N \frac{W(i)}{N+1}= \frac{\sum_{i=0}^N W(i)}{N+1} \nonumber \\ 
	& = \frac{(1-\gamma)+(1-\gamma^2)+...+(1-\gamma^n)+w_n}{N+1} \nonumber \\
	& = \frac{N-\frac{\gamma(1-\gamma^n)}{1-\gamma}+w_n}{N+1} \nonumber \\
	& = \frac{N-\frac{\gamma(1-\gamma^n)}{1-\gamma}+\frac{(1-\gamma^n)\gamma+(1-\gamma)}{1-\gamma}}{N+1} \\
	& = 1
	\end{align*}
	So, it is proved that the maximum value of the weighted AUC-ROC is 1. 
	
	It is easy to see that by substitute $\gamma=0$ in Equation~\ref{weight}, $w_0=w_1=...=w_n=1$, every partition has the weight 1 as it is in the conventional AUC; when $\gamma=1$, $w_i=1-\gamma^i=0 / / (for i=0,...,n-1)$ and $w_n=1$, only the top area has non-zero weight.\end{proof}



\bibliographystyle{plainnat}
\renewcommand{\bibname}{References} 
\bibliography{References/references} 
\end{document}